\pgfplotsset{compat=1.18}
\newtheorem{definition}{Definition}
\newtheorem{lemma}{Lemma} 
\newtheorem{theorem}{Theorem}  
\newtheorem{proposition}{Proposition}   
\newtheorem{corollary}{Corollary}
\begin{document}

\ecjHeader{x}{x}{xxx-xxx}{201X}{Drift Analysis with Fitness Levels}{J. He and Y. Zhou}
\title{\bf Drift Analysis with Fitness Levels for  Elitist  Evolutionary Algorithms}  

\author{\name{\bf Jun He} \hfill \addr{jun.he@ntu.ac.uk}\\ 
        \addr{Department of Computer Science, Nottingham Trent University, Clifton Campus, Nottingham NG11 8NS, UK}
\AND
       \name{\bf Yuren Zhou} \hfill  \addr{zhouyuren@mail.sysu.edu.cn}
       \\
        \addr{School of Data and Computer Science, Sun Yatsen University,  {Guangzhou}, {510006}, 
          {China}}
}

\maketitle

\begin{abstract} 
The fitness level method is a popular tool  for analyzing the hitting time of elitist evolutionary algorithms. Its idea is to divide the search space into multiple fitness levels and estimate lower and upper bounds on the hitting time using   transition probabilities between fitness levels. However, the lower bound generated by this method is often loose. An open question regarding   the fitness level method is what are the tightest lower and upper time bounds that can be constructed based on  transition probabilities between fitness levels. To answer this question, { we  combine  drift analysis with fitness levels and define the tightest bound problem as a constrained multi-objective optimization problem subject to fitness levels.}  The tightest metric bounds from fitness levels are constructed and proven for the first time.  Then linear bounds are derived from metric bounds and a framework is established that can be used to develop different fitness level methods for different types of linear bounds.  The framework is generic and promising, as it can be used to draw  tight time bounds  on both fitness landscapes without and with shortcuts. This is demonstrated in the example of the (1+1) EA maximizing the TwoMax1 function. 
\end{abstract}

\begin{keywords}

Evolutionary algorithm; algorithm analysis; computation time; fitness levels; drift analysis; Markov chain

\end{keywords}
\section{Introduction}
\subsection{Background}
The time complexity of evolutionary algorithms (EAs) is an  important  topic in the EA theory~\citep{oliveto2007time, yu2008new,doerr2017time,huang2019experimental}. The computation time of EAs  can be measured by either the number of  generations   to find an optimum for the first time, called hitting time \citep{he2001drift}, or the number of  fitness evaluations, called running time~\citep{he2017average}.   The analysis of running time is more complicated as it is related to the population size~\citep{he2002individual,chen2009new,he2017average}, and the population size often varies from generation to generation. Therefore, we will limit this paper's discussion to hitting time.  Several methods have been proposed for analyzing hitting time of EAs,  such as drift analysis~\citep{he2001drift}, Markov chains~\citep{he2002individual,he2003towards} and fitness level partition~\citep{wegener2001theoretical}.    Each method has its own advantages and disadvantages. Drift analysis is a powerful tool in which an appropriate distance is constructed as the bound on hitting time~\citep{he2001drift,oliveto2011simplified,doerr2012multiplicative}.  According to the theory of absorbing Markov chains, the  exact  hitting time of EAs can be calculated from the fundamental matrix of absorbing Markov chains~\citep{he2003towards}. But   the calculation of the fundamental matrix is too complex for most  EAs. Therefore, hitting time is estimated by replacing the original chain with a slower or faster chain~\citep{he2003towards,zhou2009comparative}.

The fitness level method~\citep{wegener2001theoretical,wegener2003methods} is a popular   tool used to estimate hitting time of elitist EAs~\citep{antipov2018runtime,corus2020hypermutations,rajabi2020self,quinzan2021evolutionary,aboutaib2022runtime,malalanirainy2022runtime,oliveto2022tight}. 
The basic concept of this method is to divide the search space into multiple  ranks  $(S_0, \cdots, S_K)$, called fitness levels, based on the fitness value from high to low, where the highest rank $S_0$ is the optimal set; then calculate transition probabilities between fitness levels, that is, $p(X_k, S_\ell)$ from $X_k \in S_k$ to $S_\ell$ (where $1\le \ell<k\le K$); 
finally,  estimate a  bound $d_k$ on the hitting time of the EA starting from a level $S_k$.  The method was combined with other techniques such as tail bounds~\citep{witt2014fitness} and stochastic domination~\citep{doerr2019analyzing}. The fitness level method is available for elitist EAs. Although the  level partition is also used to analyze non-elitist EAs~\citep{corus2017level,case2020self}, they should be considered as a different method.

In this paper, we express   time bounds   from fitness levels in linear forms as follows:
\begin{align} 
\label{equLinearLower} 
 \mbox{lower bound }&  d_k  = \sum^{k}_{\ell=1}\frac{c_{k,\ell}}{\max_{X_\ell \in S_\ell}  
p(X_\ell, S_0 \cup \cdots \cup S_{\ell-1})},   \\ 
\label{equLinearUpper} 
\mbox{upper bound } & d_k = \sum^{k}_{\ell=1}\frac{c_{k,\ell}}{\min_{X_\ell \in S_\ell}  
p(X_\ell, S_0 \cup \cdots \cup S_{\ell-1})}, 
\end{align} 
where  $c_{k,\ell}$ are coefficients and $p(X_\ell, S_0 \cup \cdots \cup S_{\ell-1})$ represents the transition probability from a state $X_\ell \in S_\ell$ to the union of levels $S_0 \cup \cdots \cup S_{\ell-1}$.

{How to calculate coefficients $c_{k,\ell}$ for tight bounds is the key topic in the fitness level method.}
\cite{wegener2003methods} assigned  $c_{k,k}=1$, other coefficients $c_{k,\ell}=0$ for the lower bound and $c_{k,\ell}=1$ for the upper bound where $k > \ell$. This assignment is good at obtaining a tight upper bound, but not good at obtaining a tight lower bound.  
Several efforts have been made to improve the lower bound since then. 
\cite{sudholt2012new} made an improvement using a constant coefficient $c_{k,\ell}= c$ (called viscosity) for $k > \ell$ and $c_{k,k}=1$, and gave   tight lower time bounds of the (1+1) EA on several unimodal functions such as LeadingOnes, OneMax, long $k$-paths. Recently, 
\cite{doerr2021lower} made another improvement using  coefficients  $c_{k,\ell}=c_{\ell}$  (called visit probability) and provided tight lower bounds of the  (1+1) EA on LeadingOnes,  OneMax, long $k$-paths jump function. 
{However, in this paper we show that  the lower bounds based on the above coefficients $c$ or $c_\ell$ are loose on  landscapes with shortcuts. A shortcut  means that an EA may skip  some intermediate fitness levels with a large probability.  Therefore, it is necessary to improve the lower bound  of EAs on fitness landscapes with shortcuts.}

\subsection{New research and main results in this paper}
The aim of this paper is to explore two research questions that have not been addressed before.
{
\begin{enumerate}
    \item What are the tightest lower and upper bounds based on  transition probabilities between fitness levels? 
    \item Is it possible to use fitness level methods to draw tight lower bounds on fitness landscapes with shortcuts?
\end{enumerate}
}

To answer the questions, drift analysis with fitness levels is developed for constructing lower and upper bounds on the  hitting time of elitist EAs. The fitness level method is viewed as  a combination of drift analysis and fitness levels. Given a fitness level partition $(S_0, \cdots, S_K)$, a distance $d_k$ between $S_k$ and $S_0$ is assigned to each fitness level $S_k$, where 
$d_k$ is constructed using transition probabilities between fitness levels. Since $d_k$ is related to distance,  it is called a metric bound.
Then  by drift analysis, it is proved that $d_k$ is  a lower or upper bound   on the hitting time   of the EA starting from $S_k$, and the best $d^*_k$ is the tightest metric bound.  
 The new contributions and  results are summarized in three parts.
\begin{enumerate}
\item First, we  construct metric bounds from fitness levels  and prove that the best metric bounds are the tightest. The metric lower  bound  from fitness levels is expressed recursively as
\begin{align}
\label{equMetricLower}
    d_k \le \min_{X_k \in S_k} \left\{\frac{1}{p(X_k, S_0 \cup \cdots \cup S_{k-1})} + \sum^{k-1}_{\ell=1} \frac{p(X_k,S_{\ell})}{p(X_\ell, S_0 \cup \cdots \cup S_{\ell-1})} d_{\ell}\right\}. 
\end{align}    Similarly, the  metric upper  bound  from fitness levels   is expressed recursively as
\begin{align}
\label{equMetricUpper}d_k \ge \max_{X_k \in S_k} \left\{\frac{1}{p(X_k, S_0 \cup \cdots \cup S_{k-1})} + \sum^{k-1}_{\ell=1} \frac{p(X_k,S_{\ell})}{p(X_\ell, S_0 \cup \cdots \cup S_{\ell-1})} d_{\ell}\right\}.
\end{align}

The tightest lower  or upper  bound is reached when Inequality   \eqref{equMetricLower} or \eqref{equMetricUpper}  becomes an equality.

\item  Secondly, we construct general linear bounds  from metric bounds (\ref{equMetricLower}) and (\ref{equMetricUpper}).  Coefficients in  the   linear lower  bound (\ref{equLinearLower}) satisfy $c_{k,k}=1$ and the following linear inequalities:
\begin{align} 
\label{equCoefLower}    
c_{k,\ell}  
        \le \min_{X_k \in S_k}    \frac{p(X_k,S_{\ell})+ \sum^{k-1}_{j=\ell+1} p(X_k,S_{j}) c_{j, \ell}}{p(X_k, S_0 \cup \cdots \cup S_{k-1})}, \qquad 0<\ell <k.
\end{align}   

Coefficients in the linear upper  bound (\ref{equLinearUpper}) satisfy $c_{k,k}=1$ and the following linear inequalities:
\begin{align} 
\label{equCoefUpper}   
c_{k,\ell}  
        \ge \max_{X_k \in S_k}    \frac{p(X_k,S_{\ell})+ \sum^{k-1}_{j=\ell+1} p(X_k,S_{j}) c_{j, \ell}}{p(X_k, S_0 \cup \cdots \cup S_{k-1})}, \qquad 0<\ell <k.
\end{align} 

Previous bounds \citep{wegener2003methods,sudholt2012new,doerr2021lower} can be regarded as special cases of $c_{k,\ell}= 0,1, c,  c_\ell$. For the sake of discussion, the family of linear bounds   \eqref{equCoefLower}  and \eqref{equCoefUpper}  are named \emph{Type-$c_{k,\ell}$ bounds}. Similarly, \emph{Type-$0,1$, $c$ and $c_\ell$ bounds} stand for linear bounds with coefficients $c_{k,\ell} =0,1$, $c$ and $c_\ell$, respectively.

\item Finally, we demonstrate the advantage of the Type-$c_{k, \ell}$ lower bound over Type-$c$ and $c_\ell$ lower bounds. For the (1+1) EA maximizing the TwoMax1 function, we prove that our Type-$c_{k, \ell}$ lower bound is $\Omega(n \ln n)$, but Type-$c$ and $c_\ell$ lower bounds are only $O(1)$.
\end{enumerate}  

The paper is organized as follows:  Section~\ref{secEAs} provides the foundation of theoretical analysis. Section~\ref{secReview} reviews  existing fitness level methods and explains the necessity of improving previous lower linear bounds.  Section~\ref{secDrift} proposes drift analysis with fitness levels, constructs new metric bounds and proves  they are the tightest. Section~\ref{secLinear} constructs general linear  bounds  and  presents different explicit expressions of coefficients. Section~\ref{secCase} shows the application of general linear bounds.   
Section~\ref{secConclusions} concludes the work.


\section{Preliminary}
\label{secEAs} 

\subsection{Elitist EAs and Markov chains}
A maximization problem is considered in the paper:  
$f_{\max}=\max f( {x})$
where $f(x)$ is defined on a finite set. In EAs,   an individual $x$ represents a solution. A population  consists of several  individuals, denoted by   $X$. The fitness of a population  $f(X)=\max\{ f(x); x \in X \}.$  
Let $S$ denote the set of all populations and $S_{\mathrm{opt}}$  the set of optimal populations $X_{\mathrm{opt}}$ such that $f(X_{\mathrm{opt}}) =f_{\max}$.  This paper studies elitist EAs that maximize $f(x)$. 
Let $X^{[t]}$ denote the $t$-th generation population. 

\begin{definition}{An EA is called \emph{elitist} if     $ f(X^{[t]}) \ge  f(X^{[t-1]})$.}
\end{definition} 
A simple elitist EA is the (1+1) EA  using bitwise mutation and elitist selection for maximizing a pseudo-Boolean function:  $f(x)$  where $x=(x_1, \cdots, x_n) \in \{0,1\}^n$. The (1+1) EA does not use a population, but keeps only an individual. 

\begin{algorithm}[ht]
\caption{{The (1+1) EA that maximizes a pseudo-Boolean function $f(x)$}}
\label{alg2}
\begin{algorithmic}[1] 
\STATE    initialize a  solution  $x$ and let $ {x}^{[0]}=x$;
\FOR{$t=1,2,\cdots$}
\STATE   flip  each bit of $x$ independently with  probability $\frac{1}{n}$ and generate a solution $y$;
\STATE  if $f(y)\ge f(x)$, then  let $ {x}^{[t]}= y$, otherwise $ {x}^{[t]}=x$.
\ENDFOR
\end{algorithmic}
\end{algorithm}

We assume that  EAs  are modeled by homogeneous Markov chains~\citep{he2003towards,he2016average}. The set $S$ is the state space of a  Markov chain and a population $X$ is a state.  The Markov chain property means that the next state $X^{[t+1]}$ depends only on the current state $X^{[t]}$, that is, $\Pr(X^{[t+1]}   \mid X^{[t]}, \dots, X^{[0]} )=\Pr(X^{[t+1]} \mid X^{[t]})$.  
The  homogeneous property means that the transition probability  from a state $X$ to another state $Y$  does not change over the generation $t$,   that is for any $t$, $\Pr(X^{[t+1]} =Y \mid X^{[t]} =X)=p(X,Y)$.

\subsection{Hitting time and drift analysis}

Hitting time is the first time when an EA finds  an optimal solution.   

\begin{definition}
Given an elitist EA for maximizing $f(x)$, assume that the initial population $X^{[0]}=X$. \emph{Hitting time} $\tau(X)$ is  the number of generations when an optimum is found for the first time, that is, $
    \tau(X) = \min \{ t\ge 0, X^{[t]} \in S_{\mathrm{opt}} \mid X^{[0]}=X\}.$  The \emph{mean hitting time} $m(X)$ is  the expected value of $\tau(X)$, that is $
    m(X) =\mathrm{E}[\tau(X)].$
Assume that   $X^{[0]}$ is chosen at random, the \emph{mean hitting time} is the expected value
$
    m(X^{[0]}) =\sum_{X \in S} m(X)\Pr(X^{[0]}=X).$
\end{definition}

Drift analysis was introduced by 
\cite{he2001drift} to the analysis of hitting time of EAs. It is based on the intuitive idea: time = distance/drift.    A non-negative function $d(X)$    measures the distance between  $X$ and
the optimal set. By default, let $d(X)=0$ if  $X \in S_{\mathrm{opt}}$. A distance function $d(X)$ is called a lower time bound if for all $X$, $d(X) \le m(X)$, while  $d(X)$ is called an upper time bound if for all $X$, $d(X) \ge m(X)$. 

There are several variants of drift analysis~~\citep{he2001drift,oliveto2011simplified,doerr2012multiplicative,doerr2013adaptive}. For a complete review of drift analysis, see \citep{kotzing2019first,lengler2020drift}. {In this paper, we establish drift analysis with fitness levels based on the Markov chain version of drift analysis~\citep{he2003towards}. For elitist EA  that cannot be modeled by a Markov chain,  it is still possible to establish drift analysis with fitness levels by the super-martingale version of   drift analysis~\citep{he2001drift}}. 
\begin{definition}
The \emph{drift} $\Delta d(X)$ is the  distance change  per generation  at the state $X$,  
\begin{align}
\label{equDrift}
      \Delta d(X)=  d(X) - \sum_{Y \in S} p(X,Y) d(Y).   
\end{align}
\end{definition}

\begin{lemma} \cite[Theorem 3]{he2003towards} 
\label{lemmaLowerBound1}
If for any $X\notin S_{\mathrm{opt}}$, the drift  $\Delta d(X) \le 1$, then the mean hitting time $m(X) \ge  d(X)$.
\end{lemma}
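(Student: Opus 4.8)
The plan is to run the drift inequality forward along the trajectory of the chain and compare the distance consumed with the time elapsed. Rewriting $\sum_{Y\in S}p(X,Y)d(Y)=\mathrm{E}[d(X^{[t+1]})\mid X^{[t]}=X]$, the hypothesis $\Delta d(X)\le 1$ for $X\notin S_{\mathrm{opt}}$ says exactly $d(X)\le 1+\mathrm{E}[d(X^{[t+1]})\mid X^{[t]}=X]$. This is the one‑step identity satisfied by $m$ itself, with $\le$ in place of $=$: indeed $m(X)=1+\sum_Y p(X,Y)m(Y)$ for $X\notin S_{\mathrm{opt}}$ while $m\equiv 0$ on $S_{\mathrm{opt}}$. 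So morally $d\le m$ follows by iterating the inequality, and I would make this precise through the stopped process.

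First I would fix a starting state $X$, set $\tau=\tau(X)$, and consider $Z^{[t]}=d(X^{[t\wedge\tau]})+(t\wedge\tau)$. On the event $\{t<\tau\}$, which is measurable with respect to $X^{[0]},\dots,X^{[t]}$, the current state is non‑optimal, so the drift hypothesis gives $\mathrm{E}[Z^{[t+1]}-Z^{[t]}\mid X^{[0]},\dots,X^{[t]}]=1-\Delta d(X^{[t]})\ge 0$; on $\{t\ge\tau\}$ the stopped process is frozen and the increment is $0$. Hence $(Z^{[t]})$ is a submartingale, so $\mathrm{E}[Z^{[t]}]\ge Z^{[0]}=d(X)$ for every $t$, i.e. $d(X)\le \mathrm{E}[t\wedge\tau]+\mathrm{E}[d(X^{[t\wedge\tau]})]$. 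Equivalently, one can obtain this last inequality by a bare induction on $t$, using only the drift hypothesis and the elementary recursion $\mathrm{E}[(t+1)\wedge\tau]=1+\sum_Y p(X,Y)\,\mathrm{E}[t\wedge\tau(Y)]$ for the truncated mean hitting time at non‑optimal $X$; that route avoids martingale language entirely.

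It remains to let $t\to\infty$ in $d(X)\le \mathrm{E}[t\wedge\tau]+\mathrm{E}[d(X^{[t\wedge\tau]})]$. If $m(X)=\mathrm{E}[\tau]=\infty$ the conclusion is trivial, since $d$ is bounded on the finite state space $S$. If $m(X)<\infty$ then $\tau<\infty$ almost surely, so $X^{[t\wedge\tau]}=X^{[\tau]}\in S_{\mathrm{opt}}$ for all large $t$ and $d(X^{[t\wedge\tau]})\to 0$ pointwise; boundedness of $d$ and bounded convergence give $\mathrm{E}[d(X^{[t\wedge\tau]})]\to 0$, while $\mathrm{E}[t\wedge\tau]\uparrow m(X)$ by monotone convergence. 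Passing to the limit yields $d(X)\le m(X)$.

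The only step that is more than bookkeeping is this limiting argument, namely showing that the residual term $\mathrm{E}[d(X^{[t\wedge\tau]})]$ vanishes; on a finite state space it is immediate from boundedness of $d$, and the case split on whether $m(X)$ is finite is what makes the argument rigorous without importing any extra integrability assumption. I would also note in passing that elitism, although part of the standing setting, is not actually used in this lemma: stopping the chain at $\tau$ already forces $d$ to remain $0$ afterwards, so Lemma~\ref{lemmaLowerBound1} in fact holds for the hitting time of any target set on which $d\equiv 0$.
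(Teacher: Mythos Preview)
Your argument is correct. The stopped process $Z^{[t]}=d(X^{[t\wedge\tau]})+(t\wedge\tau)$ is indeed a submartingale under the hypothesis $\Delta d\le 1$, and your case split on whether $m(X)$ is finite, together with boundedness of $d$ on the finite state space, cleanly handles the limit. The remark that elitism is not used is also accurate.

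However, note that the present paper does \emph{not} supply its own proof of Lemma~\ref{lemmaLowerBound1}: the lemma is quoted verbatim from \cite[Theorem~3]{he2003towards} and used as a black box. So there is no in-paper proof to compare against. For context, the original proof in \cite{he2003towards} proceeds along essentially the same lines as yours---iterating the one-step drift inequality and passing to the limit---so your write-up is consistent with the cited source as well.
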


\begin{lemma} \cite[Theorem 2]{he2003towards}  
\label{lemmaUpperBound1}
If for any $X\notin S_{\mathrm{opt}}$, the drift  $\Delta d(X) \ge 1$, then the mean hitting time $m(X) \le  d(X)$.
\end{lemma}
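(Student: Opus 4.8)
This is the classical additive drift theorem in its upper-bound form, so the plan is to prove it by a stopped-process (supermartingale) argument; at the end I sketch an alternative that stays within the Markov-chain recurrence of the cited reference. Fix $X^{[0]}=X\notin S_{\mathrm{opt}}$, let $\tau=\tau(X)$ be the hitting time and let $(\mathcal{F}_t)$ be the natural filtration of the chain $(X^{[t]})$. I would introduce the stopped process $Z_t=d(X^{[\min(t,\tau)]})+\min(t,\tau)$ and show it is a supermartingale. The key computation uses the definition of the drift: on the event $\{\tau>t\}$ the state $X^{[t]}$ is non-optimal, so by the Markov property $\mathrm{E}[d(X^{[t+1]})\mid\mathcal{F}_t]=\sum_{Y}p(X^{[t]},Y)d(Y)=d(X^{[t]})-\Delta d(X^{[t]})\le d(X^{[t]})-1$, whence $\mathrm{E}[Z_{t+1}\mid\mathcal{F}_t]\le d(X^{[t]})-1+(t+1)=Z_t$; on the event $\{\tau\le t\}$ one has $Z_{t+1}=d(X^{[\tau]})+\tau=\tau=Z_t$ because $d\equiv 0$ on $S_{\mathrm{opt}}$. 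Taking expectations yields $\mathrm{E}[Z_t]\le\mathrm{E}[Z_0]=d(X)$ for every $t$.

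From $d\ge 0$ we get $Z_t\ge\min(t,\tau)$, hence $\mathrm{E}[\min(t,\tau)]\le d(X)$ for all $t$; since $\min(t,\tau)\uparrow\tau$, monotone convergence gives $m(X)=\mathrm{E}[\tau]=\lim_{t\to\infty}\mathrm{E}[\min(t,\tau)]\le d(X)$, which is the assertion. I would stress that finiteness of $m(X)$ need not be assumed: the drift hypothesis $\Delta d\ge 1$ is precisely what forbids the chain from stalling at a non-optimal state (if $\tau>t$ had positive probability for all $t$, the bound $Z_t\ge t$ would force $t\le d(X)$ for every $t$, a contradiction), so $\tau<\infty$ almost surely and $m(X)\le d(X)<\infty$.

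As an alternative that stays in the Markov-chain language of \cite{he2003towards}: once $\tau<\infty$ a.s. is established, conditioning on the first step gives $m(X)=1+\sum_Y p(X,Y)m(Y)$ for non-optimal $X$, while the hypothesis reads $d(X)\ge 1+\sum_Y p(X,Y)d(Y)$. Subtracting, the gap $g(X):=d(X)-m(X)$ obeys $g(X)\ge\sum_Y p(X,Y)g(Y)$ for all $X$ (with equality on $S_{\mathrm{opt}}$, since an elitist EA never leaves the optimum). Iterating, $g(X)\ge\sum_Y p^{(n)}(X,Y)g(Y)$; as $g$ is bounded on the finite state space, $g=0$ on $S_{\mathrm{opt}}$, and the residual mass $\sum_{Y\notin S_{\mathrm{opt}}}p^{(n)}(X,Y)=\Pr(\tau>n)\to 0$, letting $n\to\infty$ yields $g(X)\ge 0$, i.e.\ $m(X)\le d(X)$.

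The individual steps are routine; the only points needing care are the boundary behaviour of the stopped process (why $Z_{t+1}=Z_t$ once the optimum is reached, which rests on the convention $d\equiv 0$ on $S_{\mathrm{opt}}$) and the passage to the limit that converts the uniform bound $\mathrm{E}[\min(t,\tau)]\le d(X)$ into $m(X)\le d(X)$. On the finite state space assumed throughout the paper I do not expect a genuine obstacle here, since all quantities involved are bounded and the monotone convergence theorem applies directly.
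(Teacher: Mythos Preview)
The paper does not prove this lemma at all; it simply quotes it from \cite{he2003towards} and uses it as a black box, so there is no ``paper's own proof'' to compare against. Your supermartingale argument is the standard modern proof of the additive drift upper bound and is correct: the stopped process $Z_t=d(X^{[\min(t,\tau)]})+\min(t,\tau)$ is indeed a supermartingale under the hypothesis $\Delta d\ge 1$, the lower bound $Z_t\ge\min(t,\tau)$ follows from $d\ge 0$, and monotone convergence closes the argument. Your alternative via the first-step recurrence for $m(\cdot)$ is also valid and is closer in spirit to the original argument in \cite{he2003towards}, which works directly with the recurrence $m(X)=1+\sum_Y p(X,Y)m(Y)$ on a finite state space.

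One small point: your parenthetical justification that $\tau<\infty$ a.s.\ is slightly off. The bound $Z_t\ge t$ holds only on the event $\{\tau>t\}$, so $\mathrm{E}[Z_t]\le d(X)$ does not by itself give $t\le d(X)$ whenever $\Pr(\tau>t)>0$. But you do not need this detour: from $\mathrm{E}[\min(t,\tau)]\le d(X)$ for all $t$, monotone convergence gives $\mathrm{E}[\tau]\le d(X)<\infty$ directly, which already forces $\tau<\infty$ almost surely. The main line of your argument is unaffected.
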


We use the asymptotic notation~\citep{knuth1976big} to describe the tightness of lower and upper bounds. The worst-case time complexity of an EA is measured by the maximum value of the mean hitting time $\max_{X \in S} m(X) $.  We say that a lower bound $d(X)$ is tight if  $\max_{X \in S} d(X)=\Omega(\max_{X \in S} m(X))$, and an upper bound $d(X)$  is tight if $\max_{X \in S} d(X)=O(\max_{X \in S} m(X))$. 
{ We also divide coefficients $c_{k,\ell}$ into two categories: large coefficients $c_{k,\ell}$ if $c_{k,\ell}=\Omega(1)$ and small coefficients $c_{k,\ell}$ if $c_{k,\ell}=o(1)$.  }

\subsection{Fitness level partition and transition probabilities between fitness levels}
\label{subFitnessLevels1}
The fitness level method depends on a fitness level partition.
\begin{definition} {In a \emph{fitness level partition},} the state space $S$ is divided into $K+ 1$ disjoint subsets (ranks) $(S_0, \cdots, S_K)$ according to the fitness from high to low     such that  (i) the highest rank $S_0=S_{\mathrm{opt}}$, (ii)  for all $X_k \in S_k$ and $X_{k+1} \in S_{k+1}$, the rank order holds: $f(X_k) > f(X_{k+1})$.  
Each rank  is called a \emph{fitness level}. 
\end{definition}  

The fitness level method is based on transition probabilities between fitness levels. We assume that the following transition probabilities are known in advance.  The notation $p(X_k,  S_{\ell})$ denotes the transition probability from  a state $X_k \in S_k$ to the level  $S_{\ell}$.   
\begin{equation} 
\label{equProbability-p}  
         p(X_k,  S_{\ell})= \Pr(X^{[t+1]} \in S_{\ell} \mid   X^{[t]} =X_k) . 
\end{equation} 
Its minimal and maximal values are denoted as follows: 
\begin{equation*}
\begin{split} 
 p_{{\scriptscriptstyle\min}}(X_{k},S_{[0,k-1]})= \min_{X_k \in S_k} p(X_k,S_{[0,k-1]}) \mbox{ and }   p_{{\scriptscriptstyle\max}}(X_{k},S_{[0,k-1]}) = \max_{X_k \in S_k} p(X_k,S_{[0,k-1]}) . 
\end{split}
\end{equation*}  

Other transition probabilities between levels are derived from $p(X_k,  S_{\ell})$.   
Let $[i,j]$ denote the integer set $\{i, i+1, \cdots, j-1,j\}$ and  $S_{[i,j]}$ denote the union of levels $S_i \cup S_{i+1} \cup \cdots\cup S_{j-1}\cup S_j$.  The transition probability from a state $X_k\in S_k$ to the union $S_{[i,j]}$ is denoted by $p(X_{k},S_{[i,j]})$, that is, $p(X_{k},S_{[i,j]})=\sum^j_{\ell=i}p(X_k,  S_{\ell})$.  

The notation $r(X_k, S_{\ell})$ denotes the conditional probability  
\begin{equation}  
\label{equProbability-r}
    r(X_k, S_{\ell}) = \frac{p(X_k,S_{\ell})}{p(X_k, S_{[0,k-1]})} .  
\end{equation}  
Its minimal and maximal values are denoted as follows: 
\begin{equation*}  
\begin{split}  
 &r_{{\scriptscriptstyle\min}}(X_k,  S_{\ell})= \min_{X_k \in S_k} p(X_k,  S_{\ell}) \mbox{ and }     r_{{\scriptscriptstyle\max}}(X_k,  S_{\ell})= \max_{X_k \in S_k} r(X_k,  S_{\ell}). 
\end{split}
\end{equation*}

Table~\ref{tab:notation} lists main symbols used in this paper.

\begin{table}[ht]
    \centering 
    \caption{Notation used in the paper.}
    \label{tab:notation} 
\begin{tabular}{|l|l|}
\hline 
$S_k$ &  a fitness level 
\\\hline
$S_{[i,j]}$ &  the union of fitness levels $S_i \cup S_{i+1} \cdots \cup S_{j-1}\cup S_j$ where $i<j$  
\\\hline
$X_k$ & a state in  $S_k$
\\\hline
$m(X_k)$ & the mean hitting time when the EA starts from   $X_k$ 
\\\hline
$p(X_k,S_{\ell})$  & the transition probability from  $X_k$ to  $S_{\ell}$
\\\hline
$p(X_k,S_{[i,j]})$  & the transition probability from  $X_k$ to  $S_{[i,j]}$
\\\hline
    $r(X_k, S_{\ell})$ & the conditional probability $  \frac{p(X_k,S_{\ell})}{p(X_k, S_{[0,k-1]})}$  
\\\hline
 $c_{k,\ell}, c_\ell, c $   & coefficients in linear bounds
\\\hline
\end{tabular}
\end{table}

\subsection{Shortcuts}
\label{secShortcuts}  Intuitively, the behavior of an elitist EA searching for the maximum value of a fitness function can be viewed as climbing on a fitness landscape. For most fitness landscapes, an EA can take different paths from lower to higher fitness levels, some of which are shorter than others. A shortcut implies that an intermediate fitness level is skipped. {In this paper, we provide a formal definition of shortcuts as follows. }  

\begin{definition}
   Given an elitist EA for maximizing a function $f(x)$ and a fitness level partition $(S_0, \cdots, S_K)$, there exists a \emph{shortcut}  on the fitness landscape  if for some $1\le \ell,k \le K$ and $X_k \in S_k$,    the conditional  probability  
   \begin{align}
    \label{equShortcut}
       \frac{p(X_k, S_{\ell})}{p(X_k, S_{[0,\ell]})}  =o(1).
    \end{align}   
\end{definition} 
According to \eqref{equShortcut}, the conditional   probability  of the EA starting from $X_k$ to visit $S_\ell$ is $o(1)$, so the conditional   probability of the EA starting from $X_k$ to visit $S_{[0,\ell-1]}$ is $1-o(1)$. Thus, the EA  skips the level $S_\ell$ with a large conditional probability $1-o(1)$.  

Fitness landscapes can be divided into two categories: with shortcuts and without shortcuts.
Let us demonstrate two examples.
The first example is  the (1+1) EA that maximizes the OneMax function: 
\[\mathrm{OM}(x)= |x|,\quad x =(x_1, \cdots, x_n)\in \{0,1\}^n,\] 
where $|x|=x_1+\cdots +x_n$. The state space can be divided into $n+1$ levels $(S_0, \cdots, S_n)$, where $S_k =\{x \in \{0,1\}^n; \mathrm{OM}(x) =n-k\}$. Figure~\ref{fig:shortcuts} shows that no shortcut exists on the fitness landscape of the (1+1) EA on OneMax.

\begin{figure}[ht]
    \centering
\includegraphics[width=0.495\textwidth]{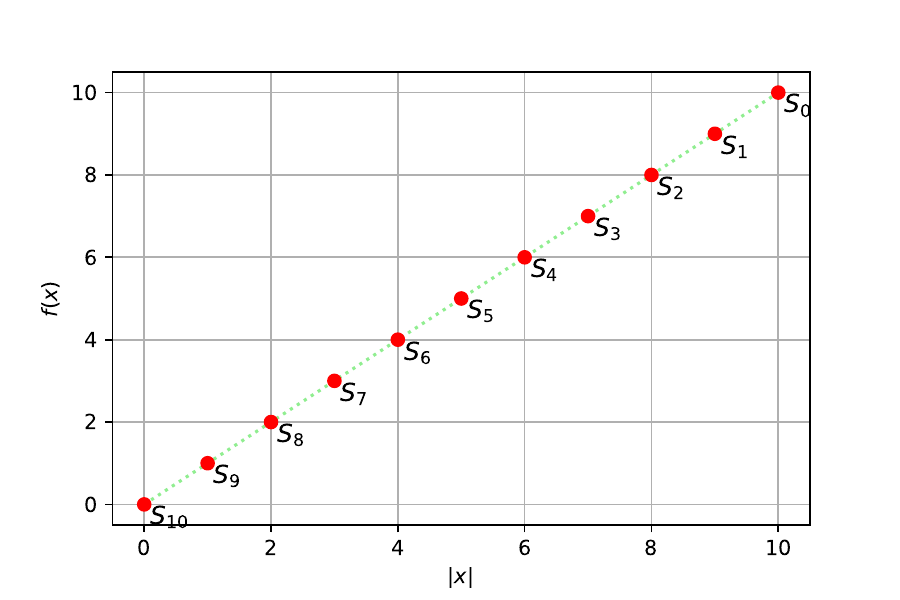} \includegraphics[width=0.495\textwidth]{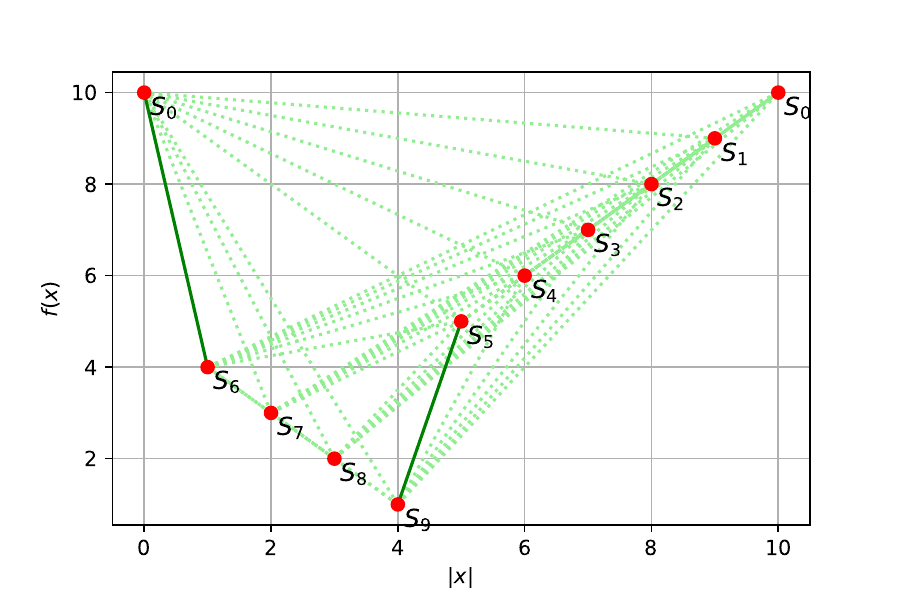}
    \caption{Left: the (1+1) EA on OneMax$(x)$  where $n=10$. Right: The (1+1) EA on TwoMax1$(x)$     where $n=10$. Dotted lines represent transitions. Solid  lines are two shortcuts: $S_6 \to S_0$ skipping $S_1, \cdots, S_5$, and $S_9\to S_5$ skipping $S_6$.  }
    \label{fig:shortcuts}
\end{figure}

The second example is  the (1+1) EA  maximizing the TwoMax1 function.     \begin{align*}
        \mathrm{TM1}(x) = \left\{\begin{array}{lll}
             n& \mbox{if } |x| =0  \mbox{ or } |x|=n, \\
            |x| & \mbox{if } |x| \ge  \frac{n}{2},\\
            \frac{n}{2}-|x|& \mbox{else},
        \end{array}\right. 
    \end{align*} 
where  $n$ is a large even integer. 
There are two maxima at $|x|=0 $ and $ n$. TwoMax1 is a variant of the TwoMax function defined in~\citep{he2015easiest}. The search space can be split into $n$ fitness levels $(S_0, \cdots, S_{n-1})$ from high to low: $S_k  = \{x \in \{0,1\}^n: \mathrm{TM1}(x) =n-k\} $.   Figure~\ref{fig:shortcuts}  displays two shortcuts on the fitness landscape of the (1+1) EA on TwoMax1. The two solid lines represent  shortcuts.  The first shortcut is $S_{n/2+1}\to S_0$ skipping  fitness levels $S_1, \cdots, S_{n/2}$. The second shortcut is  $S_{n-1}   \to  S_{n/2}$ skipping   $S_{n/2+1}$. We omit the rigorous proof of these shortcuts. 
  

\section{Review and discussion of existing fitness level methods}
\label{secReview} 
\subsection{Existing fitness level methods}
\label{subFitnessLevels2}  
Given a fitness level partition $(S_0, \cdots S_K)$, previous results are summarized as follows. 
\cite{wegener2003methods}  gave the simple Type-$0$  lower bound  and Type-$1$ upper time bound  
as shown in Propositions~\ref{proposition1} to~\ref{proposition2}.   

\begin{proposition} \citep[Lemma 1]{wegener2003methods}
\label{proposition1} For all $k\ge 1$ and $X_k \in S_k$, the hitting time 
$      m(X_k) \ge  \frac{1}{p_{{\scriptscriptstyle\max}}(X_{k},S_{[0,k-1]})}.$ 
\end{proposition}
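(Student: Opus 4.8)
The plan is to use drift analysis (Lemma~\ref{lemmaLowerBound1}) with the simplest possible distance function, namely a function that is constant on each non-optimal fitness level. Concretely, I would define $d(X)=\frac{1}{p_{{\scriptscriptstyle\max}}(X_{k},S_{[0,k-1]})}$ for every $X\in S_k$ with $k\ge 1$, and $d(X)=0$ for $X\in S_0=S_{\mathrm{opt}}$. The claim then follows immediately from Lemma~\ref{lemmaLowerBound1} provided we can verify that the drift $\Delta d(X)\le 1$ for every non-optimal state $X$.

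To verify the drift condition, fix $X=X_k\in S_k$ with $k\ge 1$. Since the EA is elitist, from $X_k$ it can only move to levels $S_\ell$ with $\ell\le k$ (the fitness never decreases, so the rank never increases). Split the drift according to whether the next state stays in $S_k$ or moves to a strictly higher level $S_{[0,k-1]}$. Writing $q=p(X_k,S_{[0,k-1]})$ for the probability of leaving level $k$ in one step, the contribution of the "stays in $S_k$" event to $\sum_Y p(X_k,Y)d(Y)$ is at least... well, the key point is the other direction: because $d$ is nonnegative and equals $d(X_k)$ on $S_k$ and is at most $d(X_k)$ on every higher level (a state $X_\ell\in S_\ell$ with $\ell<k$ has $d(X_\ell)=1/p_{{\scriptscriptstyle\max}}(X_\ell,S_{[0,\ell-1]})$, but we do not even need to compare these — we just bound $d(Y)\ge 0$), we get
\begin{align*}
\Delta d(X_k) &= d(X_k)-\sum_{Y\in S}p(X_k,Y)d(Y)
 \le d(X_k) - \sum_{Y\in S_k}p(X_k,Y)d(X_k)
 = d(X_k)\,(1-\Pr(X^{[t+1]}\in S_k\mid X^{[t]}=X_k)).
\end{align*}
Now $1-\Pr(X^{[t+1]}\in S_k\mid X^{[t]}=X_k)=p(X_k,S_{[0,k-1]})\le p_{{\scriptscriptstyle\max}}(X_k,S_{[0,k-1]})$ since the elitist EA cannot drop to a lower level. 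Hence $\Delta d(X_k)\le \frac{1}{p_{{\scriptscriptstyle\max}}(X_k,S_{[0,k-1]})}\cdot p_{{\scriptscriptstyle\max}}(X_k,S_{[0,k-1]}) = 1$.

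With the drift bound $\Delta d(X_k)\le 1$ established for all non-optimal $X_k$, Lemma~\ref{lemmaLowerBound1} yields $m(X_k)\ge d(X_k)=\frac{1}{p_{{\scriptscriptstyle\max}}(X_k,S_{[0,k-1]})}$, which is exactly the statement. The only mild subtlety — and the one place worth stating carefully — is the elitism argument showing that transitions out of $S_k$ go only to higher levels, so that the escape probability is bounded above by $p_{{\scriptscriptstyle\max}}(X_k,S_{[0,k-1]})$; everything else is a one-line drift computation. (Alternatively, one could avoid drift analysis altogether and argue directly that the number of steps spent in level $S_k$ before leaving it stochastically dominates a geometric random variable with parameter $p(X_k,S_{[0,k-1]})\le p_{{\scriptscriptstyle\max}}(X_k,S_{[0,k-1]})$, giving the same bound, but the drift-analysis route is cleaner and foreshadows the paper's main development.)
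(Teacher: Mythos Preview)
Your proof is correct and matches the paper's approach. The paper does not prove Proposition~\ref{proposition1} directly (it is cited from Wegener), but rederives it as Corollary~\ref{corollaryLowerBound-0} by setting $c_{k,\ell}=0$ in Theorem~\ref{theoremLowerBound2}; your argument is exactly the $c_{k,\ell}=0$ special case of that drift computation, carried out directly.
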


\begin{proposition}  
\label{proposition2}  \citep[Lemma 2]{wegener2003methods} For all $k\ge 1$ and $X_k \in S_k$,
the hitting time 
$m(X_k) \le \sum^{k}_{\ell=1} \frac{1}{p_{{\scriptscriptstyle\min}}(X_{\ell},S_{[0,\ell-1]})}.$
\end{proposition}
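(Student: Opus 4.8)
The plan is to prove Proposition~\ref{proposition2} via the drift-analysis upper bound, Lemma~\ref{lemmaUpperBound1}, by exhibiting a distance function that is constant on each fitness level. I would set $d(X)=d_k$ for $X\in S_k$, where $d_0=0$ and $d_k=\sum_{\ell=1}^{k}\frac{1}{p_{{\scriptscriptstyle\min}}(X_{\ell},S_{[0,\ell-1]})}$ for $k\ge 1$, which is exactly the candidate bound. If some $p_{{\scriptscriptstyle\min}}(X_{\ell},S_{[0,\ell-1]})=0$ with $\ell\le k$, the right-hand side is $+\infty$ and there is nothing to prove, so I would assume all these quantities are positive; then $(d_k)_{k\ge 0}$ is strictly increasing and $d_k-d_{k-1}=\frac{1}{p_{{\scriptscriptstyle\min}}(X_{k},S_{[0,k-1]})}$.

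Next I would compute the drift at an arbitrary $X_k\in S_k$ with $k\ge 1$. Since the EA is elitist, $f(X^{[t+1]})\ge f(X^{[t]})$, so from $X_k$ the chain cannot reach a strictly lower fitness level; hence $p(X_k,S_j)=0$ for $j>k$ and $\sum_{j=0}^{k}p(X_k,S_j)=1$, which gives $p(X_k,S_k)=1-p(X_k,S_{[0,k-1]})$. Because $d$ is constant on levels, $\Delta d(X_k)=d_k-\sum_{j=0}^{k}p(X_k,S_j)\,d_j$, and substituting $p(X_k,S_k)=1-p(X_k,S_{[0,k-1]})$ yields
\begin{align*}
\Delta d(X_k) = p(X_k,S_{[0,k-1]})\,d_k - \sum_{j=0}^{k-1}p(X_k,S_j)\,d_j .
\end{align*}

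The one estimate that needs care is bounding the remaining sum by monotonicity of $(d_j)$: since $0\le d_j\le d_{k-1}$ for $0\le j\le k-1$, one gets $\sum_{j=0}^{k-1}p(X_k,S_j)\,d_j \le d_{k-1}\,p(X_k,S_{[0,k-1]})$, and therefore
\begin{align*}
\Delta d(X_k) \ge (d_k-d_{k-1})\,p(X_k,S_{[0,k-1]}) = \frac{p(X_k,S_{[0,k-1]})}{p_{{\scriptscriptstyle\min}}(X_{k},S_{[0,k-1]})} \ge 1,
\end{align*}
where the last step uses $p_{{\scriptscriptstyle\min}}(X_{k},S_{[0,k-1]})=\min_{X_k\in S_k}p(X_k,S_{[0,k-1]})\le p(X_k,S_{[0,k-1]})$. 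Applying Lemma~\ref{lemmaUpperBound1} then gives $m(X_k)\le d(X_k)=d_k$; the bound does not depend on the particular $X_k\in S_k$, so it holds uniformly over the level.

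I expect no serious obstacle here: the argument is essentially routine once the level-constant distance is chosen. The only delicate points are (i) the monotonicity step above, which relies on $d$ being non-decreasing in $k$ (true since each added term is positive) and on the elitism bookkeeping that makes $p(X_k,S_0),\dots,p(X_k,S_k)$ a genuine probability distribution, and (ii) disposing of the degenerate case where some $p_{{\scriptscriptstyle\min}}$ vanishes, handled up front as a trivial $+\infty$ bound. An alternative, equivalent route would be a direct induction on $k$ using the one-step identity for $m(X_k)$, but going through Lemma~\ref{lemmaUpperBound1} keeps the proof within the drift-with-fitness-levels framework the paper is building.
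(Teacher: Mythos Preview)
Your proof is correct. The paper itself does not prove Proposition~\ref{proposition2} directly (it is cited from Wegener), but it re-derives the same bound as Corollary~\ref{corollaryUpperBound-1} by plugging $c_{k,\ell}=1$ into the general linear-upper-bound Theorem~\ref{theoremUpperBound2}; your argument is essentially the same drift computation with the same level-constant distance, just carried out directly via the monotonicity estimate $d_j\le d_{k-1}$ rather than routed through the coefficient condition~\eqref{equCoefficientUpper}.
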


\cite{sudholt2012new} improved the lower bound using a constant coefficient $c$ (called  viscosity). 
\begin{proposition} \citep[Theorem 3]{sudholt2012new}
\label{proposition3} For any $0\le \ell <k \le K$, 
let    $p(X_k, S_{\ell}) \le p_{{\scriptscriptstyle\max}}(X_{k},S_{[0,k-1]}) \, r_{k,\ell}$ and $\sum^{k-1}_{\ell=0} r_{k,\ell}=1$.  Assume that there is some $0\le c\le 1$ such that for any $1\le l < k\le K$, it holds
${r_{k,\ell}} \ge c {\sum^{\ell}_{j=0} r_{k,j}}$. Then the mean hitting time     \begin{align*}
       m(X^{[0]}) \ge \sum^K_{k=1} \Pr(X^{[0]} \in S_k)  \left(\frac{1}{p_{{\scriptscriptstyle\max}}(X_{k},S_{[0,k-1]})} +\sum^{k-1}_{\ell= 1} \frac{c}{p_{{\scriptscriptstyle\max}}(X_{\ell},S_{[0,\ell-1]})}\right). 
    \end{align*} 
\end{proposition}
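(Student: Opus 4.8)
The plan is to prove this directly by drift analysis, reading the parenthesised expression in the claim as a level-constant distance and invoking Lemma~\ref{lemmaLowerBound1}. Define a non-negative function $d$ on $S$ by $d(X)=d_k$ for $X\in S_k$, where $d_0=0$ and, for $k\ge 1$,
\[
 d_k=\frac{1}{p_{{\scriptscriptstyle\max}}(X_{k},S_{[0,k-1]})}+\sum_{\ell=1}^{k-1}\frac{c}{p_{{\scriptscriptstyle\max}}(X_{\ell},S_{[0,\ell-1]})}.
\]
Since $c\ge 0$ this is non-negative and vanishes on $S_0=S_{\mathrm{opt}}$. Once we show $\Delta d(X_k)\le 1$ for every $k\ge 1$ and every $X_k\in S_k$, Lemma~\ref{lemmaLowerBound1} gives $m(X_k)\ge d_k$ for all such states, and then averaging over the initial population, $m(X^{[0]})=\sum_{X}m(X)\Pr(X^{[0]}=X)\ge\sum_{k=1}^{K}\Pr(X^{[0]}\in S_k)\,d_k$, which is exactly the assertion. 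So the whole task reduces to the pointwise drift bound.

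\noindent To bound the drift I would first use elitism: from $X_k\in S_k$ the chain moves only within $S_{[0,k]}$, so $\sum_{\ell=0}^{k}p(X_k,S_\ell)=1$ and, using $d_0=0$,
\[
\Delta d(X_k)=d_k-\sum_{\ell=0}^{k}p(X_k,S_\ell)d_\ell
           =p(X_k,S_{[0,k-1]})\,d_k-\sum_{\ell=1}^{k-1}p(X_k,S_\ell)\,d_\ell .
\]
Substituting the definitions of $d_k$ and $d_\ell$ and regrouping by the factor $1/p_{{\scriptscriptstyle\max}}(X_{i},S_{[0,i-1]})$ for $1\le i\le k-1$, the double sums collapse via $p(X_k,S_{[0,k-1]})-\sum_{\ell=i+1}^{k-1}p(X_k,S_\ell)=p(X_k,S_{[0,i]})$, yielding
\[
\Delta d(X_k)=\frac{p(X_k,S_{[0,k-1]})}{p_{{\scriptscriptstyle\max}}(X_{k},S_{[0,k-1]})}+\sum_{i=1}^{k-1}\frac{c\,p(X_k,S_{[0,i]})-p(X_k,S_i)}{p_{{\scriptscriptstyle\max}}(X_{i},S_{[0,i-1]})} .
\]

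\noindent It then remains to bound every piece: the leading quotient is $\le 1$ by the very definition of $p_{{\scriptscriptstyle\max}}$, and each summand is $\le 0$ provided $c\,p(X_k,S_{[0,i]})\le p(X_k,S_i)$ for $1\le i<k$ and all $X_k\in S_k$. This is precisely what the hypotheses deliver: combining $p(X_k,S_\ell)\le p_{{\scriptscriptstyle\max}}(X_{k},S_{[0,k-1]})\,r_{k,\ell}$, $\sum_{\ell=0}^{k-1}r_{k,\ell}=1$ and the viscosity inequality $r_{k,\ell}\ge c\sum_{j=0}^{\ell}r_{k,j}$ gives the state-wise bound $p(X_k,S_i)\ge c\,p(X_k,S_{[0,i]})$. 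Hence $\Delta d(X_k)\le 1$, and the proof closes via Lemma~\ref{lemmaLowerBound1} and the averaging step above.

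\noindent The main obstacle is the bookkeeping in the drift computation: one must carry out the double-sum regrouping carefully so that the viscosity condition emerges in exactly the shape $c\,p(X_k,S_{[0,i]})-p(X_k,S_i)$. A secondary subtlety is that the $r_{k,\ell}$ are level-indexed (independent of the particular $X_k\in S_k$) while the drift must be controlled state by state, so one should check that the hypotheses really yield $p(X_k,S_i)\ge c\,p(X_k,S_{[0,i]})$ for each individual $X_k\in S_k$ and not merely in some averaged sense. One may also note that this is the Type-$c$ instance $c_{k,k}=1$, $c_{k,\ell}=c$ of the general linear lower bound, and the computation above is nothing but the verification that these coefficients satisfy inequality~\eqref{equCoefLower}.
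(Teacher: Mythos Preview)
Your drift computation is correct and matches the paper's framework; note that the paper itself does not prove Proposition~\ref{proposition3} (it is quoted from \cite{sudholt2012new}), and its own Type-$c$ result is Corollary~\ref{corollaryLowerBound-c}, proved via Theorem~\ref{theoremLowerBound2} under the \emph{state-wise} hypothesis $c\le p(X_k,S_\ell)/p(X_k,S_{[0,\ell]})$ for every $X_k\in S_k$. The gap in your argument is exactly at the point you yourself flag as the ``secondary subtlety'': from the stated hypotheses you \emph{cannot} derive $p(X_k,S_i)\ge c\,p(X_k,S_{[0,i]})$ for every $X_k\in S_k$. The assumptions give only the upper bounds $p(X_k,S_\ell)\le p_{\max}\,r_{k,\ell}$; combining with viscosity yields $c\,p(X_k,S_{[0,i]})\le p_{\max}\,r_{k,i}$, but you have no lower bound on $p(X_k,S_i)$ to close the chain. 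The deduction does work for the particular $X_k^*$ attaining $p_{\max}$ (since then $\sum_\ell p(X_k^*,S_\ell)=p_{\max}=\sum_\ell p_{\max}r_{k,\ell}$ forces equality termwise), but not for an arbitrary $X_k$.

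A two-level example shows the step genuinely fails. Take $S_1=\{C\}$ with $p(C,S_0)=\epsilon$ and $S_2=\{A,B\}$ with $p(A,S_0)=0.1$, $p(A,S_1)=0.9$, $p(B,S_0)=0.1$, $p(B,S_1)=0$. Then $p_{\max}(X_2,S_{[0,1]})=1$ and the only admissible choice is $r_{2,0}=0.1$, $r_{2,1}=0.9$, so the viscosity condition permits $c=0.9$. Your needed inequality $p(B,S_1)\ge c\,p(B,S_{[0,1]})$ reads $0\ge 0.09$, which is false, and indeed $\Delta d(B)=0.1+0.09/\epsilon$ is unbounded; moreover $m(B)=10$ while $d_2=1+0.9/\epsilon$, so the claimed lower bound itself fails for $X^{[0]}=B$. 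This shows that Proposition~\ref{proposition3} as transcribed here is stronger than its hypotheses support; the paper's Corollary~\ref{corollaryLowerBound-c} sidesteps the issue by assuming the state-wise inequality directly, which is what your drift argument actually needs.
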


An issue with Proposition~\ref{proposition3} is that the above Type-$c$ lower bound is loose on fitness landscapes with shortcuts. { This claim is demonstrated with the example in Section ~\ref{secTwoMax1-c}.}

{\cite{sudholt2012new} also gave an upper bound similar to Proposition~\ref{proposition3} but with an extra  condition    $(1 - c)p_{{\scriptscriptstyle\min}}(X_{\ell+1},S_{[0,\ell]}) \le  p_{{\scriptscriptstyle\min}}(X_{\ell},S_{[0,\ell-1]})$.}

\begin{proposition}\citep[Theorem 4]{sudholt2012new}
\label{proposition4} For any $0\le \ell <k \le K$, 
let  $p(X_k, S_{\ell}) \ge p_{{\scriptscriptstyle\min}}(X_{k},S_{[0,k-1]}) \, r_{k,\ell}$ and  $\sum^{k-1}_{\ell=0} r_{k,\ell}=1$. Assume that there is some $0\le c\le 1$ such that for all $1\le l < k \le K$, it holds $
        r_{k,\ell} \le c \sum^{\ell}_{j=0} r_{k,j}$. 
        Further, assume that  for all $1 \le l \le K - 2$, it holds $(1 - c)p_{{\scriptscriptstyle\min}}(X_{\ell+1},S_{[0,\ell]}) \le  p_{{\scriptscriptstyle\min}}(X_{\ell},S_{[0,\ell-1]})$. Then the mean hitting time  
    \begin{align*}
       m(X^{[0]})  \le  \sum^K_{k=1} \Pr(x^{[0]} \in S_k) \left( \frac{1}{p_{{\scriptscriptstyle\min}}(X_{k},S_{[0,k-1]})} +\sum^{k-1}_{\ell= 1} \frac{c}{p_{{\scriptscriptstyle\min}}(X_{\ell},S_{[0,\ell-1]})} \right).
    \end{align*}
\end{proposition}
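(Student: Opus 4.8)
The plan is to read Proposition~\ref{proposition4} as a drift statement and prove it with Lemma~\ref{lemmaUpperBound1}. Put $a_0=0$ and $a_\ell=1/p_{{\scriptscriptstyle\min}}(X_{\ell},S_{[0,\ell-1]})$ for $1\le\ell\le K$, and define the level--constant distance $d(X)=d_k$ for $X\in S_k$, with $d_0=0$ and $d_k=a_k+c\sum_{\ell=1}^{k-1}a_\ell$; this is exactly the summand appearing inside the claimed bound. Since $d$ vanishes on $S_{\mathrm{opt}}=S_0$, it suffices to show the drift $\Delta d(X_k)\ge 1$ for every $k\ge1$ and every $X_k\in S_k$: Lemma~\ref{lemmaUpperBound1} then yields $m(X_k)\le d_k$ for all $X_k$, and averaging over $\Pr(X^{[0]}\in S_k)$ gives the stated bound on $m(X^{[0]})$.

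The next step is to evaluate the drift. Because the EA is elitist, from $X_k\in S_k$ the next generation belongs to $S_{[0,k]}$, so $\sum_{\ell=0}^{k}p(X_k,S_\ell)=1$ and therefore $\Delta d(X_k)=\sum_{\ell=0}^{k-1}p(X_k,S_\ell)\bigl(d_k-d_\ell\bigr)$. Substituting the one-line identity $d_k-d_\ell=a_k-(1-c)a_\ell+c\sum_{j=\ell+1}^{k-1}a_j$ (valid for $0\le\ell\le k-1$ with $a_0=0$), expanding, swapping the order of the resulting double sum, and then doing a summation by parts in which $c\,p(X_k,S_{[0,j-1]})-(1-c)p(X_k,S_j)$ is recognised as $p(X_k,S_{[0,j-1]})-(1-c)p(X_k,S_{[0,j]})$, the expression should collapse to
\[
\Delta d(X_k)=\sum_{j=0}^{k-1}\bigl(a_{j+1}-(1-c)\,a_j\bigr)\,p(X_k,S_{[0,j]}).
\]

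Now I would bound each summand from below. The extra hypothesis $(1-c)p_{{\scriptscriptstyle\min}}(X_{\ell+1},S_{[0,\ell]})\le p_{{\scriptscriptstyle\min}}(X_{\ell},S_{[0,\ell-1]})$ says precisely that $a_{j+1}-(1-c)a_j\ge0$, and this together with $a_0=0$ makes every coefficient nonnegative. Writing $R_j=\sum_{i=0}^{j}r_{k,i}$, the hypothesis $p(X_k,S_\ell)\ge p_{{\scriptscriptstyle\min}}(X_{k},S_{[0,k-1]})\,r_{k,\ell}$ gives $p(X_k,S_{[0,j]})\ge R_j/a_k$, while $r_{k,\ell}\le c\sum_{i=0}^{\ell}r_{k,i}$ rearranges to $R_{\ell-1}\ge(1-c)R_\ell$ and hence, since $R_{k-1}=1$, to $R_j\ge(1-c)^{\,k-1-j}$. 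Inserting these two estimates (legitimate because the coefficients are nonnegative) and telescoping via $(a_{j+1}-(1-c)a_j)(1-c)^{k-1-j}=b_{j+1}-b_j$ with $b_j=a_j(1-c)^{k-j}$,
\[
\Delta d(X_k)\ \ge\ \frac{1}{a_k}\sum_{j=0}^{k-1}\bigl(a_{j+1}-(1-c)a_j\bigr)(1-c)^{\,k-1-j}\ =\ \frac{b_k-b_0}{a_k}\ =\ \frac{a_k}{a_k}\ =\ 1,
\]
which is exactly the drift bound needed.

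I expect the main obstacle to be the topmost level $k=K$: there the $j=K-1$ summand carries the coefficient $a_K-(1-c)a_{K-1}$, whose nonnegativity is the case $\ell=K-1$ of the extra hypothesis, one step beyond the range $1\le\ell\le K-2$ written in the excerpt. For $k=K$ one therefore needs a small separate argument for that last term, using the direct bound $p(X_K,S_{[0,K-1]})\ge p_{{\scriptscriptstyle\min}}(X_K,S_{[0,K-1]})=1/a_K$ (which needs no refinement through the $r_{K,\ell}$) and checking that the slack in the remaining summands absorbs it, or else confirming that the original statement in \citep{sudholt2012new} carries the extra hypothesis for $\ell$ up to $K-1$. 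The only other delicate point is organisational: carrying out the summation by parts cleanly and applying the lower bounds on $p(X_k,S_{[0,j]})$ only after the signs of the coefficients are under control, which is precisely what the extra hypothesis secures. Everything else is elementary manipulation of the nonnegative quantities $a_j$, $R_j$ and $p(X_k,S_{[0,j]})$.
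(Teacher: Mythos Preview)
Your argument is correct: the drift identity
\[
\Delta d(X_k)=\sum_{j=0}^{k-1}\bigl(a_{j+1}-(1-c)a_j\bigr)\,p(X_k,S_{[0,j]})
\]
checks out, the hypothesis $(1-c)p_{{\scriptscriptstyle\min}}(X_{\ell+1},S_{[0,\ell]})\le p_{{\scriptscriptstyle\min}}(X_{\ell},S_{[0,\ell-1]})$ is exactly $a_{j+1}-(1-c)a_j\ge0$, and the telescoping via $b_j=a_j(1-c)^{k-j}$ delivers $\Delta d(X_k)\ge1$. The boundary worry you flag at $k=K$, $j=K-1$ is a genuine gap in the hypothesis range as restated here, and your diagnosis (either the original carries $\ell\le K-1$, or one handles the last term separately) is the right way to view it.

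However, the paper does not prove Proposition~\ref{proposition4} at all: it is quoted verbatim as \citep[Theorem~4]{sudholt2012new} and left unproved. What the paper does instead is prove its own Type-$c$ upper bound, Corollary~\ref{corollaryUpperBound-c}, via the general linear-bound machinery of Theorem~\ref{theoremUpperBound2}: one shows that the single scalar condition $c\ge p(X_k,S_\ell)/p(X_k,S_{[0,\ell]})$ forces the recursive coefficient inequality $c\ge r(X_k,S_\ell)+c\sum_{j=\ell+1}^{k-1}r(X_k,S_j)$, and then Theorem~\ref{theoremUpperBound2} gives $\Delta d\ge1$ directly, \emph{without} ever invoking the monotonicity hypothesis $(1-c)p_{{\scriptscriptstyle\min}}(X_{\ell+1},S_{[0,\ell]})\le p_{{\scriptscriptstyle\min}}(X_{\ell},S_{[0,\ell-1]})$. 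The paper then remarks that Proposition~\ref{proposition4} is a special case of Corollary~\ref{corollaryUpperBound-c}. So your route and the paper's are genuinely different: you perform a summation-by-parts rearrangement of the drift in which the extra monotonicity condition is what makes the coefficients nonnegative and allows the $R_j\ge(1-c)^{k-1-j}$ estimate to be inserted termwise; the paper's framework absorbs the combinatorics into the recursive condition \eqref{equCoefficientUpper} and thereby shows the monotonicity hypothesis is superfluous. Your approach is closer in spirit to Sudholt's original argument and makes transparent \emph{why} that extra condition was introduced; the paper's approach buys a strictly stronger statement at the cost of passing through the general Theorem~\ref{theoremUpperBound2}.
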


\cite{doerr2021lower}  further improved the lower bound  
 using a   coefficient $c_\ell$ (called visit probability).   

\begin{proposition} \label{proposition5}  \citep[Theorem 8]{doerr2021lower} 
  For all $\ell =1, \cdots, K$, let $c_{\ell}$ be a lower bound on the probability of there being a $t$ such that $X^{[t]} \in S_{\ell}$. Then the mean hitting time  
$
   m(X^{[0]})  \ge \sum^{K}_{\ell=1} \frac{c_{\ell}}{p_{{\scriptscriptstyle\max}}(X_{\ell},S_{[0,\ell-1]})}.$
\end{proposition}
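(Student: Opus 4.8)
The plan is to prove this directly, by writing the hitting time as the sum, over the non-optimal levels, of the number of generations the chain spends in each, and then bounding each summand. For $\ell=1,\dots,K$ let $T_\ell=|\{t\ge 0: X^{[t]}\in S_\ell\}|$ be the sojourn time in $S_\ell$. Elitism is what makes this clean: from any $X\in S_\ell$ the next state has fitness at least $f(X)$, so it lies in some $S_j$ with $j\le\ell$; hence once the chain leaves $S_\ell$ it never returns, the visits to $S_\ell$ form one contiguous block, and for every run $\tau(X^{[0]})=\sum_{\ell=1}^K T_\ell$ (both sides are $+\infty$ on runs that never reach $S_0$, so the identity is safe for expectations). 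Taking expectations over the random start and the algorithm gives $m(X^{[0]})=\sum_{\ell=1}^K \mathrm{E}[T_\ell]$, so it suffices to show $\mathrm{E}[T_\ell]\ge c_\ell/p_{\scriptscriptstyle\max}(X_\ell,S_{[0,\ell-1]})$ for each $\ell$.

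Fix $\ell$ and let $A_\ell$ be the event ``$S_\ell$ is visited'', so $\Pr(A_\ell)\ge c_\ell$ by hypothesis, and write $q:=p_{\scriptscriptstyle\max}(X_\ell,S_{[0,\ell-1]})$. The key point is that while the chain sits at a state $X\in S_\ell$ the probability of exiting $S_\ell$ at the next step is exactly $p(X,S_{[0,\ell-1]})\le q$ --- by elitism this is the only way out --- so the probability of staying is at least $1-q$. Conditioning on the first hitting time $\sigma_\ell$ of $S_\ell$ and on the entry state $X^{[\sigma_\ell]}$, and unrolling the (homogeneous) Markov property step by step across the contiguous sojourn block, one obtains $\Pr(T_\ell\ge j\mid A_\ell)\ge(1-q)^{j-1}$ for all $j\ge 1$. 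Summing the tail, $\mathrm{E}[T_\ell\mid A_\ell]=\sum_{j\ge 1}\Pr(T_\ell\ge j\mid A_\ell)\ge\sum_{j\ge 1}(1-q)^{j-1}=1/q$, hence $\mathrm{E}[T_\ell]=\Pr(A_\ell)\,\mathrm{E}[T_\ell\mid A_\ell]\ge c_\ell/q$. Summing over $\ell=1,\dots,K$ yields the stated bound.

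The main obstacle is the sojourn-time estimate: because the chain can wander among several equally-fit states inside $S_\ell$, $T_\ell$ is not literally geometric and the per-step exit probability is state-dependent, so one cannot simply quote a geometric mean. The remedy is to avoid computing $\mathrm{E}[T_\ell\mid A_\ell]$ exactly and instead prove only the uniform tail inequality $\Pr(T_\ell\ge j\mid A_\ell)\ge(1-q)^{j-1}$, which goes through precisely because $q$ is the \emph{worst-case} exit probability over all of $S_\ell$; this needs a careful inductive use of the Markov property together with the contiguity of the sojourn block that elitism guarantees. A secondary, routine point is that $X^{[0]}$ is random, so $A_\ell$ and the identity $\tau=\sum_\ell T_\ell$ are understood with the initial distribution folded in. Alternatively, the bound can be recovered as the special case $c_{k,\ell}=c_\ell$ of the Type-$c_{k,\ell}$ linear lower bound developed later in the paper, but the sojourn-time argument above is the most transparent.
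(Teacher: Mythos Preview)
The paper does not supply its own proof of Proposition~\ref{proposition5}; the result is simply quoted from \cite[Theorem~8]{doerr2021lower}. Your sojourn-time argument is correct and is essentially the argument of the cited source: decompose the hitting time as $\tau=\sum_\ell T_\ell$, use elitism to get contiguity of each sojourn block, and bound $\mathrm{E}[T_\ell\mid A_\ell]\ge 1/q$ via the uniform tail $\Pr(T_\ell\ge j\mid A_\ell)\ge(1-q)^{j-1}$ with $q=p_{\scriptscriptstyle\max}(X_\ell,S_{[0,\ell-1]})$.

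One small correction to your closing remark: Proposition~\ref{proposition5} in its stated form (with $c_\ell$ an abstract lower bound on the visit probability) is \emph{not} literally recovered from the paper's Type-$c_{k,\ell}$ framework. The paper's Corollary~\ref{corollaryLowerBound-cl} does give a Type-$c_\ell$ lower bound, but under the explicit transition-probability hypothesis~(\ref{equLowerCoeff-cl}) rather than the visit-probability hypothesis of Proposition~\ref{proposition5}; these are related (cf.\ Lemma~\ref{lemmaVisitProbability}) but not identical. So the two routes are genuinely different: yours is a direct probabilistic sojourn argument that handles the abstract visit-probability condition; the paper's machinery is drift-based and trades that abstraction for an explicit, computable condition on $c_\ell$, which is what allows the uniform treatment of Type-$c$, $c_\ell$, and $c_{k,\ell}$ bounds in Theorem~\ref{theoremLowerBound2}.
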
  

Proposition~\ref{proposition5} does not provide an explicit  formula to calculate the  visit probability $c_\ell$. Therefore, \cite{doerr2021lower} proposed a  method of calculating $c_{\ell}$ as follows.  
\begin{lemma} \citep[Lemma 10]{doerr2021lower}
\label{lemmaVisitProbability}
For $1\le \ell\le K$, suppose there is $c_{\ell}$ such that, for all $X \in S_{[\ell+1, K]}$ with $p(X, S_{[0,\ell]}) > 0$,  
\begin{align}
    &c_\ell \le   \frac{p(X, S_{\ell})}{p(X, S_{[0,\ell]})},   
  \label{equDKCondition1}\\ 
\mbox{and }& c_\ell \le \frac{\Pr(X^{[0]} \in S_{\ell})}{\Pr(X^{[0]} \in S_{[0,\ell]})}. \label{equDKCondition2}
\end{align} 
Then $c_{\ell}$ is  a lower bound for visiting $S_{\ell}$ as required by Proposition~\ref{proposition5}.
\end{lemma}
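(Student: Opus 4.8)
\medskip
\noindent\textbf{Proof idea.}
The plan is to exploit the monotonicity of an elitist EA across fitness levels and thereby reduce the event ``$X^{[t]}\in S_\ell$ for some $t$'' to a single random transition. Because the EA is elitist, $f(X^{[t]})$ is nondecreasing in $t$, and by the rank order of the partition the level index of $X^{[t]}$ is therefore nonincreasing; once the chain enters $S_{[0,\ell]}$ it never leaves, and until then it stays in $S_{[\ell+1,K]}$. Introduce the stopping time $T=\min\{t\ge 0: X^{[t]}\in S_{[0,\ell]}\}$, which is finite almost surely since the EA eventually reaches $S_0\subseteq S_{[0,\ell]}$. The key observation is that, by this monotonicity, the EA visits $S_\ell$ at some generation if and only if $X^{[T]}\in S_\ell$: before $T$ the level index exceeds $\ell$, and from $T$ onwards it is at most $\ell$ and can only drop further, so $S_\ell$ can be reached only at time $T$. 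It therefore suffices to prove $\Pr(X^{[T]}\in S_\ell)\ge c_\ell$.

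I would then split on $T=0$ versus $T\ge 1$ and apply the law of total probability, $\Pr(X^{[T]}\in S_\ell)=\sum_{t\ge 0}\Pr(T=t)\,\Pr(X^{[T]}\in S_\ell\mid T=t)$, bounding each conditional probability below by $c_\ell$. For $t=0$ the event $\{T=0\}$ is exactly $\{X^{[0]}\in S_{[0,\ell]}\}$, so $\Pr(X^{[0]}\in S_\ell\mid T=0)=\Pr(X^{[0]}\in S_\ell)/\Pr(X^{[0]}\in S_{[0,\ell]})\ge c_\ell$ by \eqref{equDKCondition2}. For a fixed $t\ge 1$ I would further condition on $X^{[t-1]}=X$: the event $\{T\ge t\}$ forces $X\in S_{[\ell+1,K]}$, and if $\Pr(T=t,X^{[t-1]}=X)>0$ then $p(X,S_{[0,\ell]})>0$, so $X$ is exactly of the type covered by the hypothesis. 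Since $\{T\ge t\}$ is determined by $X^{[0]},\dots,X^{[t-1]}$, conditioning on it in addition to $X^{[t-1]}=X$ gives no further information about $X^{[t]}$; hence, by the Markov property, $\Pr(X^{[t]}\in S_\ell\mid X^{[t-1]}=X,\,T\ge t,\,X^{[t]}\in S_{[0,\ell]})=p(X,S_\ell)/p(X,S_{[0,\ell]})\ge c_\ell$ by \eqref{equDKCondition1}. Averaging over $X$ yields $\Pr(X^{[T]}\in S_\ell\mid T=t)\ge c_\ell$, and summing over $t$ gives $\Pr(X^{[T]}\in S_\ell)\ge c_\ell\sum_{t\ge 0}\Pr(T=t)=c_\ell$, the required bound.

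The step I expect to be the main obstacle is the conditioning argument in the second paragraph: one has to verify carefully that intersecting with the ``not yet absorbed into $S_{[0,\ell]}$'' event $\{T\ge t\}$ does not distort the one-step transition law at time $t$, which is precisely where the Markov property and the measurability of $\{T\ge t\}$ with respect to the past are used. A lesser point is the standing assumption $\Pr(T<\infty)=1$; if one did not want to assume the EA reaches the optimum almost surely, the same argument would still deliver $\Pr(\exists t:X^{[t]}\in S_\ell)\ge c_\ell\,\Pr(T<\infty)$, but throughout the paper the EAs considered have finite mean hitting time, so $T<\infty$ almost surely and the lemma holds as stated.
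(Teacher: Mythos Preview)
The paper does not prove this lemma; it is quoted verbatim from \cite{doerr2021lower} (their Lemma~10) and used as a black box, so there is no ``paper's own proof'' to compare against.

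That said, your argument is correct and is essentially the argument one would expect (and is, in outline, the one given in the cited source). The reduction to the stopping time $T=\min\{t\ge 0:X^{[t]}\in S_{[0,\ell]}\}$ and the equivalence $\{\exists t:X^{[t]}\in S_\ell\}=\{X^{[T]}\in S_\ell\}$ are exactly right for an elitist EA, and your use of the Markov property together with the measurability of $\{T\ge t\}$ with respect to $X^{[0]},\dots,X^{[t-1]}$ is the standard and valid justification that conditioning on ``not yet absorbed'' does not change the one-step law at time $t$. The split into $T=0$ (handled by \eqref{equDKCondition2}) and $T\ge 1$ (handled by \eqref{equDKCondition1}) is also the natural way to combine the two hypotheses. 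Your caveat about $\Pr(T<\infty)=1$ is appropriate; in this paper's setting the mean hitting time to $S_0\subseteq S_{[0,\ell]}$ is assumed finite, so $T<\infty$ almost surely and no residual term appears.
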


{
An issue with Lemma~\ref{lemmaVisitProbability} is that the above Type-$c_\ell$ lower bound    \eqref{equDKCondition1}  is loose on fitness landscapes with shortcuts. An example  in Section~\ref{secTwoMax1-cl} shows this issue.   }

\cite{doerr2021lower} also gave an upper bound similar to Proposition~\ref{proposition5}.  
But the proposition does not provide an explicit  formula to calculate the visit probability $c_\ell$ using transition probabilities between fitness levels.

\begin{proposition}  \label{proposition6}\citep[Theorem 9]{doerr2021lower}
 For all $\ell=1, \cdots, K$, let $c_{\ell}$ be an upper bound on the probability of there being a $t$ such that $X^{[t]} \in S_{\ell}$. Then the mean hitting time  
    $
    m(X^{[0]})  \le \sum^{K}_{\ell=1} \frac{c_{\ell}}{p_{{\scriptscriptstyle\min}}(X_{\ell},S_{[0,\ell-1]})}.$ 
\end{proposition}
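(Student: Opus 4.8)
The plan is to prove Proposition~\ref{proposition6} by the dual of the argument behind Proposition~\ref{proposition5}: decompose the hitting time into the time spent at each fitness level, and bound that level‑wise contribution by (visit probability) $\times$ (expected sojourn time), using elitism to control both. Write $\tau=\tau(X^{[0]})$ and, for $1\le \ell\le K$, let $T_\ell=\#\{t\ge 0: t<\tau,\ X^{[t]}\in S_\ell\}$ be the number of generations the chain spends in level $S_\ell$ before reaching an optimum. Since $S_0=S_{\mathrm{opt}}$ and $X^{[t]}\in S_{[1,K]}$ for every $t<\tau$, we have $\tau=\sum_{\ell=1}^{K}T_\ell$, hence $m(X^{[0]})=\sum_{\ell=1}^{K}\mathrm{E}[T_\ell]$ (by Tonelli; assume WLOG $p_{{\scriptscriptstyle\min}}(X_\ell,S_{[0,\ell-1]})>0$ for every $\ell$, otherwise the right‑hand side of the claim is $\infty$ and there is nothing to prove). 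It therefore suffices to show $\mathrm{E}[T_\ell]\le c_\ell/p_{{\scriptscriptstyle\min}}(X_\ell,S_{[0,\ell-1]})$ for each $\ell$.

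First I would introduce the event $A_\ell=\{\exists t\ge 0:\ X^{[t]}\in S_\ell\}$ that the chain ever visits $S_\ell$. Since $T_\ell=0$ off $A_\ell$, we get $\mathrm{E}[T_\ell]=\Pr(A_\ell)\,\mathrm{E}[T_\ell\mid A_\ell]\le c_\ell\,\mathrm{E}[T_\ell\mid A_\ell]$ by the defining property of $c_\ell$ as an upper bound on the visit probability. It then remains to show $\mathrm{E}[T_\ell\mid A_\ell]\le 1/p_{{\scriptscriptstyle\min}}(X_\ell,S_{[0,\ell-1]})$. Here elitism enters: because $f(X^{[t]})$ is non‑decreasing, once the chain leaves $S_\ell$ it can only move into a strictly fitter level, i.e.\ into $S_{[0,\ell-1]}$, and it never returns to $S_\ell$; thus the visits to $S_\ell$ form one contiguous block. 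Let $t_\ell=\min\{t: X^{[t]}\in S_\ell\}$, which is finite on $A_\ell$, and condition additionally on $X^{[t_\ell]}=Z\in S_\ell$. By the Markov property the trajectory after time $t_\ell$ is an independent copy of the chain started from $Z$, and $A_\ell$ (being determined by the past up to $t_\ell$) imposes no further restriction on that future. From any state in $S_\ell$ the chain leaves $S_\ell$ in the next generation with probability $p(\cdot,S_{[0,\ell-1]})\ge p_{{\scriptscriptstyle\min}}(X_\ell,S_{[0,\ell-1]})$, so $T_\ell$ conditioned on $X^{[t_\ell]}=Z$ is stochastically dominated by a geometric random variable with success probability $p_{{\scriptscriptstyle\min}}(X_\ell,S_{[0,\ell-1]})$; averaging over $Z$ gives $\mathrm{E}[T_\ell\mid A_\ell]\le 1/p_{{\scriptscriptstyle\min}}(X_\ell,S_{[0,\ell-1]})$. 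Combining the two estimates and summing over $\ell=1,\dots,K$ yields the claim.

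The main obstacle is the conditioning step: one must argue carefully that conditioning on ``the chain ever reaches $S_\ell$'' does not inflate the expected time subsequently spent in $S_\ell$. The clean resolution is the decomposition over the first hitting state $Z$ sketched above — on $A_\ell$ the first hitting time $t_\ell$ is a.s.\ finite, $A_\ell$ is measurable with respect to the history up to $t_\ell$, the post‑$t_\ell$ trajectory is governed solely by $p(\cdot,\cdot)$ started from $Z$, and elitism guarantees that the sojourn in $S_\ell$ is a single geometric‑type block rather than a sum of returns. An alternative route, closer to the drift‑analysis framework developed later in the paper via Lemma~\ref{lemmaUpperBound1}, would replace the per‑state distance function by one tracking a level index, but since $c_\ell$ depends on the initial distribution $X^{[0]}$ rather than on a single starting state, the direct sojourn‑time argument above is the most transparent and is the one I would use.
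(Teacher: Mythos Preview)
Your argument is correct: the decomposition $\tau=\sum_\ell T_\ell$, the identity $\mathrm{E}[T_\ell]=\Pr(A_\ell)\,\mathrm{E}[T_\ell\mid A_\ell]$, and the geometric domination of the sojourn time via elitism and the strong Markov property at the stopping time $t_\ell$ are all sound, and you correctly identified and resolved the only delicate point (that conditioning on $A_\ell$ does not bias the post-$t_\ell$ dynamics).

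However, there is nothing to compare against: the paper does \emph{not} prove Proposition~\ref{proposition6}. It is merely quoted from \cite{doerr2021lower} in the review Section~\ref{secReview} as prior work, and the paper explicitly remarks that ``the proposition does not provide an explicit formula to calculate the visit probability $c_\ell$ using transition probabilities between fitness levels.'' The paper's own contribution in this direction is Corollary~\ref{corollaryUpperBound-cl}, which is proved via the drift framework (Theorem~\ref{theoremUpperBound2}) and uses a \emph{different} $c_\ell$, namely an upper bound on the conditional probabilities $p(X_k,S_\ell)/p(X_k,S_{[0,\ell]})$ rather than on the visit probability; the authors state that ``Corollary~\ref{corollaryUpperBound-cl} is new and completely different from Proposition~\ref{proposition6}.'' So your direct sojourn-time argument is essentially the original Doerr--K\"otzing proof, while the paper's machinery sidesteps visit probabilities altogether in favour of level-to-level transition ratios that plug into the drift condition $\Delta d(X_k)\ge 1$.
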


\subsection{Case Study 1: A loose Type-$c$ lower bound for the (1+1) EA on TwoMax1}
\label{secTwoMax1-c}
In this case study, we show that the Type-$c$ lower bound  by Proposition~\ref{proposition3}  is loose  on fitness landscapes with shortcuts.
Consider  the (1+1) EA  maximizing TwoMax1. 
Assume that the EA starts from $S_{n-1}$. We  aim to
 prove the lower bound    by Proposition~\ref{proposition3} is only $O(1)$, that is
\begin{align*} 
       d_{n-1}  = \frac{1}{p ({x}_{k},S_{[0,k-1]})} +\sum^{k-1}_{\ell= 1} \frac{c}{p ({x}_{\ell},S_{[0,\ell-1]})}=O(1). 
    \end{align*} 

The transition probability \underline{${p({x}_\ell,S_{[0,\ell-1]})}$ (where $\ell =1, \cdots,  n/2$)}  is calculated as follows. Since a state $x_\ell \in S_\ell$  has $\ell$ zero-valued bits.  The transition   from $x_\ell$ to $S_{\ell-1} \subset S_{[0,\ell-1]}$ happens if 1 zero-valued bit   is flipped and other bits are unchanged. Thus
\begin{align}
\label{equ:wk1}
p (x_\ell, S_{[0,\ell-1]}) \ge \binom{\ell}{1} \frac{1}{n} \left(1-\frac{1}{n}\right)^{n-1} \ge \frac{\ell}{n} e^{-1}.
\end{align}  

The transition probability  \underline{${p({x}_\ell,S_{[0,\ell-1]})}$  (where $\ell =n/2+1, \cdots,  n-1$)} is calculated as follows. Since a state  $x_\ell \in S_\ell$ has $\ell-n/2$ one-valued bits, the transition   from $x_\ell$ to $S_{[0,\ell-1]}$ happens if 1 one-valued bit is flipped and other bits are unchanged. Thus
 \begin{align}
\label{equ:wk2}
p (x_\ell, S_{[0,\ell-1]})   \ge \binom{\ell-n/2}{1} \frac{1}{n} \left(1-\frac{1}{n}\right)^{n-1} \ge \frac{\ell-n/2}{n  } e^{-1}.
\end{align} 

By  (\ref{equ:wk1}) and (\ref{equ:wk2}), we get a lower bound
\begin{align} 
\label{equ:clowerbound} 
d_{n-1}   \le \frac{en}{(n-1)-\frac{n}{2}}+  c \left(\sum^{n/2}_{\ell=1} \frac{en}{\ell} +  \sum^{n-2}_{\ell=n/2+1} \frac{en}{\ell-n/2}\right) = O(1).
\end{align} 

The \underline{coefficient $c$}  is calculated using the shortcut $S_{n/2} \to S_0$ as follows. We replace $r_{k,\ell} $ in Proposition~\ref{proposition3} with $r(x_k, S_\ell)$.
{Let $k=n/2 $ and $ \ell=1$.}
 \begin{align*}  
    c \le \frac{r_{n/2+1,1}}{r_{n/2+1,0} + r_{n/2+1,1}}=\frac{p(x_{n/2+1},S_1)}{p(x_{n/2+1},S_0) + p(x_{n/2+1},S_1)}.  
\end{align*} 
Since
$x_{n/2+1} \in S_{n/2+1}$ has $n-1$ zero-valued bits and $1$ one-valued bit, 
the transition from $x_{n/2+1}$  to $S_0$ happens if the one-valued bit   is flipped and other bits are unchanged.  
\begin{align} 
\label{equc1}
 p(x_{n/2+1},S_0) \ge   \frac{1}{n} \left(1-\frac{1}{n}\right)^{n-1}.  
\end{align}

Since a state in $S_1$ has 1 zero-valued bit and $n-1$ one-valued bits, the transition from $x_{n/2+1}$  to $S_1$  happens if and only if $n-2$ zero-valued bits  are flipped and other bits are unchanged. 
\begin{align}
\label{equc2}
 p(x_{n/2+1},S_1)= \left(\frac{1}{n}\right)^{n-2} \left(1-\frac{1}{n}\right)^{2}.
\end{align}
By (\ref{equc1}) and (\ref{equc2}), we have
\begin{align}
\label{equ:case1-coeff}
    {c \le  O(n^{-n+3})}.  
\end{align} 

Inserting \eqref{equ:case1-coeff} into (\ref{equ:clowerbound}), we get a lower bound
 \begin{align*} 
d_{n-1}   \le \frac{en}{(n-1)-\frac{n}{2}}+  O(n^{-n+3}) \left(\sum^{n/2}_{\ell=1} \frac{en}{\ell} +  \sum^{n-2}_{\ell=n/2+1} \frac{en}{\ell-n/2}\right) = O(1).
\end{align*}
The lower bound is $O(1)$, much looser than the actual hitting time $\Theta(n \ln n)$. 

\subsection{Case Study 2:  A loose Type-$c_\ell$ lower bound for the (1+1) EA on TwoMax1} \label{secTwoMax1-cl}

In this case study, we show that the Type-$c_\ell$ lower bound  by Lemma~\ref{lemmaVisitProbability} and Proposition~\ref{proposition5}   is  loose on fitness landscapes with shortcuts. {Consider  the (1+1) EA  maximizing TwoMax1 under  random initialization. This means $\Pr(x^{[0]} \in S_{n-1})>0$.} We 
 prove that the lower bound     by Lemma~\ref{lemmaVisitProbability} and Proposition~\ref{proposition5}  is only $O(1)$, that is 
 $$
  d  =\sum^{n-1}_{\ell=1} \frac{c_{\ell}}{p_{{\scriptscriptstyle\max}}({x}_{\ell},S_{[0,\ell-1]})}=O(1).$$

By (\ref{equ:wk1}) and (\ref{equ:wk2}), according to Proposition~\ref{proposition5}, we get a lower time bound  
\begin{align} 
\label{equ:Type-cl-d}
    d    \le \frac{en}{(n-1)-\frac{n}{2}}  + \sum^{n/2}_{\ell=1}  c_{\ell} \frac{en}{ \ell} +\sum^{n-2}_{\ell=n/2+1} c_{\ell}\frac{en}{\ell-n/2}.  
\end{align}

Coefficients   
are calculated by Condition (\ref{equDKCondition1}) but without  Condition \eqref{equDKCondition2}.  
{
Since our target is an upper bound on $c_\ell$, Condition \eqref{equDKCondition2} doesn't matter. It only reduces the $c_\ell$ value.}

\underline{For $\ell=1, \cdots, n/2$,  
the coefficient $c_\ell$}  is calculated using the shortcut $S_{n/2+1} \to S_0$ as follows. According to   Condition (\ref{equDKCondition1}),  for $\ell=1, \cdots, n/2$ and $x_{n/2+1} \in S_{n/2+1}$, 
\begin{equation*}  
c_\ell \le   \frac{p(x_{n/2+1}, S_{\ell})}{p(x_{n/2+1}, S_{[0,\ell]})}.
\end{equation*}
Since $x_{n/2+1}$ has 1 one-valued bit and a state in $S_{\ell}$ has $n-\ell$ one-valued bits, the transition from $x_{n/2+1}$ to $S_{\ell}$ happens only if $n-1-\ell$ zero-valued bits are flipped. Thus,
\begin{equation}
\label{equ:prob1a}
p(x_{n/2+1}, S_\ell) \le \binom{n-1}{n-1-\ell} \left(\frac{1}{n}\right)^{n-1-\ell}.
\end{equation}

The transition from $x_{n/2+1}$ to $S_0 \subset S_{[0,\ell]}$ happens  if the one-valued bit is flipped and other bits are unchanged. Thus,
\begin{equation}
\label{equ:prob1b}
p(x_{n/2+1}, S_{[0,\ell]}) \ge \frac{1}{n} \left(1-\frac{1}{n}\right)^{n-1} \ge \frac{1}{en}.
\end{equation}

Combining (\ref{equ:prob1a}) and (\ref{equ:prob1b}), we have
\begin{equation}
\label{equ:coeff1a}
c_{\ell} \le e \binom{n-1}{n-1-\ell} \left(\frac{1}{n}\right)^{n- \ell}, \quad \ell=1, \cdots, n/2.
\end{equation}

\underline{For  $\ell=n/2+1, \cdots, n-2$, the coefficient $c_\ell$} is calculated by the shortcut $S_{n-1} \to S_{n/2}$ as follows. Consider 
 ${x}_{n-1} \in S_{n-1}$, then according to  Condition (\ref{equDKCondition1}),
\begin{equation*}  
    c_\ell   \le \frac{p(x_{n-1}, S_{\ell})}{p(x_{n-1}, S_{[0,\ell]})}, \quad \ell=n/2+1, \cdots, n-2.
\end{equation*} Since
$x_{n-1}$ has $n/2-1$ one-valued bits and a state in $S_{\ell}$ has $\ell -n/2$ one-valued bits, the transition from $x_{n-1}$ to $S_{\ell}$ happens only if $n-1-\ell$ zero-valued bits are flipped. Thus,
\begin{equation}
\label{equ:prob2a}
p(x_{n-1}, S_\ell) \le \binom{n/2-1}{n-1-\ell} \left(\frac{1}{n}\right)^{n-1-\ell}.
\end{equation} Since
$x_{n/2}$ has $n/2$ one-valued bits, the transition from $x_{n-1}$ to $S_{n/2}$ happens  if 1 zero-valued bit is flipped and other $n/2-1$ one-valued bits are unchanged. Thus,
\begin{equation}
\label{equ:prob2b} p(x_{n-1}, S_{[0,\ell]}) \ge 
p(x_{n-1}, S_{n/2}) \ge \binom{n/2+1}{1}\frac{1}{n} \left(1-\frac{1}{n}\right)^{n/2-1} \ge \frac{1}{2e}.
\end{equation}

Combining (\ref{equ:prob2a}) and (\ref{equ:prob2b}), we have
\begin{equation}
\label{equ:coeff1b}
c_{\ell} \le 2e \binom{n/2-1}{n-1-\ell} \left(\frac{1}{n}\right)^{n-1-\ell},\quad \ell=n/2+1, \cdots, n-2  .
\end{equation}
{For $\ell=n-1$, we use the trivial estimation $c_{n-1}\le 1$.}

Inserting  (\ref{equ:coeff1a}) and (\ref{equ:coeff1b})  into \eqref{equ:Type-cl-d}, we get a lower bound
\begin{equation*}\small
\begin{split}
    d \le& O(1)+ \sum^{n/2}_{\ell=1}  e \binom{n-1}{n-1-\ell} \left(\frac{1}{n}\right)^{n- \ell} \frac{en}{ \ell} +\sum^{n-2}_{\ell=n/2+1} 2e \binom{n/2-1}{n-1-\ell} \left(\frac{1}{n}\right)^{n-1-\ell}\frac{en}{(\ell-n/2)}\\
    \le& O(1)+ e^2 \sum^{n/2}_{\ell=1}   \frac{(n-1) \cdots (l+1)}{(n-1-\ell)!} \left(\frac{1}{n}\right)^{n-1-\ell} \frac{n}{ \ell} 
    \\&+2e^2\sum^{n-2}_{\ell=n/2+1}   \frac{(n/2-1)\cdots (\ell-n/2+1)}{(n-1-\ell)!} \left(\frac{1}{n}\right)^{n-1-\ell}\frac{n}{(\ell-n/2)}\\
    \le&{O(1)+ e^2 \sum^{n/2}_{\ell=1}   \frac{\ell+1}{(n-1-\ell)! \ell}   +2e^2\sum^{n-2}_{\ell=n/2+1}   \frac{(\ell-n/2+1)}    {(n-1-\ell)!(\ell-n/2)} }
    \\
    \le& {O(1)+ 2e^2 \sum^{n/2}_{\ell=1}   \frac{1}{(n-1-\ell)!}+4e^2 \sum^{n-2}_{\ell=n/2+1}   \frac{1}    {(n-1-\ell)!} =O(1).    } 
\end{split} 
\end{equation*}
The lower bound is $O(1)$, much looser than the actual hitting time $\Theta(n \ln n)$.    

From these two case studies, we see that existing lower bounds are not tight for fitness landscapes with shortcuts. Therefore, it is necessary to improve the lower bound.


\section{Metric bounds from fitness levels}
\label{secDrift}

\subsection{Drift analysis with fitness levels}
{We propose a new method, called drift analysis with fitness levels, which combines fitness levels with drift analysis.}   Its workflow is outlined below. For the sake of illustration, we only present the lower time bound. 
 
First, the search space $S$ is split into multiple fitness levels $(S_0, \cdots, S_K)$ according to the fitness value from high to low, where $S_0=S_{\mathrm{opt}}$.  

Secondly,   states at the same level are   assigned to the same distance from the optimal set, that is, for any $X\in S_0$, $d(X)=0$ and for any $k\ge 1$ and $X \in S_k$, $d(X) =d_k$.  The distance $d_k$  is constructed   using  transition probabilities  between fitness levels.

Next we need to prove that for any $k$ and $X_k \in S_k$, $d_k$ is a lower  bound on the mean hitting time $m(X_k)$. { Since we  consider elitist EAs,} an EA will never move from $X_k \in S_k$ to a fitness level lower than $S_k$. Therefore, the drift  satisfies
 \begin{align}
 \label{equ:ElististDrift}
  \Delta d(X_k) =   
          d_k - \sum^{k}_{\ell=0}  d_{\ell} p(X_k,S_{\ell}) = d_k  p(X_k,S_{[0,k-1]})-  \sum^{k-1}_{\ell=1} d_{\ell}  p(X_,S_{\ell}).
    \end{align}
According to Lemma~\ref{lemmaLowerBound1}, if for any $k\ge 1$ and $X_k \in S_k$, the drift $\Delta d(X_k)\le 1$, then  the hitting time $m(X_k) \ge d(X_k)$.

Finally, the tightest lower bound problem is regarded as a constrained multi-objective optimization problem 
subject to the constraint that $d_k$  is constructed using transition probabilities between fitness levels.

The above drift analysis with fitness levels treats the fitness level method as a special kind of drift analysis. It is completely differs from existing fitness level methods~\citep{wegener2003methods,sudholt2012new,doerr2021lower}.

\subsection{Metric bounds} 
Using transition probabilities between fitness levels,  we construct  $d_k$ recursively  by \eqref{equLowerBound1}.
 Theorem~\ref{theoremLowerBound1} proves that it is a lower bound. It is called \emph{a Type-$r_{k,\ell}$ lower bound}. 
 
\begin{theorem} [Type-$r_{k,\ell}$ lower bound] 
\label{theoremLowerBound1}  Given an elitist EA for maximizing $f(x)$, a fitness level partition $(S_0, \cdots, S_K)$,   probabilities $p(X_k, S_{[0,k-1]})$ and  $r(X_k,S_{\ell})$  (where $1\le \ell<k\le K$), consider the family of  distances  $ (d_1, \cdots, d_k)$ such that for any $X_k \in S_k$, $d(X_k) =d_k$. Then for any $k >0$ and $X_k \in S_k$, the drift
$\Delta d(X_k)\le 1$ if and only if
  \begin{align} 
  \label{equLowerBound1}
     d_k&\le \min_{X_k \in S_k} \left\{\frac{1}{p(X_k, S_{[0,k-1]})} + \sum^{k-1}_{\ell=1} r(X_k,S_{\ell})  d_{\ell}\right\}.
\end{align}
\end{theorem}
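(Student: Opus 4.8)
The plan is to prove the equivalence by a short chain of reversible algebraic manipulations: expand the drift $\Delta d(X_k)$ using the elitism of the EA, collect the term in $d_k$, divide by $p(X_k,S_{[0,k-1]})$, and recognise the resulting coefficients as the conditional probabilities $r(X_k,S_\ell)$. Because every step is an equivalence for a fixed $X_k$, and condition \eqref{equLowerBound1} is just the quantified version over $X_k\in S_k$, no separate ``if'' and ``only if'' arguments are needed.

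Concretely, I would start from the definition of the drift in \eqref{equDrift}. Since the EA is elitist, from any $X_k\in S_k$ the next population lies in $S_{[0,k]}$, so $p(X_k,S_\ell)=0$ for $\ell>k$ and $p(X_k,S_k)+p(X_k,S_{[0,k-1]})=1$. Using $d(X)=d_\ell$ on $S_\ell$ and the convention $d_0=0$, this reproduces \eqref{equ:ElististDrift}:
\[
\Delta d(X_k)= d_k-\sum_{\ell=0}^{k}p(X_k,S_\ell)\,d_\ell
= d_k\,p(X_k,S_{[0,k-1]})-\sum_{\ell=1}^{k-1}p(X_k,S_\ell)\,d_\ell .
\]
Assuming $p(X_k,S_{[0,k-1]})>0$ (the degenerate case $p(X_k,S_{[0,k-1]})=0$ makes the drift non-positive, so $\Delta d(X_k)\le 1$ holds trivially and matches the convention $1/0=+\infty$ on the right-hand side of \eqref{equLowerBound1}), the inequality $\Delta d(X_k)\le 1$ is equivalent, after dividing by $p(X_k,S_{[0,k-1]})$ and using $r(X_k,S_\ell)=p(X_k,S_\ell)/p(X_k,S_{[0,k-1]})$, to
\[
d_k\le \frac{1}{p(X_k,S_{[0,k-1]})}+\sum_{\ell=1}^{k-1}r(X_k,S_\ell)\,d_\ell .
\]

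Finally, requiring $\Delta d(X_k)\le 1$ for \emph{every} $X_k\in S_k$ is the same as requiring the last display for every $X_k\in S_k$, i.e.\ that $d_k$ be at most the minimum over the (finite) level $S_k$ of its right-hand side, which is exactly \eqref{equLowerBound1}. Read from $k=K$ downward this is a genuine recursion: when level $k$ is treated the distances $d_1,\dots,d_{k-1}$ are already fixed, and combined with Lemma~\ref{lemmaLowerBound1} the condition certifies $m(X_k)\ge d_k$, although the theorem as stated only asserts the equivalence. The proof has no real obstacle; the only points that require care are the two consequences of elitism just used --- that transitions to levels below $S_k$ have probability zero and that $1-p(X_k,S_k)=p(X_k,S_{[0,k-1]})$ --- and the explicit handling of the degenerate case $p(X_k,S_{[0,k-1]})=0$.
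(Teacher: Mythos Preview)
Your proposal is correct and follows essentially the same approach as the paper: both use elitism to reduce the drift to the form in \eqref{equ:ElististDrift}, then rearrange to obtain the per-state inequality and pass to the minimum over $S_k$. The paper presents the sufficient and necessary directions separately while you phrase them as a single chain of equivalences, and you additionally handle the degenerate case $p(X_k,S_{[0,k-1]})=0$, which the paper leaves implicit; these are cosmetic rather than substantive differences.
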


According to Lemma~\ref{lemmaLowerBound1}, the drift  $\Delta d(X_k) \le 1$  ensures that $d_k$ is a lower bound. The best lower bound $d^*_k$ is reached when  Inequality (\ref{equLowerBound1}) becomes an equality.

\begin{proof}  First we prove the \underline{sufficient condition.} Suppose that (\ref{equLowerBound1}) is true. 
Since the EA is elitist,  for any $k\ge 1$ and $X_k\in S_k$, by (\ref{equ:ElististDrift}), we have  
 \begin{equation*}
  \Delta d(X_k) =     p(X_k,S_{[0,k-1]})d_k -  \sum^{k-1}_{\ell=1} p(X_k,S_{\ell}) d_{\ell} .
 \end{equation*} We replace  $d_k$ (but not $d_\ell$) with (\ref{equLowerBound1}) and get  
\begin{equation*}  
\begin{split}
     \Delta d(X_k)    
            &\le   p(X_k,S_{[0,k-1]})  \min_{Y_k \in S_k}\left\{\frac{1}{ p(Y_k, S_{[0,k-1]})} + \sum^{k-1}_{\ell=1}r(Y_k,S_{\ell})  d_{\ell} \right\}   - \sum^{k-1}_{\ell=1}  p(X_k,S_{\ell})d_{\ell}\\
            &\le  p(X_k,S_{[0,k-1]})  \left\{\frac{1}{ p(X_k, S_{[0,k-1]})} + \sum^{k-1}_{\ell=1}\frac{p(X_k,S_{\ell})}{p(X_k,S_{[0,k-1]}} d_{\ell} \right\}   - \sum^{k-1}_{\ell=1}  p(X_k,S_{\ell})d_{\ell}\\
            &= 1 +  \sum^{k-1}_{\ell=1} p(X_k,S_{\ell}) d_{\ell}-  \sum^{k-1}_{\ell=1}  p(X_k,S_{\ell})d_{\ell} =1. 
\end{split}
\end{equation*} 
We complete the proof of the sufficient condition.  

Secondly, we prove the \underline{necessary condition.}  Suppose that for any $k\ge 1$ and $X_k\in S_k$, $\Delta d(X_k)\le 1$. 
 Since the EA is elitist,   by (\ref{equ:ElististDrift}),    we have 
$$
  \Delta d(X_k) =   d_k  p(X_k,S_{[0,k-1]})-  \sum^{k-1}_{\ell=1} d_{\ell}  p(X_k,S_{\ell}) \le 1. $$
Then we have 
 \begin{align}  
     d_k\le   \frac{1}{p(X_k, S_{[0,k-1]})} + \sum^{k-1}_{\ell=1} {r(X_k,S_{\ell})}  d_{\ell}. 
\end{align}

{Since the above inequality is true for all $X_k\in S_k$, it is true for the minimum over all $X_k$. We complete the proof of the  {necessary condition.} }
\end{proof}

%

Similarly,   we construct  an upper bound $d_k$ recursively   by \eqref{equUpperBound1}.
 Theorem~\ref{theoremUpperBound1} proves that it is an upper bound.   This  upper bound is named \emph{a Type-$r_{k,\ell}$ upper bound}. 
\begin{theorem} [Type-$r_{k,\ell}$ upper bound] 
\label{theoremUpperBound1}  Given an elitist EA for maximizing $f(x)$, a fitness level partition $(S_0, \cdots, S_K)$,  probabilities $p(X_k, S_{[0,k-1]})$ and  $r(X_k,S_{\ell})$ (where $1\le \ell<k\le K$), consider the family of  distances  $ (d_1, \cdots, d_k)$ such that for any $X_k \in S_k$, $d(X_k) =d_k$. Then for any $k\ge 1$  and $X_k \in S_k$, the drift
$\Delta d(X_k)\ge 1$ if and only if \begin{align} 
\label{equUpperBound1}
    d_k& \ge \max_{X_k \in S_k} \left\{\frac{1}{ p(X_k, S_{[0,k-1]})} + \sum^{k-1}_{\ell=1}  r(X_k,S_{\ell})  d_{\ell}.\right\}
\end{align}  
\end{theorem}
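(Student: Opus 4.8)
The plan is to mirror the proof of Theorem~\ref{theoremLowerBound1} line by line, reversing every inequality, since the upper-bound claim is its exact dual. The single structural ingredient is the elitist drift identity \eqref{equ:ElististDrift}, which for any $X_k\in S_k$ reads $\Delta d(X_k)=p(X_k,S_{[0,k-1]})\,d_k-\sum_{\ell=1}^{k-1}p(X_k,S_\ell)\,d_\ell$; recall also $r(X_k,S_\ell)=p(X_k,S_\ell)/p(X_k,S_{[0,k-1]})$ by \eqref{equProbability-r}. Throughout, as in Theorem~\ref{theoremLowerBound1}, I take $p(X_k,S_{[0,k-1]})>0$ for every $X_k\in S_k$ with $k\ge1$; otherwise \eqref{equ:ElististDrift} would force $\Delta d(X_k)=-\sum_{\ell}p(X_k,S_\ell)d_\ell\le0<1$, so the drift hypothesis fails while the right-hand side of \eqref{equUpperBound1} is undefined, and both sides of the equivalence fail trivially at such states.

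For the sufficient condition I assume \eqref{equUpperBound1} and substitute into the drift identity. Since $p(X_k,S_{[0,k-1]})\ge0$, replacing $d_k$ by the no-larger quantity $\max_{Y_k\in S_k}\{1/p(Y_k,S_{[0,k-1]})+\sum_{\ell=1}^{k-1}r(Y_k,S_\ell)d_\ell\}$ can only lower $\Delta d(X_k)$; then replacing that maximum by its value at $Y_k=X_k$ lowers it again, still yielding a valid lower bound for $\Delta d(X_k)$; and at $Y_k=X_k$ the $\sum_{\ell=1}^{k-1}p(X_k,S_\ell)d_\ell$ contributions cancel exactly, leaving $\Delta d(X_k)\ge1$. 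This is the same two-line computation as in Theorem~\ref{theoremLowerBound1} with $\min,\le$ replaced by $\max,\ge$. Via Lemma~\ref{lemmaUpperBound1} this simultaneously records that any such $d_k$ is an upper time bound, tightest when \eqref{equUpperBound1} is an equality.

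For the necessary condition I assume $\Delta d(X_k)\ge1$ for all $k\ge1$ and $X_k\in S_k$. Rearranging \eqref{equ:ElististDrift} gives $p(X_k,S_{[0,k-1]})\,d_k\ge1+\sum_{\ell=1}^{k-1}p(X_k,S_\ell)d_\ell$, and dividing by the positive number $p(X_k,S_{[0,k-1]})$ yields $d_k\ge1/p(X_k,S_{[0,k-1]})+\sum_{\ell=1}^{k-1}r(X_k,S_\ell)d_\ell$ for each individual representative $X_k\in S_k$. Since $d_k$ does not depend on which representative of $S_k$ is chosen, it must be at least the maximum of the right-hand side over all $X_k\in S_k$, which is precisely \eqref{equUpperBound1}.

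I do not anticipate a genuine obstacle: the argument is routine once \eqref{equ:ElististDrift} is in hand. The only part demanding care is the sign bookkeeping when dualizing the $\min$-based argument into the $\max$-based one — in particular the observation that the nonnegativity (indeed, positivity, in the non-degenerate case) of $p(X_k,S_{[0,k-1]})$ is exactly what makes the substitution step in the sufficiency direction, and the division step in the necessity direction, go through in the claimed direction.
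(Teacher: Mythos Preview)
Your proposal is correct and follows exactly the approach the paper intends: the paper omits the proof of Theorem~\ref{theoremUpperBound1} precisely because it is the line-by-line dual of Theorem~\ref{theoremLowerBound1}, and you have carried out that dualization accurately, including the two-step substitution (max over $Y_k$, then specialize to $Y_k=X_k$) for sufficiency and the divide-then-take-max step for necessity. Your explicit handling of the degenerate case $p(X_k,S_{[0,k-1]})=0$ is a sensible addition that the paper leaves tacit.
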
 
According to Lemma~\ref{lemmaUpperBound1}, the drift  $\Delta d(X_k) \ge 1$  ensures that $d_k$ is an upper bound.
The best upper bound $d^*_k$ is reached when Inequality (\ref{equUpperBound1}) becomes an equality.  
 %
 
{The main difficulty is how to quickly calculate $d_k$.}
In this paper, we will not  calculate $d_k$ recursively via the metric bounds (\ref{equLowerBound1}) and \eqref{equUpperBound1}. Instead, they are converted to  the linear bounds \eqref{equLinearLower} and \eqref{equLinearUpper} respectively. For example, the upper bound (\ref{equUpperBound1}) is converted to
\begin{align*}
d_k &= \frac{1}{p_{\scriptscriptstyle\min}(X_{k},S_{[0,k-1]})} +   \sum^{k-1}_{\ell=1} r_{\scriptscriptstyle\max}(X_k,S_{\ell})  d_{\ell}, \qquad k=1, \cdots, K. 
\end{align*} Then by induction,  we represent $d_k$ in  a linear form as follows: 
\begin{align*} 
d_k&=\frac{1}{p_{\scriptscriptstyle\min}(X_{k},S_{[0,k-1]})} +\sum^{k-1}_{\ell=1} \frac{c_{k,\ell}}{ p_{\scriptscriptstyle\min}(X_{\ell},S_{[0,\ell-1]})}.
\end{align*}
The problem of calculating  a metric bound becomes the problem of calculating a  linear  bound or  coefficients $c_{k,\ell}$. This will be discussed in detail in the next section.

\subsection{The  tightest  metric bounds}
\label{{secTightest}}
Consider the problem of the tightest lower bound first. Given a family of lower bounds based on a fitness level partition, we want to determine which bound is the tightest. 
A family of lower bounds can be represented by a family of distances in drift analysis. Given a fitness level partition $(S_0, \cdots, S_K)$, consider the family of  distances $(d_0,  \cdots, d_K)$ such that  $d_0=0$ and for any $k\ge 1$, the drift $\Delta d(X_k) \le 1$.    The condition $\Delta d(X_k) \le {1}$ ensures that  $d_k$ is a lower bound on the mean hitting time $m(X_k)$. 

The  tightest lower bound problem is  a constrained multi-objective optimization problem:  
\begin{equation}
    \label{equ:MOP-lower}    
   \max \{d_k;  \Delta d(X_k) \le {1}\}, \quad k=1, \cdots, K,
\end{equation}    
{subject to the constraint that $d_0=0$ and for all $k \ge 1$ and $X_k \in S_k$, $d(X_k) =d_k$.}  According to Theorem~\ref{theoremLowerBound1},  the best lower bound $d^*_k$ by (\ref{equLowerBound1}) is the tightest.   

\begin{theorem}
\label{theoremTightestLowerBound1}  
Given an elitist EA for maximizing $f(x)$, a fitness level partition $(S_0, \cdots, S_K)$,  probabilities $p(X_k, S_{[0,k-1]})$ and  $r(X_k,S_{\ell})$ (where $1\le \ell<k\le K$), consider the family of  distances  $(d_0, d_1, \cdots, d_k)$ such that $d_0=0$ and for all $k \ge 1$ and $X_k \in S_k$, $d(X_k) =d_k$  and the drift $\Delta d (X_k) \le 1$ . 
The  tightest lower bound  within this distance family  is 
 \begin{align} 
 \label{equTightestLower}
     d^*_k=\min_{X_k \in S_k} \left\{\frac{1}{p(X_k, S_{[0,k-1]})} + \sum^{k-1}_{\ell=1}r(X_k,S_{\ell})  d^*_{\ell}\right\}. 
\end{align} 
\end{theorem}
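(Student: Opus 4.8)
The plan is to prove Theorem~\ref{theoremTightestLowerBound1} by a two-part argument: first, that the recursively-defined $d^*_k$ in \eqref{equTightestLower} is a valid lower bound (feasibility), and second, that it dominates every other feasible distance family pointwise (optimality). Feasibility is immediate from Theorem~\ref{theoremLowerBound1}: the family $(d^*_0, \dots, d^*_K)$ with $d^*_0 = 0$ satisfies \eqref{equLowerBound1} with equality at every level, hence $\Delta d^*(X_k) \le 1$ for all $k \ge 1$ and $X_k \in S_k$, so by Lemma~\ref{lemmaLowerBound1} each $d^*_k$ is a genuine lower bound on $m(X_k)$.

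For optimality, I would proceed by induction on $k$ to show that for any feasible family $(d_0, \dots, d_K)$ — meaning $d_0 = 0$, $d(X_k) = d_k$, and $\Delta d(X_k) \le 1$ for all $k \ge 1$, $X_k \in S_k$ — we have $d_k \le d^*_k$ for every $k$. The base case $k = 0$ is trivial since $d_0 = 0 = d^*_0$. For the inductive step, assume $d_\ell \le d^*_\ell$ for all $\ell < k$. By the necessary-condition direction of Theorem~\ref{theoremLowerBound1}, the constraint $\Delta d(X_k) \le 1$ forces
\begin{equation*}
d_k \le \min_{X_k \in S_k} \left\{\frac{1}{p(X_k, S_{[0,k-1]})} + \sum^{k-1}_{\ell=1} r(X_k,S_{\ell}) d_{\ell}\right\}.
\end{equation*}
Since each $r(X_k, S_\ell) \ge 0$ and $d_\ell \le d^*_\ell$ by the inductive hypothesis, replacing $d_\ell$ by $d^*_\ell$ inside the braces can only increase (weakly) the expression being minimized, so the right-hand side is at most the right-hand side of \eqref{equTightestLower}, which is exactly $d^*_k$. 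Hence $d_k \le d^*_k$, completing the induction.

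The only subtle point — and the step I expect to require the most care — is the monotonicity argument that lets us substitute $d^*_\ell$ for $d_\ell$ inside the minimum. This relies on the nonnegativity of the conditional probabilities $r(X_k, S_\ell)$ (so that the map $(d_1,\dots,d_{k-1}) \mapsto \frac{1}{p(X_k,S_{[0,k-1]})} + \sum_\ell r(X_k,S_\ell) d_\ell$ is coordinatewise nondecreasing) and on the fact that a pointwise-dominated family of functions has a pointwise-dominated minimum. Both facts are elementary, so there is no real obstacle here; the argument is a clean structural induction once Theorem~\ref{theoremLowerBound1} is in hand. Finally, combining feasibility and optimality: $d^*_k$ is a lower bound (by part one) and no feasible distance family exceeds it (by part two), so $d^*_k$ is the tightest lower bound within the stated distance family.
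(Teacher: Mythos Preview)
Your proof is correct and follows essentially the same approach as the paper: both invoke Theorem~\ref{theoremLowerBound1} to bound any feasible $d_k$ by the recursive expression and then observe that equality attains the maximum. Your version is in fact more rigorous, since you make explicit the induction on $k$ and the monotonicity in the $d_\ell$'s that the paper's one-line argument (``the above $d_k$ reaches the maximum when the inequality becomes an equality'') leaves implicit.
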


\begin{proof} For $k=1, \cdots, K$, since  $\Delta d(X_k)\le 1$, according to Theorem~\ref{theoremLowerBound1},    we have
 \begin{align*}  
          d_k\le \min_{X_k \in S_k} \left\{\frac{1}{p(X_k, S_{[0,k-1]})} + \sum^{k-1}_{\ell=1}r(X_k,S_{\ell})  d_{\ell}\right\}. 
\end{align*}
The above $d_k$ reaches the maximum when the inequality becomes an equality. 
\end{proof}

Similarly, the tightest upper bound problem   is another constrained multi-objective optimization problem: \begin{equation}
    \label{equ:MOP-upper}    
   \min \{d_k;  \Delta d(X_k) \ge {1}\}, \quad k=1, \cdots, K.
\end{equation}  
{subject to the constrain that $d_0=0$ and for all $k \ge 1$ and $X_k \in S_k$, $d(X_k) =d_k$.} 
According to Theorem~\ref{theoremUpperBound1},  the best upper bound $d^*_k$ by (\ref{equUpperBound1}) is the tightest.   

\begin{theorem}
\label{theoremTightestUpperBound1}  
Given an elitist EA for maximizing $f(x)$, a fitness level partition $(S_0, \cdots, S_K)$,   probabilities $p(X_k, S_{[0,k-1]})$ and  $r(X_k, S_\ell)$ (where $1\le \ell<k\le K$), consider the family of  distances  $ (d_0, \cdots, d_k)$ such that  $d_0=0$ and for $k \ge 1$ and  all $X_k \in S_k$, $d(X_k) =d_k$  and the drift $\Delta d (X_k) \ge 1$.  The tightest upper  bound within the distance family is 
 \begin{align} 
 \label{equTightestUpper}
     d^*_k=\max_{X_k \in S_k} \left\{ \frac{1}{p(X_k, S_{[0,k-1]})} + \sum^{k-1}_{\ell=1} {r(X_k,S_{\ell})}  d^*_{\ell}\right\}.
\end{align} 
\end{theorem}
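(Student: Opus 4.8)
The plan is to mirror the argument already used for Theorem~\ref{theoremTightestLowerBound1}, switching the direction of every inequality. I would start from the hypothesis that $(d_0,\dots,d_K)$ is any distance family in drift analysis with $d_0=0$, $d(X_k)=d_k$ for $X_k\in S_k$, and $\Delta d(X_k)\ge 1$ for all $k\ge 1$. By Lemma~\ref{lemmaUpperBound1} each such $d_k$ is an upper bound on $m(X_k)$, so the tightest one is the smallest feasible value of each $d_k$. Fixing $k$, I would invoke Theorem~\ref{theoremUpperBound1}: since $\Delta d(X_k)\ge 1$ for all $X_k\in S_k$, the drift condition is equivalent to
\begin{align*}
d_k \ge \max_{X_k \in S_k}\left\{\frac{1}{p(X_k,S_{[0,k-1]})} + \sum^{k-1}_{\ell=1} r(X_k,S_{\ell})\, d_{\ell}\right\}.
\end{align*}
Hence the minimum feasible $d_k$, given the values $d_1,\dots,d_{k-1}$, is attained exactly when this inequality is an equality, which is the claimed formula~\eqref{equTightestUpper}.

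The one subtlety beyond the lower-bound case is monotonicity of the recursion in the ``previous'' distances: here the right-hand side of the constraint is nondecreasing in each $d_\ell$ (the coefficients $r(X_k,S_\ell)$ are nonnegative), so minimizing $d_1,\dots,d_{k-1}$ individually also minimizes the lower constraint on $d_k$, and there is no conflict among the $K$ objectives. I would therefore argue by induction on $k$: $d_0^*=0$ is forced; assuming $d_1^*,\dots,d_{k-1}^*$ are simultaneously the smallest feasible values, the smallest feasible $d_k$ is obtained by taking equality in the Theorem~\ref{theoremUpperBound1} characterization with $d_\ell=d_\ell^*$, giving~\eqref{equTightestUpper}. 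This simultaneously shows $(d_0^*,\dots,d_K^*)$ is itself feasible (each $\Delta d(X_k)\ge 1$ holds, again by Theorem~\ref{theoremUpperBound1}) and that it dominates every feasible family from below, i.e. it is the Pareto-optimal point of the constrained multi-objective problem~\eqref{equ:MOP-upper}.

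The main obstacle, such as it is, is purely expository: making precise that ``tightest upper bound within the distance family'' means the componentwise minimum over the feasible set, and that this componentwise minimum actually exists (is feasible) rather than being merely an infimum of an empty or non-attained set. The nonnegativity of $r(X_k,S_\ell)$ and the triangular (lower-indexed) structure of the recursion handle this: the feasible set is nonempty (e.g. the true mean-hitting-time vector, or Proposition~\ref{proposition2}'s bound, satisfies the constraints) and closed under taking the componentwise min, so the minimum is attained and equals the recursively defined $d_k^*$. Given how short this is, I would likely present it exactly as the authors did for the lower bound: one line invoking Theorem~\ref{theoremUpperBound1} and the observation that the infimum is achieved at equality.
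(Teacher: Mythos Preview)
Your proposal is correct and follows essentially the same approach as the paper: invoke Theorem~\ref{theoremUpperBound1} to obtain the constraint $d_k \ge \max_{X_k\in S_k}\{\cdots\}$ and observe that the minimum is achieved at equality, exactly mirroring the proof of Theorem~\ref{theoremTightestLowerBound1}. Your additional remarks on the monotonicity of the right-hand side in $d_\ell$ and the inductive argument make explicit a point the paper's two-line proof leaves implicit, but this is elaboration rather than a different route.
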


\section{Linear bounds based on fitness levels}
\label{secLinear}

\subsection{Linear  bounds}  
Based on metric bounds, we constructs linear bounds~\eqref{equLinearLower} and \eqref{equLinearUpper}.   
The theorem below provides coefficients in the lower  bound (\ref{equLinearLower}). 

\begin{theorem} 
\label{theoremLowerBound2}     Given an elitist EA for maximizing $f(x)$,   a fitness level partition $(S_0, \cdots, S_K)$,   probabilities $p_{{\scriptscriptstyle\max}}(X_{\ell},S_{[0,\ell-1]})$ and  $r(X_k, S_\ell)$  (where $1\le \ell<k\le K$), construct $d_k$  as 
\begin{equation} \label{equLinearLowerBound}   
d_k=\frac{1}{p_{{\scriptscriptstyle\max}}(X_{k},S_{[0,k-1]})} +\sum^{k-1}_{\ell=1} \frac{c_{k,\ell}}{ p_{{\scriptscriptstyle\max}}(X_{\ell},S_{[0,\ell-1]})},
\end{equation}  where coefficients $c_{k,\ell} \in [0,1]$ satisfy 
\begin{equation}
\label{equCoefficientLower}
    c_{k,\ell} \le \min_{X_k \in S_k} \left\{ r(X_k, S_\ell) + \sum^{k-1}_{j=\ell+1}  {r}(X_k, S_j) c_{j,\ell} \right\}.
\end{equation}
Then for any $k >0$ and $X_k \in S_k$,  the mean hitting time   
$ m(X_k) \ge  d_k. $
\end{theorem}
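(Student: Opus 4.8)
The plan is to deduce the statement from the machinery already in place rather than re-running drift analysis from scratch: I will show that the linear distances $d_k$ defined in \eqref{equLinearLowerBound} satisfy the recursive metric inequality \eqref{equLowerBound1} of Theorem~\ref{theoremLowerBound1}. Once that is established, Theorem~\ref{theoremLowerBound1} gives $\Delta d(X_k)\le 1$ for every $k\ge 1$ and every $X_k\in S_k$, and Lemma~\ref{lemmaLowerBound1} then yields $m(X_k)\ge d(X_k)=d_k$ at once. Before that, one should note that $d$ is an admissible distance: $d_0=0$ by convention, and since each $c_{k,\ell}\in[0,1]\ge 0$ and all the transition probabilities involved are positive, $d_k\ge 0$ for all $k$.

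To verify \eqref{equLowerBound1}, I fix an arbitrary $X_k\in S_k$ and aim to prove
\[
d_k \;\le\; \frac{1}{p(X_k,S_{[0,k-1]})} + \sum_{\ell=1}^{k-1} r(X_k,S_\ell)\,d_\ell ,
\]
since taking the minimum over $X_k\in S_k$ of the right-hand side reproduces \eqref{equLowerBound1} verbatim. The leading term is immediate: $p_{{\scriptscriptstyle\max}}(X_{k},S_{[0,k-1]})\ge p(X_k,S_{[0,k-1]})$, hence $1/p_{{\scriptscriptstyle\max}}(X_{k},S_{[0,k-1]})\le 1/p(X_k,S_{[0,k-1]})$. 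For the remaining terms I substitute the closed form \eqref{equLinearLowerBound} for each $d_\ell$ inside $\sum_{\ell=1}^{k-1} r(X_k,S_\ell)d_\ell$, producing a single sum $\sum_{\ell=1}^{k-1} r(X_k,S_\ell)/p_{{\scriptscriptstyle\max}}(X_{\ell},S_{[0,\ell-1]})$ plus a double sum $\sum_{\ell=1}^{k-1}\sum_{j=1}^{\ell-1} r(X_k,S_\ell)\,c_{\ell,j}/p_{{\scriptscriptstyle\max}}(X_{j},S_{[0,j-1]})$. Interchanging the order of summation in the double sum (with $j$ as the outer index, $j$ from $1$ to $k-2$ and $\ell$ from $j+1$ to $k-1$) and relabelling the dummy indices collects everything into
\[
\sum_{\ell=1}^{k-1}\frac{1}{p_{{\scriptscriptstyle\max}}(X_{\ell},S_{[0,\ell-1]})}\Bigl(r(X_k,S_\ell)+\sum_{j=\ell+1}^{k-1} r(X_k,S_j)\,c_{j,\ell}\Bigr).
\]
By hypothesis \eqref{equCoefficientLower}, the parenthesised bracket is $\ge c_{k,\ell}$ for this particular $X_k$; since every factor $1/p_{{\scriptscriptstyle\max}}(X_{\ell},S_{[0,\ell-1]})$ is nonnegative, this sum dominates $\sum_{\ell=1}^{k-1} c_{k,\ell}/p_{{\scriptscriptstyle\max}}(X_{\ell},S_{[0,\ell-1]})$, which is exactly the tail of $d_k$. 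Adding back the leading-term estimate finishes the pointwise comparison, and hence \eqref{equLowerBound1}.

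I expect the only genuinely delicate step to be the interchange of summation and the bookkeeping around it: checking that the inner sum is empty (hence harmless) at $\ell=k-1$, that the range reindexing is exact, and — most importantly — that every coefficient $c_{j,\ell}$ invoked in the collected expression has $j<k$, so that the constraint \eqref{equCoefficientLower} for $c_{k,\ell}$ really does bound the bracket without any circular dependence. No induction on $k$ is needed in the proof itself, because \eqref{equLinearLowerBound} is a closed-form definition of each $d_\ell$; the recursive structure lives entirely in how \eqref{equCoefficientLower} admissibly determines the coefficients, and that is part of the hypothesis. Everything else (monotonicity of $t\mapsto 1/t$ on positive reals, nonnegativity of the coefficients, and the final appeals to Theorem~\ref{theoremLowerBound1} and Lemma~\ref{lemmaLowerBound1}) is routine.
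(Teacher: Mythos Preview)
Your proposal is correct and follows essentially the same approach as the paper: both arguments substitute the linear form \eqref{equLinearLowerBound} into the sum $\sum_{\ell} r(X_k,S_\ell)d_\ell$, interchange the order of the resulting double summation (exactly as in the paper's display \eqref{equFact2}), and then invoke the coefficient condition \eqref{equCoefficientLower} together with $p\le p_{\scriptscriptstyle\max}$ to finish. The only cosmetic difference is that you route the conclusion through Theorem~\ref{theoremLowerBound1} (verifying the metric inequality \eqref{equLowerBound1}) before appealing to Lemma~\ref{lemmaLowerBound1}, whereas the paper computes the drift $\Delta d(X_k)$ directly; since Theorem~\ref{theoremLowerBound1} states that \eqref{equLowerBound1} is equivalent to $\Delta d(X_k)\le 1$, the two presentations are the same computation in slightly different packaging.
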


\begin{proof}  According to Lemma~\ref{lemmaLowerBound1}, it is sufficient   to prove that for any $k\ge 1$ and $X_k \in S_k$, the drift $\Delta d(X_k) \le 1$.
Since the EA is elitist,   from (\ref{equ:ElististDrift}), we know  
 \begin{equation*}  \label{equDrift2}
 \begin{split}
  \Delta d(X_k) =&     p(X_k,S_{[0,k-1]})d_k -  \sum^{k-1}_{\ell=1} p(X_k,S_{\ell}) d_{\ell} = p (X_k,S_{[0,k-1]})\left(d_k -  \sum^{k-1}_{\ell=1}  r(X_k,S_{\ell}) d_{\ell}\right).
 \end{split}
 \end{equation*} 
 We replace  $d_k$ and $d_\ell$ with (\ref{equLinearLowerBound}) and get  
\begin{equation} \label{equFact1}
    \begin{split}
      \Delta d(X_k) = &    p (X_k,S_{[0,k-1]}) \left[\frac{1}{p_{{\scriptscriptstyle\max}}(X_{k},S_{[0,k-1]})} +\sum^{k-1}_{\ell=1} \frac{c_{k,\ell}}{ p_{{\scriptscriptstyle\max}}(X_{\ell},S_{[0,\ell-1]})}\right.\\
      &\left.-  \sum^{k-1}_{\ell=1} r (X_k,S_{\ell}) \left(\frac{1}{p_{{\scriptscriptstyle\max}}(X_{\ell},S_{[0,\ell-1]})} +\sum^{\ell-1}_{j=1} \frac{c_{\ell,j}}{ p_{{\scriptscriptstyle\max}}(X_{j},S_{[0,j-1]})}\right) \right]. 
    \end{split}  
\end{equation}

In the double summation  $\sum^{k-1}_{\ell=1}\sum^{\ell-1}_{j=1}$, {the  first term for $\ell=1$  is empty because $\ell-1=0<j=1$,}  but it is kept for the sake of notation. We expand this double summation 
 and then merge the same term  $1/p_{{\scriptscriptstyle\max}}(X_{\ell},S_{[0,\ell-1]})$ (where $\ell=1, \cdots, k$) as follows.
\begin{equation}\label{equFact2}
    \begin{split}
 &\sum^{k-1}_{\ell=1}   r(X_k, S_{\ell})  \sum^{\ell-1}_{j=1} \frac{c_{\ell,j}}{ p_{{\scriptscriptstyle\max}}(X_{j},S_{[0,j-1]})} = \sum^{k-1}_{\ell=2}    \sum^{\ell-1}_{j=1} \frac{r(X_k, S_{\ell}) c_{\ell,j}}{ p_{{\scriptscriptstyle\max}}(X_{j},S_{[0,j-1]})}   \\
= &  \frac{r (X_k, S_2)c_{2,1}}{ p_{{\scriptscriptstyle\max}}(X_{1},S_{0})} +  \cdots +\left( \frac{r (X_k, S_{k-1}) c_{k-1,1}}{ p_{{\scriptscriptstyle\max}}(X_{1},S_{0})} + \frac{r (X_k, S_{k-1}) c_{{k-1},{k-2}}}{ p_{{\scriptscriptstyle\max}}(X_{k-2},S_{[0,k-3]})} \right)  \\
=&  \frac{\sum^{k-1}_{j=2} r (X_k, S_j) c_{j,1}}{ p_{{\scriptscriptstyle\max}}(X_{1},S_{0})} +   \cdots +  \frac{\sum^{k-1}_{j=k-1} r (X_k, S_j)c_{j,k-2}}{ p_{{\scriptscriptstyle\max}}(X_{k-2},S_{[0,k-3]})}   \\
= &\sum^{k-2}_{\ell=1}  \frac{\sum^{k-1}_{j=\ell+1}r (X_k, S_j) c_{j,\ell}}{p_{{\scriptscriptstyle\max}}(X_{\ell},S_{[0,\ell-1]})}=\sum^{k-1}_{\ell=1}  \frac{\sum^{k-1}_{j=\ell+1}r (X_k, S_j) c_{j,\ell}}{p_{{\scriptscriptstyle\max}}(X_{\ell},S_{[0,\ell-1]})}.       
    \end{split} 
\end{equation}  
In the double summation $\sum^{k-1}_{\ell=1}\sum^{k-1}_{j=\ell+1}$, the last term for $\ell=k-1$   is empty because $k-1 >\ell+1=k$, but it is added for the sake of notation.  Inserting \eqref{equFact2} into \eqref{equFact1}, we have
\begin{equation}
\label{equ:drift4a}
\begin{split}
      \Delta d(X_k) \le &    p (X_k,S_{[0,k-1]}) \left(\frac{1}{p_{{\scriptscriptstyle\max}}(X_{k},S_{[0,k-1]})} +   \sum^{k-1}_{\ell=1} \frac{c_{k,\ell}}{ p_{{\scriptscriptstyle\max}}(X_{\ell},S_{[0,\ell-1]})}\right.\\
      &\left.-   \sum^{k-1}_{\ell=1}   \frac{r (X_k, S_{\ell})  + \sum^{k-1}_{j=\ell+1} r (X_k, S_j) c_{j,\ell}}{p_{{\scriptscriptstyle\max}}(X_{\ell},S_{[0,\ell-1]})}  \right).  
\end{split}
\end{equation}

According to Condition~(\ref{equCoefficientLower}), 
$c_{k,\ell} \le   r(X_k, S_\ell) + \sum^{k-1}_{j=\ell+1}  {r}(X_k, S_j) c_{j,\ell}$. Inserting it to \eqref{equ:drift4a}, 
we get $\Delta d(X_k) \le 1$ and complete the proof.
\end{proof}

Similarly, 
the theorem below provides coefficients in the upper bound~\eqref{equLinearUpper}.    Its proof is  similar to Theorem~\ref{theoremLowerBound2}.  

\begin{theorem} 
\label{theoremUpperBound2} 
Given an elitist EA for maximizing $f(x)$,     a fitness level partition $(S_0, \cdots, S_K)$, probabilities $p_{{\scriptscriptstyle\min}}(X_{\ell},S_{[0,\ell-1]})$  and  $r(X_k, S_\ell)$   where $1\le \ell<k\le K$, construct $d_k$  as 
\begin{equation}
\label{equLinearUpperBound} 
 d_k =  \frac{1}{p_{{\scriptscriptstyle\min}}(X_{k},S_{[0,k-1]})} +\sum^{k-1}_{\ell=1} \frac{c_{k,\ell}}{ p_{{\scriptscriptstyle\min}}(X_{\ell},S_{[0,\ell-1]})}.
\end{equation}  where coefficients $c_{k,\ell} \in [0,1]$ satisfy 
\begin{equation}
\label{equCoefficientUpper}
    c_{k,\ell} \ge \max_{X_k \in S_k} \left\{ r(X_k, S_\ell) + \sum^{k-1}_{j=\ell+1}  {r}(X_k, S_j) c_{j,\ell} \right\}.
\end{equation}
Then for any $k >0$ and $X_k \in S_k$,  the mean hitting time  
$ m(X_k) \ge  d_k. $
\end{theorem}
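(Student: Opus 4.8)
The plan is to run the argument of Theorem~\ref{theoremLowerBound2} with every inequality reversed. By Lemma~\ref{lemmaUpperBound1} it suffices to prove that $\Delta d(X_k) \ge 1$ for every $k \ge 1$ and every $X_k \in S_k$; the stated conclusion (that $d_k$ is an upper time bound on $m(X_k)$) is then immediate. First I would invoke elitism: since the EA is elitist, \eqref{equ:ElististDrift} gives
\[
\Delta d(X_k) = p(X_k, S_{[0,k-1]})\left(d_k - \sum_{\ell=1}^{k-1} r(X_k, S_\ell)\, d_\ell\right),
\]
and I would substitute the explicit linear form \eqref{equLinearUpperBound} for $d_k$ and for every $d_\ell$. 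As in \eqref{equFact1} this yields a bracketed expression consisting of the single term $1/p_{{\scriptscriptstyle\min}}(X_k, S_{[0,k-1]})$, the single sum $\sum_{\ell=1}^{k-1} c_{k,\ell}/p_{{\scriptscriptstyle\min}}(X_\ell, S_{[0,\ell-1]})$, and a double sum inherited from the nested coefficients of the $d_\ell$'s.

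Next I would perform the same reindexing of the double sum as in \eqref{equFact2} --- a purely algebraic rearrangement, indifferent to the direction of any inequality --- which collects, for each fixed $\ell$, the factor $1/p_{{\scriptscriptstyle\min}}(X_\ell, S_{[0,\ell-1]})$ times $\sum_{j=\ell+1}^{k-1} r(X_k, S_j)\, c_{j,\ell}$. The drift then reads
\[
\Delta d(X_k) = p(X_k, S_{[0,k-1]})\left(\frac{1}{p_{{\scriptscriptstyle\min}}(X_k, S_{[0,k-1]})} + \sum_{\ell=1}^{k-1} \frac{c_{k,\ell} - r(X_k, S_\ell) - \sum_{j=\ell+1}^{k-1} r(X_k, S_j)\, c_{j,\ell}}{p_{{\scriptscriptstyle\min}}(X_\ell, S_{[0,\ell-1]})}\right).
\]
Hypothesis \eqref{equCoefficientUpper} states exactly that $c_{k,\ell} \ge r(X_k, S_\ell) + \sum_{j=\ell+1}^{k-1} r(X_k, S_j)\, c_{j,\ell}$, so every numerator in the sum is nonnegative and, since each $1/p_{{\scriptscriptstyle\min}}(X_\ell, S_{[0,\ell-1]})$ is positive, the whole sum is $\ge 0$. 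Dropping it and using $p(X_k, S_{[0,k-1]}) \ge p_{{\scriptscriptstyle\min}}(X_k, S_{[0,k-1]})$ gives $\Delta d(X_k) \ge p(X_k, S_{[0,k-1]})/p_{{\scriptscriptstyle\min}}(X_k, S_{[0,k-1]}) \ge 1$, which is what Lemma~\ref{lemmaUpperBound1} needs.

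I do not expect a genuine obstacle, since the statement is the mirror image of Theorem~\ref{theoremLowerBound2}; the only part that needs care is the bookkeeping of the double summation in the analogue of \eqref{equFact2} --- keeping track of the empty boundary terms (the inner sum of $d_1$, and the $\ell = k-1$ term) and checking that the final inequality is driven solely by \eqref{equCoefficientUpper} together with $p(X_k, S_{[0,k-1]}) \ge p_{{\scriptscriptstyle\min}}(X_k, S_{[0,k-1]})$. The restriction $c_{k,\ell} \in [0,1]$ plays no role in the drift computation itself; it only guarantees that the constraints \eqref{equCoefficientUpper} are feasible (one checks $\sum_{j=\ell}^{k-1} r(X_k,S_j) \le 1$, so $c_{k,\ell}=1$ is always admissible) and that $d_k$ is a genuine nonnegative distance.
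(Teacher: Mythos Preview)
Your proposal is correct and is exactly the argument the paper intends: the paper omits the proof and states that it is similar to that of Theorem~\ref{theoremLowerBound2}, i.e., the same drift computation with \eqref{equLinearUpperBound} in place of \eqref{equLinearLowerBound}, the identical reindexing \eqref{equFact2}, and the inequality reversed via \eqref{equCoefficientUpper} together with $p(X_k,S_{[0,k-1]})\ge p_{{\scriptscriptstyle\min}}(X_k,S_{[0,k-1]})$. Your observation that the conclusion should read $m(X_k)\le d_k$ (an upper bound) rather than $m(X_k)\ge d_k$ is also correct; this is a typo in the statement.
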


The best upper bound $d^*_k$  and its coefficients $c^*_{k,\ell}$ are reached when Inequality (\ref{equCoefficientUpper}) becomes an equality. The best lower bound $d^*_k$  and its coefficients $c^*_{k,\ell}$ are reached when Inequality (\ref{equCoefficientLower}) becomes an equality.   However, it still needs to be rigorously proven under which family of bounds the best  bound is the tightest.

There are three different ways to calculate coefficients $c_{k,\ell}$  by solving Inequality (\ref{equCoefficientLower}) or (\ref{equCoefficientUpper}).
\begin{enumerate}
    \item find an explicit expression of   $c_{k,\ell}$ from  (\ref{equCoefficientLower}) or (\ref{equCoefficientUpper}); 
    \item  recursively calculate  $c_{\ell+1,\ell}$, $c_{\ell+2,\ell}$, $\cdots$, $c_{k,\ell}$ by (\ref{equCoefficientLower}) or (\ref{equCoefficientUpper}); 
    \item combine the above two ways together, that is, for some $c_{k,\ell}$, use recursive calculations; but for other $c_{k,\ell}$, use an explicit expression. 
\end{enumerate}

 
\subsection{Explicit expressions for linear bound coefficients}
An explicit expression of coefficients $c_{k,\ell}$ is more convenient in application.
From \eqref{equCoefficientLower}, by induction, it is straightforward to obtain such an explicit  expression for  lower bound coefficients as follows. 
\begin{equation}
\label{equ:NonrecursiveLow}
\begin{split}
  c_{k,\ell}  \le & 
       r_{{\scriptscriptstyle\min}}(X_k, S_{\ell}) + \sum_{ \ell<j_1<k} r_{{\scriptscriptstyle\min}}(X_k, S_{j_1})\, r_{{\scriptscriptstyle\min}}(X_{j_1}, S_{\ell})   
+\cdots  \\
&+ \sum_{ \ell<j_{k-\ell-1}<\cdots<j_1<k} \, r_{{\scriptscriptstyle\min}}(X_k, S_{j_1}) \, r_{{\scriptscriptstyle\min}}(X_{j_1}, S_{j_2}) \cdots  r_{{\scriptscriptstyle\min}}(X_{j_{k-\ell-1}}, S_{\ell}).  
\end{split}
\end{equation}  

Inequality \eqref{equ:NonrecursiveLow}  provides an intuitive interpretation of the coefficient $c_{k,\ell}$. Each product in \eqref{equ:NonrecursiveLow}   can be interpreted as a conditional  probability of the EA  starting from $X_k$ to visit $S_\ell$.  The coefficient $c_{k,\ell}$ is a lower bound on the sum of these probabilities.

Similarly, from \eqref{equCoefficientUpper}, by induction, it is straightforward to obtain an explicit expression for upper bound coefficients  as follows.  
\begin{equation}
\label{equ:NonrecursiveUp}
\begin{split}
 c_{k,\ell}   \ge &       r_{{\scriptscriptstyle\max}}(X_k, S_{\ell}) + \sum_{ \ell<j_1<k} r_{{\scriptscriptstyle\max}}(X_k, S_{j_1}) \, r_{{\scriptscriptstyle\max}}(X_{j_1}, S_{\ell})    
+\cdots  \\
     &+ \sum_{ \ell<j_{k-\ell-1}<\cdots<j_1<k} r_{{\scriptscriptstyle\max}}(X_k, S_{j_1}) \, r_{{\scriptscriptstyle\max}}(X_{j_1}, S_{j_2}) \cdots  r_{{\scriptscriptstyle\max}}(X_{j_{k-\ell-1}}, S_{\ell}).  
\end{split}
\end{equation}

The number of summation terms in \eqref{equ:NonrecursiveLow} and \eqref{equ:NonrecursiveUp} is up to $(k-\ell-1)!$. Therefore, it is intractable to calculate coefficients by \eqref{equ:NonrecursiveLow} and \eqref{equ:NonrecursiveUp}. But there are many ways to construct explicit expressions that can be calculated in polynomial time. For example,  for the lower bound,  a simple  expression from \eqref{equ:NonrecursiveLow} is  $c_{k,\ell}  \le   r_{{\scriptscriptstyle\min}}(X_k, S_{\ell})$.  Recently \cite{he2023fast} propose a simplified version of \eqref{equ:NonrecursiveLow} as shown below, which can be used to obtain tight lower bounds on  fitness landscapes with  shortcuts. 
 \begin{equation} 
 \label{equ:Simple-lower}
c_{k,l}\le \prod_{i \in [\ell+1,k]}   r_{{\scriptscriptstyle\min}}(X_i,S_{[\ell,i-1]}).  
\end{equation}  
For the upper bound,  an explicit  expression from \eqref{equ:NonrecursiveUp} is  $c_{k,\ell}  \ge   r_{{\scriptscriptstyle\max}}(X_k, S_{[\ell,k-1]})$.
 
Another simple way is to assign $c_{k, \ell} =0,1,$ $ c,$ or $c_\ell$.   Although Type-$0,1$, $c$ and $c_{\ell}$ bounds have been studied in~\citep{wegener2003methods,sudholt2012new,doerr2021lower},  our proof is  completely different. Furthermore,  our Type-$c$ upper and Type-$c_{\ell}$ lower bounds require  weaker conditions. Hence, they are not exactly the same as  those in \citep{sudholt2012new,doerr2021lower}.  

Let $c_{k,\ell}=0$, then  the linear lower bound (\ref{equLinearLowerBound}) becomes the same Type-$0$ lower bound  as Proposition~\ref{proposition1}.   

\begin{corollary}[Type-$0$ lower bound]  
\label{corollaryLowerBound-0} 
For $1\le \ell <k\le K$,
choose $c_{k,\ell} = 0$, then   
$ m(X_k) \ge \frac{1}{p_{{\scriptscriptstyle\max}}(X_{k},S_{[0,k-1]})}. 
$  
\end{corollary}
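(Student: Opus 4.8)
The plan is to derive Corollary~\ref{corollaryLowerBound-0} as an immediate specialization of Theorem~\ref{theoremLowerBound2}. The only work required is to verify that the choice $c_{k,\ell}=0$ for all $1\le \ell<k\le K$ is a legitimate choice, i.e.\ that it satisfies the hypothesis~\eqref{equCoefficientLower} of that theorem, and then to read off what the resulting bound~\eqref{equLinearLowerBound} says.

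First I would check membership in the admissible range: the theorem requires $c_{k,\ell}\in[0,1]$, and $0$ trivially lies in $[0,1]$. Next I would verify the defining inequality~\eqref{equCoefficientLower}. With every $c_{j,\ell}=0$, the right-hand side of~\eqref{equCoefficientLower} reduces to $\min_{X_k\in S_k}\{r(X_k,S_\ell)+\sum_{j=\ell+1}^{k-1}r(X_k,S_j)\cdot 0\}=\min_{X_k\in S_k} r(X_k,S_\ell)$, which is a minimum of nonnegative quantities, hence $\ge 0=c_{k,\ell}$. So~\eqref{equCoefficientLower} holds. I would carry out this verification by induction on $\ell+1,\ell+2,\dots$ only if one wants to be scrupulous, but since all coefficients are set to the same constant $0$ it is really just the one-line observation above.

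With the hypotheses of Theorem~\ref{theoremLowerBound2} satisfied, its conclusion gives $m(X_k)\ge d_k$ where, substituting $c_{k,\ell}=0$ into~\eqref{equLinearLowerBound}, the sum $\sum_{\ell=1}^{k-1} c_{k,\ell}/p_{{\scriptscriptstyle\max}}(X_\ell,S_{[0,\ell-1]})$ vanishes term by term, leaving $d_k=1/p_{{\scriptscriptstyle\max}}(X_k,S_{[0,k-1]})$. This is exactly the asserted bound, and it coincides with Proposition~\ref{proposition1}, confirming the claim made in the text that the Type-$0$ lower bound recovers Wegener's original lower bound.

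Honestly, there is no real obstacle here — the statement is a corollary in the strict sense, obtained by plugging a constant into an already-proved theorem. The only thing worth a moment's care is making sure the edge cases $k=1$ (where the inner sum is empty and the claim is just $m(X_1)\ge 1/p_{{\scriptscriptstyle\max}}(X_1,S_0)$) and the empty inner sum in~\eqref{equCoefficientLower} when $j$ ranges over an empty set are handled consistently with the conventions already adopted in the proof of Theorem~\ref{theoremLowerBound2}; both are benign.
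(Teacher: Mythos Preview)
Your proposal is correct and follows exactly the approach the paper takes: the corollary is stated immediately after the sentence ``Let $c_{k,\ell}=0$, then the linear lower bound (\ref{equLinearLowerBound}) becomes the same Type-$0$ lower bound as Proposition~\ref{proposition1},'' with no separate proof given, so it is understood as the trivial specialization of Theorem~\ref{theoremLowerBound2}. Your verification that $c_{k,\ell}=0$ satisfies \eqref{equCoefficientLower} and that the sum in \eqref{equLinearLowerBound} vanishes is precisely what is implicit in the paper.
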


Let $c_{k,\ell}=c$, then the linear lower bound (\ref{equLinearLowerBound}) becomes a Type-$c$ lower bound.

\begin{corollary}[Type-$c$ lower bound]  
\label{corollaryLowerBound-c} 
For $1\le \ell <k\le K$,
choose  $c_{k,\ell}=c$ to satisfy the inequality
\begin{equation} 
\label{equLowerCoeff-c} 
c \le \min_{1 < k\le K} \min_{1\le \ell<k}  \min_{X_k: p(X_k, S_{[0,\ell]})>0} \frac{p(X_k, S_{\ell})}{p(X_k, S_{[0,\ell]})}.
\end{equation}  
 Then   
 $ 
m(X_k) \ge \frac{1}{p_{{\scriptscriptstyle\max}}(X_{k},S_{[0,k-1]})} +\sum^{k-1}_{\ell=1} \frac{c }{ p_{{\scriptscriptstyle\max}}(X_{\ell},S_{[0,\ell-1]})}  . 
$   
\end{corollary}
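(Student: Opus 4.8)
The plan is to deduce this corollary as a direct special case of Theorem~\ref{theoremLowerBound2} by verifying that the constant choice $c_{k,\ell}=c$ satisfies the coefficient inequality~\eqref{equCoefficientLower}. First I would fix any $k$ with $1 < k \le K$, any $\ell$ with $1\le \ell < k$, and any $X_k \in S_k$. The goal is to show
\begin{equation*}
c \le r(X_k, S_\ell) + \sum^{k-1}_{j=\ell+1} r(X_k, S_j)\, c_{j,\ell},
\end{equation*}
where all the lower-index coefficients $c_{j,\ell}$ have already been set to $c$. So the right-hand side equals $r(X_k,S_\ell) + c\sum^{k-1}_{j=\ell+1} r(X_k,S_j)$. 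Recalling the definition $r(X_k,S_j) = p(X_k,S_j)/p(X_k,S_{[0,k-1]})$, this is $\bigl(p(X_k,S_\ell) + c\sum^{k-1}_{j=\ell+1} p(X_k,S_j)\bigr)/p(X_k,S_{[0,k-1]})$.

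Next I would bound the partial sum $\sum^{k-1}_{j=\ell+1} p(X_k,S_j)$ from below by $0$ in the worst case, but that is not quite enough on its own — I actually want to use the hypothesis~\eqref{equLowerCoeff-c}, which gives $c \le p(X_k,S_\ell)/p(X_k,S_{[0,\ell]})$ whenever $p(X_k,S_{[0,\ell]})>0$. The cleanest route: show $c\,p(X_k,S_{[0,k-1]}) \le p(X_k,S_\ell) + c\sum^{k-1}_{j=\ell+1} p(X_k,S_j)$, i.e. $c\bigl(p(X_k,S_{[0,k-1]}) - \sum^{k-1}_{j=\ell+1} p(X_k,S_j)\bigr) \le p(X_k,S_\ell)$. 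The left-hand parenthesis is exactly $p(X_k, S_{[0,\ell]})$, since $S_{[0,k-1]}$ splits as $S_{[0,\ell]} \cup S_{\ell+1}\cup\cdots\cup S_{k-1}$. So the inequality reduces to $c\,p(X_k,S_{[0,\ell]}) \le p(X_k,S_\ell)$, which is precisely~\eqref{equLowerCoeff-c} when $p(X_k,S_{[0,\ell]})>0$, and is trivial (both sides zero, since $p(X_k,S_\ell)\le p(X_k,S_{[0,\ell]})=0$) otherwise. Dividing through by $p(X_k,S_{[0,k-1]})$ (which is positive whenever $d_k$ is meaningfully defined; the degenerate case $p(X_k,S_{[0,k-1]})=0$ corresponds to $d_k=\infty$ and is handled separately or excluded) recovers~\eqref{equCoefficientLower}. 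I would also note $c\in[0,1]$ as required, since the right side of~\eqref{equLowerCoeff-c} is a ratio of a probability to a larger probability, hence in $[0,1]$, and $c\ge 0$ by the implicit nonnegativity.

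With~\eqref{equCoefficientLower} verified and $c_{k,k}=1$, Theorem~\ref{theoremLowerBound2} immediately yields $m(X_k)\ge d_k$ with $d_k$ as in the displayed formula. I do not anticipate a serious obstacle here; the only subtle point is the bookkeeping identity $p(X_k,S_{[0,k-1]}) - \sum^{k-1}_{j=\ell+1} p(X_k,S_j) = p(X_k,S_{[0,\ell]})$ and making sure the empty-sum conventions (when $\ell = k-1$, the sum over $j$ is empty and the identity still holds) and the $p(X_k,S_{[0,\ell]})=0$ edge case are addressed cleanly. Everything else is a one-line substitution into the already-proven theorem.
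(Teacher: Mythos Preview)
Your proposal is correct and follows essentially the same route as the paper's proof: verify the coefficient condition~\eqref{equCoefficientLower} of Theorem~\ref{theoremLowerBound2} for the constant choice $c_{k,\ell}=c$ by reducing it, via the identity $p(X_k,S_{[0,k-1]}) - \sum_{j=\ell+1}^{k-1} p(X_k,S_j) = p(X_k,S_{[0,\ell]})$, to the hypothesis $c\,p(X_k,S_{[0,\ell]}) \le p(X_k,S_\ell)$, handling the degenerate case $p(X_k,S_{[0,\ell]})=0$ separately. The paper carries out the same manipulation in terms of the conditional probabilities $r$ rather than $p$, but the algebra and the structure of the argument are identical.
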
     

\begin{proof} Condition (\ref{equLowerCoeff-c}) is equivalent to that for any $1\le \ell <k\le K$ and $X_k\in S_k$ such that $p(X_k, S_{[0,\ell-1]})>0$,  it holds
\begin{align} 
&c \le \frac{p(X_k, S_{\ell})}{p(X_k, S_{[0,\ell]})}=  \frac{r(X_k, S_{\ell})}{r(X_k, S_{[0,\ell]})}.\nonumber\\
&c\, r(X_k, S_{[0,\ell]}) =c\, (1-r(X_k, S_{[\ell+1,k-1]})) \le   r(X_k, S_{\ell}).  \label{equLowerCoeff-c-1c}\\
&c \le  r(X_k, S_{\ell}) + c  \,  r(X_k, S_{[\ell+1,k-1]}). 
\label{equLowerCoeff-c-1a} 
\end{align} 

For any $1\le \ell <k\le K$ and $X_k\in S_k$ such that $p(X_k, S_{[0,\ell-1]})=0$, we have $r(X_k, S_{[0,\ell]})=0$  and $r(X_k, S_{[\ell+1,k-1]})=1$. Thus we get an identical equation:  
\begin{align} 
c =  r(X_k, S_{\ell}) + c   \, r(X_k, S_{[\ell+1,k-1]})=0+c. 
\label{equLowerCoeff-c-1b} 
\end{align} 

Combining \eqref{equLowerCoeff-c-1a} and \eqref{equLowerCoeff-c-1b}, we get Condition \eqref{equCoefficientLower}, that is  for all $X_k \in S_k$,
\begin{equation*} 
    c \le   r (X_k, S_\ell) + c \sum^{k-1}_{j=\ell+1}  r(X_k, S_{\ell}) .
\end{equation*} The above inequality is true for the minimum over all $X_k$. The corollary is derived from Theorem~\ref{theoremLowerBound2}.
\end{proof}

Corollary~\ref{corollaryLowerBound-c}  provides  an interpretation of the coefficient $c$ which is a lower bound on the conditional probability (\ref{equLowerCoeff-c}).   Corollary~\ref{corollaryLowerBound-c} is more convenient than Proposition~\ref{proposition3} because   the coefficient $c$ is calculated directly from probabilities ${p(X_k, S_{\ell})}$ and $ {p(X_k, S_{[0,\ell]})}$.  Inequality (\ref{equLowerCoeff-c-1c}) is  equivalent to  the inequality  ${r_{k,\ell}} \ge c {\sum^{\ell}_{j=0} r_{k,j}}$ in Proposition~\ref{proposition3} under different representations, so  Corollary~\ref{corollaryLowerBound-c} is equivalent to  Proposition~\ref{proposition3}.

Let   $c_{k,\ell}=c_{\ell}$, then   the linear lower bound (\ref{equLinearLowerBound}) becomes a Type-$c_\ell$ lower bound. The proof of Corollary~\ref{corollaryLowerBound-cl} is similar to Corollary~\ref{corollaryLowerBound-c} so we omit its proof.

\begin{corollary}[Type-$c_\ell$ lower bound]  
\label{corollaryLowerBound-cl} 
For $1\le \ell <k\le K$,
choose  $c_{k,\ell}=c_\ell$ to satisfy the inequality
\begin{equation} 
\label{equLowerCoeff-cl} 
    c_{\ell} \le \min_{\ell < k\le K}     \min_{X_k: p(X_k, S_{[0,\ell]})>0} \frac{p(X_k, S_{\ell})}{p(X_k, S_{[0,\ell]})}.
\end{equation}  
 Then   
 $ 
m(X_k) \ge \frac{1}{p_{{\scriptscriptstyle\max}}(X_{k},S_{[0,k-1]})} +\sum^{k-1}_{\ell=1} \frac{c_\ell }{ p_{{\scriptscriptstyle\max}}(X_{\ell},S_{[0,\ell-1]})}. 
$  
\end{corollary}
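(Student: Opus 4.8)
The plan is to deduce the corollary from Theorem~\ref{theoremLowerBound2} by checking that the choice $c_{k,\ell}=c_\ell$ (constant in $k$) satisfies the coefficient inequality~\eqref{equCoefficientLower}; this is exactly the route used for Corollary~\ref{corollaryLowerBound-c}, the only change being that the single viscosity $c$ is replaced by the level-dependent $c_\ell$, and the outer minimisation over $k$ in~\eqref{equLowerCoeff-c} is dropped in~\eqref{equLowerCoeff-cl}. First I would note that $c_\ell\in[0,1]$ is automatic, since $c_\ell$ is bounded above by a conditional probability. Then, substituting $c_{j,\ell}=c_\ell$ into the right-hand side of~\eqref{equCoefficientLower} and pulling $c_\ell$ out of the sum, the condition to be verified collapses to
\begin{align*}
c_\ell \;\le\; r(X_k,S_\ell) + c_\ell \sum_{j=\ell+1}^{k-1} r(X_k,S_j) \;=\; r(X_k,S_\ell) + c_\ell\, r(X_k,S_{[\ell+1,k-1]})
\end{align*}
for every $1\le\ell<k\le K$ and every $X_k\in S_k$.

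Next I would translate the hypothesis~\eqref{equLowerCoeff-cl} into the $r$-notation: dividing numerator and denominator of $p(X_k,S_\ell)/p(X_k,S_{[0,\ell]})$ by $p(X_k,S_{[0,k-1]})$ shows that~\eqref{equLowerCoeff-cl} says exactly $c_\ell\le r(X_k,S_\ell)/r(X_k,S_{[0,\ell]})$ for all $X_k\in S_k$ with $p(X_k,S_{[0,\ell]})>0$. Since the EA is elitist, from $X_k\in S_k$ it can only move within $S_{[0,k-1]}$, so $r(X_k,S_{[0,\ell]})+r(X_k,S_{[\ell+1,k-1]})=1$; clearing the denominator and rearranging then yields precisely the displayed inequality above, mirroring the chain~\eqref{equLowerCoeff-c-1c}--\eqref{equLowerCoeff-c-1a} in the proof of Corollary~\ref{corollaryLowerBound-c}.

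The only point requiring care — and the only place the argument is not purely mechanical — is the degenerate case $p(X_k,S_{[0,\ell]})=0$, which is excluded from hypothesis~\eqref{equLowerCoeff-cl} but must still be covered so that~\eqref{equCoefficientLower} holds for \emph{all} $X_k\in S_k$. Here $r(X_k,S_\ell)=0$ and $r(X_k,S_{[\ell+1,k-1]})=1$, so the required inequality degenerates to the identity $c_\ell = 0 + c_\ell$ and holds trivially, exactly as in~\eqref{equLowerCoeff-c-1b}. Combining the two cases and taking the minimum over $X_k\in S_k$ establishes~\eqref{equCoefficientLower}, whereupon Theorem~\ref{theoremLowerBound2} delivers $m(X_k)\ge d_k$ with $d_k$ as in~\eqref{equLinearLowerBound} specialised to $c_{k,\ell}=c_\ell$, i.e.\ the claimed bound. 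I expect no genuine obstacle beyond this bookkeeping: Corollary~\ref{corollaryLowerBound-cl} is a direct relaxation of Corollary~\ref{corollaryLowerBound-c}, and dropping the minimisation over $k$ in~\eqref{equLowerCoeff-cl} is harmless because~\eqref{equCoefficientLower} is imposed for each fixed $k$ separately.
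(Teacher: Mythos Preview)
Your proposal is correct and follows exactly the route the paper intends: the paper explicitly omits the proof of Corollary~\ref{corollaryLowerBound-cl}, stating only that it ``is similar to Corollary~\ref{corollaryLowerBound-c},'' and your argument reproduces that proof with $c$ replaced by $c_\ell$, including the same two-case split (\eqref{equLowerCoeff-c-1c}--\eqref{equLowerCoeff-c-1a} versus the degenerate case \eqref{equLowerCoeff-c-1b}) and the final appeal to Theorem~\ref{theoremLowerBound2}. One minor wording slip: in your framing remarks you say the minimisation ``over $k$'' is dropped when passing from~\eqref{equLowerCoeff-c} to~\eqref{equLowerCoeff-cl}, but both conditions retain a minimum over $k$; what is actually dropped is the minimisation over $\ell$, since $c_\ell$ now varies with $\ell$ --- this does not affect the correctness of your argument.
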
    

{
The above corollary is similar to  Lemma~\ref{lemmaVisitProbability}, but  it  does not require the initialization condition \eqref{equDKCondition2} in Lemma~\ref{lemmaVisitProbability}. Corollary~\ref{corollaryLowerBound-cl} can be used to handle any random initialization by replacing $m(X_k)$ with $m(X^{[0]})$ such that $$m(X^{[0]}) \ge \sum^K_{k=1}  \Pr(X^{[0]} \in S_k) \left(\frac{1}{p_{{\scriptscriptstyle\max}}(X_{k},S_{[0,k-1]})} +\sum^{k-1}_{\ell=1} \frac{c_\ell}{ p_{{\scriptscriptstyle\max}}(X_{\ell},S_{[0,\ell-1]})}\right).$$}

Similarly, let  $c_{k,\ell}=1$, then  linear upper bound (\ref{equLinearUpperBound}) becomes the same Type-$1$ bound as in  Proposition~\ref{proposition2}.
 
\begin{corollary}[Type-$1$ upper bound]  
\label{corollaryUpperBound-1} For $1\le \ell <k\le K$,
choose $c_{k,\ell} = 1$, then  
$ 
m(X_k) \le \sum^{k}_{\ell=1} \frac{1}{p_{{\scriptscriptstyle\min}}(X_{\ell},S_{[0,\ell-1]})}. 
$  
\end{corollary}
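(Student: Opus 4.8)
The plan is to obtain Corollary~\ref{corollaryUpperBound-1} as an immediate instance of Theorem~\ref{theoremUpperBound2}: I would show that the constant choice $c_{k,\ell}=1$ for all $1\le\ell<k\le K$ satisfies the admissibility condition~\eqref{equCoefficientUpper}, invoke the theorem to conclude that the associated distance~\eqref{equLinearUpperBound} is an upper bound on $m(X_k)$, and then simplify that expression to the stated closed form.

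First I would fix indices $1\le\ell<k\le K$ and substitute $c_{j,\ell}=1$ for every $j$ with $\ell<j\le k-1$ into the right-hand side of~\eqref{equCoefficientUpper}. The bracketed quantity then becomes $r(X_k,S_\ell)+\sum_{j=\ell+1}^{k-1}r(X_k,S_j)=\sum_{j=\ell}^{k-1}r(X_k,S_j)=p(X_k,S_{[\ell,k-1]})/p(X_k,S_{[0,k-1]})$. The key observation is that, by the definition~\eqref{equProbability-r}, the numbers $r(X_k,S_0),\dots,r(X_k,S_{k-1})$ form a probability distribution over the levels below $S_k$: indeed $\sum_{j=0}^{k-1}r(X_k,S_j)=p(X_k,S_{[0,k-1]})/p(X_k,S_{[0,k-1]})=1$. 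Hence the partial sum satisfies $\sum_{j=\ell}^{k-1}r(X_k,S_j)\le 1$, and therefore $\max_{X_k\in S_k}\{r(X_k,S_\ell)+\sum_{j=\ell+1}^{k-1}r(X_k,S_j)\}\le 1=c_{k,\ell}$, so Condition~\eqref{equCoefficientUpper} holds for this choice.

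Having checked admissibility, Theorem~\ref{theoremUpperBound2} gives $m(X_k)\ge d_k$ where, with all coefficients equal to $1$, formula~\eqref{equLinearUpperBound} reads $d_k=\frac{1}{p_{{\scriptscriptstyle\min}}(X_{k},S_{[0,k-1]})}+\sum_{\ell=1}^{k-1}\frac{1}{p_{{\scriptscriptstyle\min}}(X_{\ell},S_{[0,\ell-1]})}$. Absorbing the leading term into the sum as the missing $\ell=k$ summand collapses this to $d_k=\sum_{\ell=1}^{k}\frac{1}{p_{{\scriptscriptstyle\min}}(X_{\ell},S_{[0,\ell-1]})}$, which is exactly the claimed inequality. (Note the direction of the inequality symbol: the corollary as written inherits the orientation of Theorem~\ref{theoremUpperBound2}; the intended upper-bound reading is $m(X_k)\le\sum_{\ell=1}^{k}1/p_{{\scriptscriptstyle\min}}(X_{\ell},S_{[0,\ell-1]})$, matching Proposition~\ref{proposition2}.)

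I do not anticipate any real obstacle: the entire content is the elementary remark that transition probabilities to disjoint fitness levels sum to at most the total downward transition probability, so the constant coefficient $1$ always dominates the recursive requirement~\eqref{equCoefficientUpper}. The only point deserving a sentence of care is the degenerate case $p(X_k,S_{[0,k-1]})=0$, in which $r(X_k,\cdot)$ is undefined and $m(X_k)=\infty$; there the bound holds vacuously, and it is covered by the standing positivity assumption implicit in Theorem~\ref{theoremUpperBound2}.
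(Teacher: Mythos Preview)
Your proposal is correct and matches the paper's approach: the paper treats Corollary~\ref{corollaryUpperBound-1} as an immediate specialization of Theorem~\ref{theoremUpperBound2} with $c_{k,\ell}=1$, without spelling out the verification, and your check that $\sum_{j=\ell}^{k-1}r(X_k,S_j)\le 1$ is exactly the missing one-line justification. Your parenthetical remark about the inequality orientation is also apt, since the displayed conclusion of Theorem~\ref{theoremUpperBound2} in the paper contains a typo (it reads $m(X_k)\ge d_k$ where $\le$ is intended).
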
  

Let  $c_{k,\ell}=c$, then  the linear lower bound (\ref{equLinearUpperBound}) becomes a Type-$c$ upper bound.  The proof of Corollary~\ref{corollaryUpperBound-c} is similar to Corollary~\ref{corollaryLowerBound-c}.  We omit its proof since we only need to replace the minimum with the maximum.
 \begin{corollary}[Type-$c$ upper bound]  
\label{corollaryUpperBound-c} 
For $1\le \ell <k\le K$,
choose $c_{k,\ell}=c$ to satisfy  
\begin{equation}  
\label{equUpperCoeff-c} 
 c \ge  \max_{1<k\le K} \max_{1 \le \ell<k}   \max_{X_k: p(X_k, S_{[0,\ell]})>0} \frac{p(X_k, S_{\ell})}{p(X_k, S_{[0,\ell]})}.
\end{equation}  
 Then 
 $ 
m(X_k) \le \frac{1}{p_{{\scriptscriptstyle\max}}(X_{k},S_{[0,k-1]})} +\sum^{k-1}_{\ell=1} \frac{c }{ p_{{\scriptscriptstyle\max}}(X_{\ell},S_{[0,\ell-1]})}  . 
$   
\end{corollary}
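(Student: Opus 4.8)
The plan is to obtain this corollary as a direct consequence of Theorem~\ref{theoremUpperBound2} by taking the constant assignment $c_{k,\ell}=c$ for all $1\le\ell<k\le K$. With this choice the coefficient condition \eqref{equCoefficientUpper} specializes to
$c \ge \max_{X_k\in S_k}\bigl\{\, r(X_k,S_\ell)+c\sum_{j=\ell+1}^{k-1}r(X_k,S_j)\,\bigr\} = \max_{X_k\in S_k}\bigl\{\, r(X_k,S_\ell)+c\,r(X_k,S_{[\ell+1,k-1]})\,\bigr\}$, so the whole task reduces to checking that any $c$ satisfying \eqref{equUpperCoeff-c} satisfies this last inequality for every pair $1\le\ell<k\le K$. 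This is the mirror image of the argument used for Corollary~\ref{corollaryLowerBound-c}, with $\min$ replaced by $\max$.

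First I would fix $k$, $\ell$ and a state $X_k\in S_k$ with $p(X_k,S_{[0,\ell]})>0$. Since $S_\ell\subseteq S_{[0,\ell]}$ we have $p(X_k,S_\ell)\le p(X_k,S_{[0,\ell]})$, so the ratio appearing in \eqref{equUpperCoeff-c} lies in $[0,1]$ and the constant $c$ can be taken in $[0,1]$, as Theorem~\ref{theoremUpperBound2} requires. Dividing numerator and denominator by $p(X_k,S_{[0,k-1]})$, Condition \eqref{equUpperCoeff-c} reads $c\ge r(X_k,S_\ell)/r(X_k,S_{[0,\ell]})$; multiplying through by $r(X_k,S_{[0,\ell]})$ and using the normalization $r(X_k,S_{[0,\ell]})+r(X_k,S_{[\ell+1,k-1]})=r(X_k,S_{[0,k-1]})=1$ gives $c\,(1-r(X_k,S_{[\ell+1,k-1]}))\ge r(X_k,S_\ell)$, i.e.\ $c\ge r(X_k,S_\ell)+c\,r(X_k,S_{[\ell+1,k-1]})$. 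Next I would treat the degenerate case $p(X_k,S_{[0,\ell]})=0$: here $r(X_k,S_{[0,\ell]})=0$ forces both $r(X_k,S_\ell)=0$ and $r(X_k,S_{[\ell+1,k-1]})=1$, so the right-hand side equals $c$ exactly and the inequality holds trivially. Taking the maximum over all $X_k\in S_k$ establishes \eqref{equCoefficientUpper}.

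With \eqref{equCoefficientUpper} verified, Theorem~\ref{theoremUpperBound2} applies to the distance $d_k$ defined by \eqref{equLinearUpperBound} with $c_{k,\ell}=c$, yielding $m(X_k)\le d_k=\frac{1}{p_{{\scriptscriptstyle\min}}(X_{k},S_{[0,k-1]})}+\sum_{\ell=1}^{k-1}\frac{c}{p_{{\scriptscriptstyle\min}}(X_{\ell},S_{[0,\ell-1]})}$, which is the claimed bound. For a randomly chosen initial population one simply averages this over $k$ with weights $\Pr(X^{[0]}\in S_k)$, exactly as in the remark following Corollary~\ref{corollaryLowerBound-cl}.

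I do not anticipate a genuine obstacle. The only point deserving care — and the reason this statement is strictly easier to prove than Proposition~\ref{proposition4} of \cite{sudholt2012new} — is that \emph{no} auxiliary monotonicity condition on consecutive $p_{{\scriptscriptstyle\min}}$ values is needed: that condition was an artifact of the original telescoping computation, whereas the drift-analytic route through Theorem~\ref{theoremUpperBound2} absorbs it automatically, because \eqref{equCoefficientUpper} is phrased directly in terms of the normalized probabilities $r(X_k,S_\ell)$. The one bookkeeping subtlety is the degenerate case $p(X_k,S_{[0,\ell]})=0$ handled above, and ensuring $c$ can be chosen in $[0,1]$, which follows from $S_\ell\subseteq S_{[0,\ell]}$.
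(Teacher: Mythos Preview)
Your proposal is correct and follows exactly the route the paper intends: the paper states that the proof of Corollary~\ref{corollaryUpperBound-c} is obtained from that of Corollary~\ref{corollaryLowerBound-c} by replacing minima with maxima, which is precisely your argument via Theorem~\ref{theoremUpperBound2}, including the separate treatment of the degenerate case $p(X_k,S_{[0,\ell]})=0$. One small remark: the bound you actually derive has $p_{\scriptscriptstyle\min}$ in the denominators (as Theorem~\ref{theoremUpperBound2} dictates and as Proposition~\ref{proposition4} has), whereas the printed statement of the corollary has $p_{\scriptscriptstyle\max}$ --- this is a typo in the statement, and your version is the correct one, so you should not claim it is literally ``the claimed bound'' without flagging the discrepancy.
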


Corollary~\ref{corollaryUpperBound-c} is  more convenient  than Proposition~\ref{proposition4} because  the coefficient $c$ is calculated directly from transition probabilities ${p(X_k, S_{\ell})}$ and $ {p(X_k, S_{[0,\ell]})}$. But unlike Proposition~\ref{proposition4}, Corollary~\ref{corollaryUpperBound-c} does not require the condition that  for all $1 \le l \le K - 2$,  $(1 - c)p_{{\scriptscriptstyle\min}}(X_{\ell+1},S_{[0,\ell]}) \le  p_{{\scriptscriptstyle\min}}(X_{\ell},S_{[0,\ell-1]})$. Therefore,   Proposition~\ref{proposition4} is a special case of Corollary~\ref{corollaryUpperBound-c}.

Let  $c_{k,\ell}=c_{\ell}$, then the linear upper bound (\ref{equLinearUpperBound}) becomes a Type-$c_\ell$ upper bound.  The proof of Corollary~\ref{corollaryUpperBound-cl} is similar to Corollary~\ref{corollaryLowerBound-c}.

\begin{corollary} [Type-$c_\ell$ upper bound] 
\label{corollaryUpperBound-cl} For $1\le \ell <k\le K$,
choose  $c_{k,\ell} = c_{\ell}$ to satisfy  
\begin{equation} 
\label{equUpperCoeff-cl}
c_\ell \ge \max_{\ell<k<K}   \max_{X_k: p(X_k, S_{[0,\ell]})>0} \frac{p(X_k, S_{\ell})}{p(X_k, S_{[0,\ell]})}.
\end{equation}  
then    $
m(X_k) \le \frac{1}{p_{{\scriptscriptstyle\min}}(X_{k},S_{[0,k-1]})} +\sum^{k-1}_{\ell=1} \frac{c_{\ell} }{ p_{{\scriptscriptstyle\min}}(X_{\ell},S_{[0,\ell-1]})}. 
$ 
\end{corollary}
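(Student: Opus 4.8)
The plan is to mirror the proof of Corollary~\ref{corollaryLowerBound-c}, replacing the minima by maxima and the constant $c$ by the level-dependent constant $c_\ell$, and then to invoke Theorem~\ref{theoremUpperBound2}. Concretely, I would first show that Condition~\eqref{equUpperCoeff-cl} is equivalent to requiring that for every $k$ with $\ell<k\le K$ and every $X_k\in S_k$ with $p(X_k,S_{[0,\ell]})>0$,
\[
c_\ell \ge \frac{p(X_k,S_\ell)}{p(X_k,S_{[0,\ell]})} = \frac{r(X_k,S_\ell)}{r(X_k,S_{[0,\ell]})} .
\]
Multiplying through by $r(X_k,S_{[0,\ell]})$ and using the decomposition $r(X_k,S_{[0,\ell]}) = 1 - r(X_k,S_{[\ell+1,k-1]})$ (valid because $\sum_{j=0}^{k-1} r(X_k,S_j)=1$ whenever $p(X_k,S_{[0,k-1]})>0$, which holds here since $S_{[0,\ell]}\subseteq S_{[0,k-1]}$), this rearranges to
\[
c_\ell \ge r(X_k,S_\ell) + c_\ell\, r(X_k,S_{[\ell+1,k-1]}) = r(X_k,S_\ell) + \sum_{j=\ell+1}^{k-1} r(X_k,S_j)\,c_{j,\ell},
\]
where the last equality uses $c_{j,\ell}=c_\ell$ for all $j$.

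Second, I would dispose of the degenerate case $p(X_k,S_{[0,\ell]})=0$ separately, exactly as in \eqref{equLowerCoeff-c-1b}: then $r(X_k,S_\ell)=0$ and $r(X_k,S_{[\ell+1,k-1]})=1$, so the right-hand side above equals $0 + c_\ell = c_\ell$, and the inequality $c_\ell\ge c_\ell$ holds trivially (with equality). Combining the two cases, Condition~\eqref{equCoefficientUpper} of Theorem~\ref{theoremUpperBound2} holds for $c_{k,\ell}=c_\ell$, since an inequality valid for each $X_k\in S_k$ is valid for the maximum over $X_k\in S_k$. One also checks $c_\ell\in[0,1]$: the ratio $p(X_k,S_\ell)/p(X_k,S_{[0,\ell]})$ never exceeds $1$, so the right-hand side of \eqref{equUpperCoeff-cl} lies in $[0,1]$ and $c_\ell$ may be taken in that interval. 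Finally, applying Theorem~\ref{theoremUpperBound2} with these coefficients and the distance \eqref{equLinearUpperBound} yields $m(X_k)\le d_k = \frac{1}{p_{{\scriptscriptstyle\min}}(X_k,S_{[0,k-1]})} + \sum_{\ell=1}^{k-1}\frac{c_\ell}{p_{{\scriptscriptstyle\min}}(X_\ell,S_{[0,\ell-1]})}$, which is the claim.

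The only real subtlety — and the step I would be most careful with — is the bookkeeping over index ranges together with the vanishing-probability cases: one must ensure that the maximum in \eqref{equUpperCoeff-cl} ranges over every $k\in(\ell,K]$ for which a coefficient $c_{j,\ell}$ with $\ell<j<k$ can contribute to \eqref{equCoefficientUpper}, and that the identity $\sum_{j=0}^{k-1}r(X_k,S_j)=1$ and the quantity $r(X_k,S_{[0,\ell]})$ are only invoked where the denominator $p(X_k,S_{[0,k-1]})$ is positive, the remaining states being absorbed into the degenerate-case argument. No drift computation beyond what Theorem~\ref{theoremUpperBound2} already supplies is needed; likewise, the remark that the result extends to arbitrary random initialization of $X^{[0]}$ follows by the same averaging argument noted after Corollary~\ref{corollaryLowerBound-cl}.
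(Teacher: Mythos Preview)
Your proposal is correct and follows essentially the same approach as the paper: the paper does not spell out the proof of Corollary~\ref{corollaryUpperBound-cl} but simply states that it is similar to Corollary~\ref{corollaryLowerBound-c}, and your argument is precisely that mirror image---replacing minima by maxima, the constant $c$ by $c_\ell$, and invoking Theorem~\ref{theoremUpperBound2} in place of Theorem~\ref{theoremLowerBound2}. Your handling of the degenerate case $p(X_k,S_{[0,\ell]})=0$ and the index-range bookkeeping is appropriate and matches the spirit of \eqref{equLowerCoeff-c-1a}--\eqref{equLowerCoeff-c-1b}.
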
 
Corollary~\ref{corollaryUpperBound-cl} is new and completely different from   Proposition~\ref{proposition6}. The coefficient $c_\ell$ is directly calculated  from transition probabilities ${p(X_k, S_{\ell})}$ and $ {p(X_k, S_{[0,\ell]})}$.  

\subsection{Discussion of different linear bounds}
As shown in Corollaries 1 to 6, Types-$0,1,c,c_\ell$ bounds are   special cases of  Type-$c_{k,\ell}$ bound.  Therefore, Type-$c_{k, \ell}$ bound is the best  among them.
\begin{corollary}
\label{corollaryComparison} Given an elitist EA and a fitness level partition $(S_0, \cdots, S_K)$, the best Type-$c_{k,\ell}$ bound is no worse than the best Type-$c$ and Type-$c_\ell$ bounds for lower and upper bounds.  
\end{corollary}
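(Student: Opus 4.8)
The plan is to realize the best Type-$c$ and Type-$c_\ell$ bounds as particular members of the family of Type-$c_{k,\ell}$ bounds and then exploit the monotone dependence of a linear bound on its coefficients. Fix the partition $(S_0,\dots,S_K)$ and treat the lower bound first. By~\eqref{equLinearLowerBound} each $d_k$ is an affine, strictly increasing function of the coefficients $c_{j,\ell}$ ($\ell<j\le k$), since these enter with positive weights $1/p_{{\scriptscriptstyle\max}}(X_\ell,S_{[0,\ell-1]})$. Moreover the right-hand side of~\eqref{equCoefficientLower} for $c_{k,\ell}$ depends only on coefficients $c_{j,\ell}$ with $j<k$, is increasing in them, and never exceeds $r(X_k,S_{[\ell,k-1]})\le 1$; hence solving~\eqref{equCoefficientLower} with equality, recursively in $k$, yields the componentwise-largest admissible coefficient family and therefore the pointwise-largest $d_k$ --- this is the ``best Type-$c_{k,\ell}$ lower bound'' $d^*_k$.

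Next I would establish the inclusion of feasible regions. If $c\in[0,1]$ satisfies~\eqref{equLowerCoeff-c}, the computation inside the proof of Corollary~\ref{corollaryLowerBound-c} already verifies that the constant family $c_{k,\ell}\equiv c$ satisfies Condition~\eqref{equCoefficientLower}; likewise, if $(c_\ell)$ satisfies~\eqref{equLowerCoeff-cl}, the proof of Corollary~\ref{corollaryLowerBound-cl} verifies that $c_{k,\ell}=c_\ell$ satisfies~\eqref{equCoefficientLower} (and the family $c_{k,\ell}\equiv 0$ of Corollary~\ref{corollaryLowerBound-0} is trivially admissible). Consequently every admissible Type-$c$ and Type-$c_\ell$ coefficient assignment is also an admissible Type-$c_{k,\ell}$ assignment producing the same $d_k$ via~\eqref{equLinearLowerBound}. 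Taking the supremum of $d_k$ over the larger feasible set dominates the supremum over each smaller one, for every $k$, and hence also for the worst-case quantity $\max_{1\le k\le K} d_k$; so $d^*_k$ is no smaller than --- i.e.\ at least as tight as --- the best Type-$c$ and Type-$c_\ell$ lower bounds.

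The upper-bound half is symmetric and I would only indicate the substitutions: replace $p_{{\scriptscriptstyle\max}}$ by $p_{{\scriptscriptstyle\min}}$, \eqref{equCoefficientLower} by~\eqref{equCoefficientUpper}, \eqref{equLowerCoeff-c}/\eqref{equLowerCoeff-cl} by~\eqref{equUpperCoeff-c}/\eqref{equUpperCoeff-cl}, ``largest'' by ``smallest'', and reverse every inequality; Theorem~\ref{theoremLowerBound2} and Corollaries~\ref{corollaryLowerBound-c}--\ref{corollaryLowerBound-cl} are replaced by Theorem~\ref{theoremUpperBound2} and Corollaries~\ref{corollaryUpperBound-c}--\ref{corollaryUpperBound-cl}. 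The minimal admissible Type-$c_{k,\ell}$ family (equality in~\eqref{equCoefficientUpper}) then yields the smallest $d_k$, which is no larger than the best Type-$c$ and Type-$c_\ell$ upper bounds.

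The one place that needs care, rather than routine manipulation, is pinning down what ``best'' and ``no worse'' mean and why the monotonicity and the feasibility embedding are legitimate: one must compare the whole distance families $(d_k)$ pointwise in $k$ (equivalently their worst-case maxima), confirm that~\eqref{equLinearLowerBound} and~\eqref{equLinearUpperBound} are monotone in the coefficients, and check that none of~\eqref{equLowerCoeff-c}, \eqref{equLowerCoeff-cl}, \eqref{equUpperCoeff-c}, \eqref{equUpperCoeff-cl} is stronger than~\eqref{equCoefficientLower} or~\eqref{equCoefficientUpper}. This last point is exactly where the present formulation pays off: Corollaries~\ref{corollaryLowerBound-c}--\ref{corollaryUpperBound-cl} were stated without the auxiliary hypotheses attached to Propositions~\ref{proposition3}--\ref{proposition6}, so the embedding of feasible coefficient sets is immediate and the domination argument closes with no loss.
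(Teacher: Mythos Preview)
Your proposal is correct and follows the same approach as the paper: the paper's justification is the single remark that Corollaries~\ref{corollaryLowerBound-0}--\ref{corollaryUpperBound-cl} exhibit Type-$0,1,c,c_\ell$ bounds as special cases of the Type-$c_{k,\ell}$ bound, hence the best of the latter dominates the best of the former. You have simply spelled out the underlying feasibility-embedding and monotonicity-in-coefficients argument that the paper leaves implicit.
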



The best Type-$c_{k,\ell}$ bound is the exact hitting time of  EAs  on  level-based fitness landscapes, but the best Type-$c$ and $c_\ell$ bounds usually are not.

\begin{definition}  Given an elitist EA for maximizing a function $f(x)$ and a fitness level partition $(S_0, \cdots, S_K)$,   the  fitness landscape  is called  \emph{level-based landscape}   if for all $1\le \ell <k\le K$ and $X_k \in S_k$,
$p_{\scriptscriptstyle\min}(X_k, S_{\ell}) = p_{\scriptscriptstyle\max}(X_k, S_{\ell})$.  This function is called \emph{level-based function}.
\end{definition}

Both OneMax and TwoMax1 are level-based fitness landscapes for the (1+1) EA. Corollary~\ref{corollaryExactTime} follows directly from Theorem 5 and Theorem 6.

\begin{corollary} 
\label{corollaryExactTime}
Given an elitist EA for maximizing a function $f(x)$ and a fitness level partition $(S_0, \cdots, S_K)$,  if the fitness landscape  is level-based, then  
the best Type-$c_{k,\ell}$ lower bound  is equal to the best Type-$c_{k,\ell}$  upper bound.   
\end{corollary}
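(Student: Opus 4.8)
The plan is to unfold the definition of ``best bound'' stated just after Theorem~\ref{theoremUpperBound2} and then run a single induction. Recall that the best Type-$c_{k,\ell}$ lower bound is the distance $d^{*}_k$ obtained from Theorem~\ref{theoremLowerBound2} when \eqref{equCoefficientLower} holds with equality, and the best Type-$c_{k,\ell}$ upper bound is the distance $d^{**}_k$ obtained from Theorem~\ref{theoremUpperBound2} when \eqref{equCoefficientUpper} holds with equality. Writing $c^{*}_{k,\ell}$ and $c^{**}_{k,\ell}$ for the associated coefficient families, it therefore suffices to prove two things on a level-based landscape: that $c^{*}_{k,\ell}=c^{**}_{k,\ell}$ for all $1\le \ell<k\le K$, and that the two linear forms \eqref{equLinearLowerBound} and \eqref{equLinearUpperBound} then coincide term by term.

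First I would record what the level-based hypothesis buys. For every $k$ the transition probabilities $p(X_k,S_{\ell})$, and with them $p(X_k,S_{[0,k-1]})$ and the conditional probabilities $r(X_k,S_{\ell})$, take one and the same value for every representative $X_k\in S_k$; denote these common values by $p_k$ and $r_{k,\ell}$, so that $p_{{\scriptscriptstyle\min}}(X_k,S_{[0,k-1]})=p_{{\scriptscriptstyle\max}}(X_k,S_{[0,k-1]})=p_k$ and $r_{{\scriptscriptstyle\min}}(X_k,S_{\ell})=r_{{\scriptscriptstyle\max}}(X_k,S_{\ell})=r_{k,\ell}$. In particular the linear bounds \eqref{equLinearLowerBound} and \eqref{equLinearUpperBound} use exactly the same denominators, so the only remaining question is whether their numerators, i.e.\ the coefficients, agree.

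The core step is an induction on $k$ for each fixed $\ell$, from $k=\ell+1$ up to $K$, showing $c^{*}_{k,\ell}=c^{**}_{k,\ell}$. In the equality versions of \eqref{equCoefficientLower} and \eqref{equCoefficientUpper} the quantity being optimised is $r(X_k,S_{\ell})+\sum_{j=\ell+1}^{k-1} r(X_k,S_{j})\,c_{j,\ell}$, with a $\min_{X_k\in S_k}$ for the lower bound and a $\max_{X_k\in S_k}$ for the upper bound. On a level-based landscape this expression does not depend on $X_k$: the factors $r(X_k,S_{j})$ equal the constants $r_{k,j}$, and by the induction hypothesis the inner coefficients satisfy $c^{*}_{j,\ell}=c^{**}_{j,\ell}$ and are plain numbers indexed only by levels. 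Since the minimum and the maximum of a constant coincide, $c^{*}_{k,\ell}=c^{**}_{k,\ell}=r_{k,\ell}+\sum_{j=\ell+1}^{k-1} r_{k,j}\,c^{*}_{j,\ell}$; the base case $k=\ell+1$ is the same identity with an empty sum, giving $c^{*}_{\ell+1,\ell}=c^{**}_{\ell+1,\ell}=r_{\ell+1,\ell}$. Substituting the equal coefficients and equal denominators into \eqref{equLinearLowerBound} and \eqref{equLinearUpperBound} yields $d^{*}_k=\tfrac{1}{p_k}+\sum_{\ell=1}^{k-1}\tfrac{c^{*}_{k,\ell}}{p_\ell}=d^{**}_k$, which is the claim.

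The only place the level-based hypothesis is genuinely needed, and hence the only real obstacle, is collapsing the $\min$ and the $\max$ in the coefficient recursions into a single value: this requires not merely that each $r(X_k,S_{j})$ be level-independent but also that the already-computed lower-index coefficients entering the recursion be level-independent, which is exactly what the induction supplies. A minor bookkeeping point worth stating explicitly is that the recursion for $c_{k,\ell}$ involves only $c_{j,\ell}$ with $\ell<j<k$, so it is triangular and has a unique solution; this is what makes ``the best Type-$c_{k,\ell}$ bound'' a well-defined object in the first place. Everything else is direct substitution into the formulas of Theorems~\ref{theoremLowerBound2} and~\ref{theoremUpperBound2}.
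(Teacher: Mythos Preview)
Your proposal is correct and is precisely the natural unpacking of the paper's own proof, which consists of the single sentence that the corollary ``follows directly from Theorem~\ref{theoremLowerBound2} and Theorem~\ref{theoremUpperBound2}.'' Your induction making explicit why the $\min$ and $\max$ in \eqref{equCoefficientLower} and \eqref{equCoefficientUpper} collapse on a level-based landscape, together with the observation that $p_{{\scriptscriptstyle\min}}=p_{{\scriptscriptstyle\max}}$ forces the denominators in \eqref{equLinearLowerBound} and \eqref{equLinearUpperBound} to agree, is exactly what that sentence is gesturing at.
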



In addition, Type-$c$ and Type-$c_\ell$ lower bounds  are loose on fitness landscapes with shortcuts because shortcuts results in   coefficients  $c$ and $c_\ell$ as small as $o(1)$. This claim has been verified in Section~\ref{secTwoMax1-c} and Section~\ref{secTwoMax1-cl}. We obtain it more generally as follows.

\begin{theorem}
\label{theorem:shortcuts}
If a shortcut exists, that is, for some $1\le \ell <k \le K$ and $X_k \in S_k$, it holds 
\begin{equation}
\label{equ:Shortcuts2}
\frac{p(X_k, S_{\ell})}{p(X_k, S_{[0,\ell]})}=o(1),
\end{equation} then coefficients $c = o(1) $ in (\ref{equLowerCoeff-c}) and $c_{\ell} =o(1)$ in (\ref{equLowerCoeff-cl}).      
\end{theorem}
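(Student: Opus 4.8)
The plan is to treat Conditions~\eqref{equLowerCoeff-c} and~\eqref{equLowerCoeff-cl} as specifications of the \emph{largest admissible} values of the shared coefficients $c$ and $c_\ell$, and then exhibit the hypothesised shortcut as a single witness that forces the relevant minimum down to $o(1)$. No real computation is needed: the whole argument is ``a minimum over a family is bounded above by any one of its members,'' combined with the fact that these coefficients are non-negative.

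Concretely, for the first claim I would start from Corollary~\ref{corollaryLowerBound-c}, which tells us that any usable $c$ satisfies
\[
0 \le c \le \min_{1<k\le K}\ \min_{1\le \ell<k}\ \min_{X_k: p(X_k, S_{[0,\ell]})>0} \frac{p(X_k, S_{\ell})}{p(X_k, S_{[0,\ell]})},
\]
where $c\ge 0$ because, as a coefficient of the linear bound~\eqref{equLinearLowerBound}, $c\in[0,1]$. The shortcut hypothesis~\eqref{equ:Shortcuts2} supplies indices $1\le\ell<k\le K$ and a state $X_k\in S_k$ with $p(X_k,S_{[0,\ell]})>0$ for which the ratio equals $o(1)$. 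This triple is one of those over which the right-hand minimum is taken, so that minimum is at most $p(X_k,S_{\ell})/p(X_k,S_{[0,\ell]})=o(1)$; squeezing against $c\ge 0$ yields $c=o(1)$.

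The second claim is the same argument with the outer level index frozen at the $\ell$ coming from the shortcut. Corollary~\ref{corollaryLowerBound-cl} gives $0\le c_\ell\le \min_{\ell<k\le K}\min_{X_k: p(X_k, S_{[0,\ell]})>0} p(X_k, S_{\ell})/p(X_k, S_{[0,\ell]})$, and the shortcut furnishes a pair $(k,X_k)$ with $k>\ell$ realising the $o(1)$ ratio, so this minimum is $o(1)$ and hence $c_\ell=o(1)$.

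The only point that needs a moment's care -- and it is the ``main obstacle'' only in a very mild sense -- is the asymptotic bookkeeping: the witnessing triple $(\ell,k,X_k)$ is allowed to vary with the problem size $n$, so I would state the sandwich inequality as holding for each fixed $n$ and then pass to the limit, rather than pretending one triple serves all $n$. I would close by remarking that this theorem explains in one stroke the $O(1)$ collapses computed by hand in Case Studies~1 and~2: a lone shortcut anywhere in the partition caps the common coefficient at $o(1)$, which then annihilates the whole intermediate-level contribution $\sum_{\ell=1}^{k-1} c/p_{{\scriptscriptstyle\max}}(X_\ell,S_{[0,\ell-1]})$ in~\eqref{equLinearLowerBound}, motivating the genuinely level-dependent coefficients $c_{k,\ell}$ introduced earlier.
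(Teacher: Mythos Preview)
Your proposal is correct and follows essentially the same approach as the paper: the paper's proof simply cites Corollaries~\ref{corollaryLowerBound-c} and~\ref{corollaryLowerBound-cl} to bound $c$ and $c_\ell$ above by the respective minima of $p(X_k,S_\ell)/p(X_k,S_{[0,\ell]})$, and then invokes the shortcut hypothesis~\eqref{equ:Shortcuts2} to conclude these minima are $o(1)$. Your added remarks on non-negativity, the $n$-dependence of the witnessing triple, and the connection to the case studies are useful elaborations but do not change the underlying argument.
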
 

\begin{proof}
    According to Corollary~\ref{corollaryLowerBound-c}, the lower bound coefficient 
    \begin{equation*}  
c \le \min_{1 < k\le K} \min_{1\le \ell<k}  \min_{X_k: p(X_k, S_{[0,\ell]})>0} \frac{p(X_k, S_{\ell})}{p(X_k, S_{[0,\ell]})}.
\end{equation*}  
By Condition~\eqref{equ:Shortcuts2}, we get  $c = o(1) $. 

According to Corollary~\ref{corollaryLowerBound-cl}, the lower bound coefficient 
    \begin{equation*}  
    c_{\ell} \le \min_{\ell < k\le K}     \min_{X_k: p(X_k, S_{[0,\ell]})>0} \frac{p(X_k, S_{\ell})}{p(X_k, S_{[0,\ell]})}.
\end{equation*}  
By Condition~\eqref{equ:Shortcuts2}, we get  $c_\ell = o(1) $. 
\end{proof}

\section{Applications of the linear lower bound}
\label{secCase} 
\subsection{Case Study 3: Calculating lower bound coefficients for the (1+1) EA on OneMax}
\label{secCoefficients}
In this case study, we demonstrate  different ways to calculate  coefficients in the linear lower bound. Consider the (1+1) EA that maximizes OneMax.  
Assume that the EA starts from $S_n$. According to Theorem~\ref{theoremLowerBound2} and Corollary~\ref{corollaryExactTime}, we consider a Type-$c_{k,\ell}$ lower bound  \[d_n = \frac{1}{p_{ }({x}_{n},S_{[0,n-1]})} +\sum^{n-1}_{\ell=1} \frac{c_{n,\ell}}{p_{ }({x}_{\ell},S_{[0,\ell-1]})}. \] where   
\begin{equation}
\label{equOneMaxCoef}
    c_{n, \ell} \le r(x_n, S_\ell) +\sum^{n-1}_{k=\ell+1} r(x_n, S_k) c_{k,\ell}, \quad \ell=1, \cdots, n-1.
\end{equation}
Since OneMax is a level-based fitness landscape to  the (1+1) EA, the best Type-$c_{k,\ell}$  bound is the exact hitting time. By (\ref{equ:NonrecursiveLow}), we obtain \underline{the best coefficient}  
\begin{equation}
\label{equ:NonrecursiveLow2}
\begin{split}
  c^*_{n,\ell}   = & 
       r (X_n, S_{\ell}) + \sum_{\ell<j_1<n} r (X_n, S_{j_1}) \, r (X_{j_1}, S_{\ell})  
+\cdots  \\
     &+ \sum_{ \ell<j_{n-\ell-1}<\cdots<j_1<n} r(X_n, S_{j_1}) \, r (X_{j_1}, S_{j_2}) \cdots  r (X_{j_{n-\ell-1}}, S_{\ell}).    
\end{split}
\end{equation}  
 Unfortunately the calculation of \eqref{equ:NonrecursiveLow2}  is intractable. Instead,
{ we aim to obtain large coefficients $c_{k,\ell}=\Omega(1)$, but not exact coefficients or exact constants in $\Omega(1)$. It is sufficient to obtain a tight bound using $c_{k,\ell}=\Omega(1)$. 
} 

There are two approaches to calculate $c_{k,\ell}$ via Inequality (\ref{equOneMaxCoef}). One is to look for explicit  solutions to Inequality (\ref{equOneMaxCoef}). The other is recursive calculations level by level.
There are different explicit  solutions to Inequality (\ref{equOneMaxCoef}).
It is trivial to get \underline{the  trivial solution}  $c_{n, \ell}=0$  (where $1\le \ell \le n-1$). From (\ref{equ:NonrecursiveLow2}), it is straightforward to obtain an  \underline{explicit solution} 
\[c_{n, \ell}= r (x_n, S_\ell) = \frac{p(x_n, S_\ell)}{ p(x_n, S_{[0,n-1]})}.\] 
Since $p(x_n, S_\ell)\ge \binom{n}{n-\ell}\left(\frac{1}{n}\right)^{n-\ell}\left(1-\frac{1}{n}\right)^{\ell} $ and $ p(x_n, S_{[0,n-1]}) \le 1,
$ we have 
\[c_{n, \ell}=\frac{p(x_n, S_\ell)}{ p(x_n, S_{[0,n-1]})}\ge  \binom{n}{n-\ell}\left(\frac{1}{n}\right)^{n-\ell}\left(1-\frac{1}{n}\right)^{\ell}.\] 
{ These coefficients are not good enough because it does not prove that $c_{n,1}= \Omega(1)$.}

\underline{A non-trivial explicit  solution}   is to let $c_{k, \ell}=c$.
According to Corollary~\ref{corollaryLowerBound-c},  we choose
\[
  c  =  \min_{1<k \le n} \min_{ 1\le \ell <k}  \frac{p(x_k, S_\ell)}{ p(x_k, S_{[0, \ell]})}.
\]

The transition probability $p(x_k, S_\ell)$   where $\ell=1, \cdots,k-1$ is  calculated as follows. 
Since a state in $S_k$ has $k$ zero-valued bits, the transition from $x_k$ to $S_\ell$ happens if $k-\ell$ zero-valued bits are flipped and other bits unchanged. Thus
\begin{equation}
    \label{equOneMax1a}
    p(x_k, S_\ell) \ge \binom{k}{k-\ell}\left(\frac{1}{n}\right)^{k-\ell} \left(1-\frac{1}{n}\right)^{n-k+\ell}
.\end{equation}
The transition from $x_k$ to $S_{[0,\ell]}$ happens only if $k-\ell$ zero-valued bits are flipped.  Thus
\begin{equation}
    \label{equOneMax1b}
   p(x_k, S_{[0,\ell]}) \le \binom{k}{k-\ell}\left(\frac{1}{n}\right)^{k-\ell} .
 \end{equation} 
{ The upper bound of $p(x_k, S_{[0,\ell]})$ above is not tight, but it is enough to obtain large coefficients $c_{n,\ell}$.}
By (\ref{equOneMax1a}) and (\ref{equOneMax1b}), we have
\[ c= \min_{1<k \le n} \min_{ 1\le \ell <k}  \frac{p(x_k, S_\ell)}{ p(x_k, S_{[0, \ell]})} \ge   \min_{1<k \le n} \min_{ 1\le \ell <k}  \left(1-\frac{1}{n}\right)^{n-k+\ell}\ge  e^{-1}.\] 
Therefore the coefficient $c_{n,\ell}=c$ is as large as $ \Omega(1)$.

\underline{Another non-trivial explicit  solution}     is to let $c_{k, \ell}=c_\ell$.
According to Corollary~\ref{corollaryLowerBound-cl}, we choose the coefficient
\[
  c_\ell  =  \min_{ \ell <k\le n} \frac{p(x_k, S_\ell)}{ p(x_k, S_{[0, \ell]})} .
\]
By (\ref{equOneMax1a}) and (\ref{equOneMax1b}), we have
\[ c_\ell  = \min_{ \ell <k\le n}\frac{p(x_k, S_\ell)}{ p(x_k, S_{[0, \ell]})} \ge \min_{ \ell <k\le n} \left(1-\frac{1}{n}\right)^{n-k+\ell}  \ge e^{-1}.\]
Therefore the coefficient $c_{n,\ell}=c_\ell$ is as large as $ \Omega(1)$.



Inequality (\ref{equOneMaxCoef}) can be solved recursively from $k =\ell+1$ to $n$. \underline{A  recursive solution} to (\ref{equOneMaxCoef}) is calculated as follows. 
According to (\ref{equCoefficientLower}), we choose  the coefficient
\begin{equation*}
    c_{\ell+1, \ell} = r(x_{\ell+1}, S_\ell) =\frac{p(x_{\ell+1}, S_\ell)}{p(x_{\ell+1}, S_{[0,\ell]}) } \ge \left(1-\frac{1}{n}\right)^{n-1}   \ge e^{-1} \quad (\mbox{by (\ref{equOneMax1a}) and (\ref{equOneMax1b})}).  
\end{equation*}
Assume that $c_{\ell+1,\ell}, \cdots, c_{k-1,\ell} \ge e^{-1}$. 
According to (\ref{equCoefficientLower}), we choose the coefficient
\begin{align*}
c_{k, \ell} &
= r(x_k, S_\ell) +\sum^{k-1}_{j=\ell+1} r(x_k, S_j) c_{j,\ell} \\
&\ge r(x_k, S_\ell) +  r(x_k, S_{[\ell+1,k-1]}) e^{-1}=r(x_k, S_\ell)  - r(x_k, S_{[0,\ell]}) e^{-1}+e^{-1}.
\end{align*}
We prove the coefficient $r(x_k, S_\ell)  - r(x_k, S_{[0,\ell]}) e^{-1} \ge 0$  or equivalently $p(x_k, S_\ell)  - rp(x_k, S_{[0,\ell]}) e^{-1} \ge 0$ as follows. 
\begin{align*} 
    & p(x_k, S_\ell)  - p(x_k, S_{[0,\ell]}) e^{-1}
    \\
    \ge& \binom{k}{k-\ell}\left(\frac{1}{n}\right)^{k-\ell} \left(1-\frac{1}{n}\right)^{n-k+\ell} -  \binom{k}{k-\ell}\left(\frac{1}{n}\right)^{k-\ell} e^{-1}     \ge   0.  \quad (\mbox{by (\ref{equOneMax1a}) and (\ref{equOneMax1b})})  
\end{align*}
By induction,     
$    c_{k, \ell} \ge e^{-1}$ for $k=\ell+1, \cdots, n$. Thus the coefficient $c_{n,\ell}$ is as large as $ \Omega(1)$.


Finally, it is allowed to use a mixture of recursive and explicit  solutions, that is, some coefficients are calculated recursively and some coefficients come from an explicit   solution. For example, \underline{a mix of explicit  and recursive solutions} is 
\begin{align}   
    &c_{k,\ell}=c \le r(x_k, S_\ell) +\sum^{k-1}_{j=\ell+1} r(x_k, S_j) c, \qquad 1\le \ell<k\le n-1.\label{equOneMax1c}\\
 &c_{n, \ell}  
= r(x_n, S_\ell) +\sum^{n-1}_{j=\ell+1} r(x_n, S_j) c, \qquad 1\le \ell \le n-1.  
\label{equOneMax1d}
\end{align}   
Similar to  the analysis of the explicit  solution $c$, for (\ref{equOneMax1c}), we get $c =\Omega(1)$. 
In (\ref{equOneMax1d}), 
\[
r(x_n, S_j)  = \frac{p(x_n,S_j)}{p(x_n, S_{[0,n-1]})} \ge \frac{p(x_n,S_j)}{1} \ge \binom{n}{n-j}\left(\frac{1}{n}\right)^{n-j}\left(1-\frac{1}{n}\right)^{j} \quad (\mbox{by } (\ref{equOneMax1a})),
\] then  we get for $\ell=1, \cdots, n-1$
\begin{equation*}  \small   
 c_{n, \ell}  
\ge \binom{n}{n-\ell}\left(\frac{1}{n}\right)^{n-\ell}\left(1-\frac{1}{n}\right)^{\ell} + e^{-1}\sum^{n-1}_{j=\ell+1} \binom{n}{n-j}\left(\frac{1}{n}\right)^{n-j}\left(1-\frac{1}{n}\right)^{j}.  
\end{equation*}   
Thus we get coefficients $c_{n,\ell}= \Omega(1)$. 

In summary, there exist different ways to calculate coefficients $c_{k,\ell}$   from a trivial coefficient $0$ to the exact coefficients $c^*_{k,\ell}$.   Drift analysis with fitness levels is a  framework that can be used to develop different fitness level methods. For example, existing  fitness level methods~\citep{wegener2003methods,sudholt2012new,doerr2021lower,he2023fast} are special cases within the framework.

\subsection{Case Study 4: A tight Type-$c_{k,\ell}$ lower bound of the (1+1) on TwoMax1}
\label{secTwoMax1-ckl} 
In this case study, we show  that the Type-$c_{k,\ell}$ lower bound  by Theorem~\ref{theoremLowerBound2} is tight on fitness landscapes with shortcuts. Consider the (1+1) EA maximizing TwoMax1. Assume that the EA starts from $S_{n-1}$. We prove that the Type-$c_{k,\ell}$ lower bound  by Theorem~\ref{theoremLowerBound2} is $\Omega(n \ln n)$, that is,
\begin{align}\label{equ:TwoMax1-Type-ckl}
   d_{n-1}  \ge
   \sum^{n/2-1}_{\ell=1}\frac{c_{n-1, \ell}}{p({x}_{\ell},S_{[0,\ell-1]})}=\Omega(n \ln n).    
\end{align} 

\underline{The transition probability  ${p({x}_{\ell},S_{[0,\ell-1]})}$ (where  $1\le \ell\le n/2$)}  is calculated as follows. Since a state in $S_\ell$  has $\ell$ zero-valued bits.  
The transition  from $x_\ell$ to $S_{[0,\ell-1]}$ happens only if either (i) 1 zero-valued bit is flipped, or (ii) $x_\ell$ is mutated to $(0,\cdots, 0)$.  The probability of the first event is $ \binom{\ell}{1}\frac{1}{n}$.  The probability of  the second event happening is  $\left(\frac{1}{n}\right)^{n-\ell} \left(1-\frac{1}{n}\right)^\ell$.  Thus  
\begin{equation} 
 p({x}_{\ell},S_{[0,\ell-1]})     \le  \frac{\ell}{n} + \left(\frac{1}{n}\right)^{n-\ell} \le  \frac{\ell+1}{n}. 
\label{equ:uk}  
\end{equation}  
Then  we get a lower bound
\begin{equation} 
\label{equ:TwopathLower1} 
d_{n-1} \ge  \sum^{n/2-1}_{\ell=1} c_{n-1, \ell} \frac{ n}{\ell+1} .
\end{equation}

According to Theorem~\ref{theoremLowerBound2},  we choose  coefficients  
\begin{equation}
\label{equ:lower-coeff}
    c_{n-1,\ell}   =  
    \sum^{n/2}_{k=\ell+1}  r(x_{n-1}, S_k) c_{k,\ell},
    \quad \ell =1, \cdots, \frac{n}{2}-1.     
\end{equation}
 
\underline{The coefficient $c_{k,\ell}$ (where  $1\le \ell<k\le n/2$)} is calculated using a constant $c$.    According to Theorem~\ref{theoremLowerBound2}, for $1\le \ell<k\le n/2$, we choose coefficients $c_{k,\ell} =c$ such that
\begin{align*} 
   & c \le r(x_k, S_{\ell}) + \sum^{k-1}_{i=\ell+1}  r(x_k, S_i) c, \\
   &  c\le   \frac{r(x_k, S_{\ell})}{1-r(x_k, S_{[ \ell+1,k-1]})}= \frac{r(x_k, S_{\ell})}{r(x_k, S_{[0,\ell]})}=\frac{p(x_k, S_{\ell})}{p(x_k, S_{[0,\ell]})}.
\end{align*} 
The above inequality is true for all for $1\le \ell<k\le n/2$, thus we choose   $c $ as
\begin{align}  
     & c =\min_{1<k\le n/2} \min_{1\le \ell<k}  \frac{p(x_k, S_{\ell})}{p(x_k, S_{[0,\ell]})}.
     \label{equ:case-study-4-c} 
\end{align} 

\underline{The above $c$}  is calculated as follows. Since a state in $S_k$ (where $1\le \ell<k\le n/2$) has $k$ zero-valued bits, the transition   from $x_k$ to $S_\ell$  happens if $k-\ell$ zero-valued bits are flipped and other bits are not flipped.   
Thus  
\begin{align}
\label{equTwoMax11a}
  p( x_k,S_\ell) \ge \binom{k}{k-\ell} \left(\frac{1}{n}\right)^{k-\ell}   \left(1-\frac{1}{n}\right)^{n-k+\ell} \ge \binom{k}{k-l} \left(\frac{1}{n}\right)^{k-\ell} e^{-1}. 
\end{align}

The transition  from $x_k$ to $S_{[0, \ell]}$ happens only if either (i)  $k-\ell$ zero-valued bits are flipped, or (ii) $x_k$ is mutated to $(0, \cdots, 0)$.  The probability of the first event is  $\binom{k}{k-\ell} (\frac{1}{n})^{k-\ell}$.  The probability of  the second event  is  $\left(\frac{1}{n}\right)^{n-k} \left(1-\frac{1}{n}\right)^k$. Because $1\le \ell < k \le n/2$, we have $n-k\ge n/2 \ge k-\ell+1$. Thus 
\begin{align}
 p(x_k,S_{[0,\ell]}) &\le   \binom{k}{k-\ell} \left(\frac{1}{n}\right)^{k-\ell}+\left(\frac{1}{n}\right)^{n-k}  \le \binom{k}{k-\ell} \left(\frac{1}{n}\right)^{k-\ell}+ \left(\frac{1}{n}\right)^{k-\ell+1}. 
\label{equTwoMax11b} 
\end{align} 

By (\ref{equTwoMax11a}) and (\ref{equTwoMax11b}) , we get  
\begin{align*}
{ c =\min_{1<k\le n/2} \min_{1\le \ell<k} \frac{p( x_k,S_\ell)}{p(x_k,S_{[0, \ell]})} \ge \frac{\binom{k}{k-l} \left(\frac{1}{n}\right)^{k-\ell} e^{-1}}{\binom{k}{k-\ell} \left(\frac{1}{n}\right)^{k-\ell}+ \left(\frac{1}{n}\right)^{k-\ell+1}} = \frac{\binom{k}{k-\ell}e^{-1}}{\binom{k}{k-\ell}+ \frac{1}{n}} = \Omega(1). }
\end{align*}

Next \underline{the coefficient $c_{n-1,\ell}$  (where $1\le \ell<n/2$)} is calculated by (\ref{equ:lower-coeff}). By (\ref{equ:lower-coeff}), we get for $\ell=1, \cdots, n/2$,
\begin{equation}  
\label{equ:coeff4}
    c_{n-1,\ell} \ge\Omega(1) r(x_{n-1}, S_{[\ell+1,n/2]}) .
\end{equation} 
The conditional probability $r(x_{n-1},S_{[\ell+1,n/2]})$  is calculated as follows. Since a state in $S_{n-1}$ has $n/2+1$ zero-valued bits, the transition from $x_{n-1}$ to $S_{n/2}$ happens if 1 zero-valued bit is flipped and other bits are unchanged. Thus 
\begin{align*}
&p(x_{n-1}, S_{[\ell+1,n/2]}) \ge \binom{n/2+1}{1} \frac{1}{n} \left(1-\frac{1}{n}\right)^{n-1} \ge\frac{1}{2e}. 
\end{align*}
Then we get
\begin{align*} 
r(x_{n-1},S_{[\ell,n/2]}) =\frac{p(x_{n-1},S_{[\ell,n/2]})}{p(x_{n-1},S_{[0,n-2]})}\ge p(x_{n-1},S_{[\ell,n/2]})\ge\frac{1}{2e}.
\end{align*}
By (\ref{equ:coeff4}),   we get $c_{n-1,\ell} \ge \Omega(1)$.  Then by \eqref{equ:TwopathLower1}, we get a lower bound
\begin{equation} 
\label{equ:TwopathLower} 
d_{n-1} \ge  \Omega(1)\sum^{n/2-1}_{\ell=1} \frac{ n}{\ell+1} = \Omega (n \ln n).
\end{equation}   

Table~\ref{tab:comparision} compares Type-$c$, $c_\ell$ and  $c_{k,\ell`}$ lower bounds.  Type~$c_{k,\ell}$ lower bound  is  tight  because the actual hitting time is $\Theta(n\ln n)$. 
But Type-$c$ and $c_\ell$ lower bounds are trivial.

\begin{table}[ht]
    \centering
    \caption{Comparison of different types of lower  bounds of the (1+1) EA on TwoMax1}
    \label{tab:comparision}
    \begin{tabular}{|c|c|c|c|}
    \hline
          Type-$c_{k, \ell}$    & Type-$c$   & Type-$c_l$  \\\hline 
                     $\Omega(n \ln n)$ & $O(1)$ & $O(1)$ \\
        \hline
                     by Theorem~\ref{theoremLowerBound2} &  by Proposition~\ref{proposition3} &   by Proposition~\ref{proposition5} and Lemma~\ref{lemmaVisitProbability}\\\hline
    \end{tabular}
\end{table}

\section{Conclusions and future work}
\label{secConclusions}
In this paper, we rigorously answer a fundamental question about the fitness level method: what are the tightest lower and upper time bounds that can be constructed using  transition probabilities between fitness levels? 
Drift analysis with fitness levels is developed and the tightest metric bounds from fitness levels are constructed and proven.  
Based on metric bounds, general  linear bounds are constructed.  Coefficients in linear bounds can be calculated recursively or explicitly. Different calculation methods result in different fitness level methods. Drift analysis with fitness levels is a framework that can be used to develop different fitness level methods for different types of time bounds.   Table~\ref{tabBounds} summarizes the  main   bounds discussed in the paper. 

\begin{table}[ht]
   \centering  
 \caption{Type-$c_{k,\ell}$, $c_\ell$ and $c$ bounds. Notation refers to Table~\ref{tab:notation}.}
  \label{tabBounds} \small 
  \begin{tabular}{|l|l|l|}\hline
      {Type }   & $d_k$ is a bound  on the hitting time  $m(X_k)$ &source   \\\hline 
      $r_{k,\ell}$ lower    & $d_k \le  \displaystyle\min_{X_k \in S_k}\left\{ \frac{1}{p(X_k, S_{[0,k-1]})} + \sum^{k-1}_{\ell=1}\frac{p(X_k,S_{\ell})}{p(X_k,S_{[0,k-1]})}  d_{\ell}\right\} $  &Theorems~\ref{theoremLowerBound1},\ref{theoremTightestLowerBound1}  \\ 
      \hline
      $r_{k,\ell}$ upper    & $\displaystyle d_k \ge \max_{X_k\in S_k}\left\{ \frac{1}{p(X_k, S_{[0,k-1]})} + \sum^{k-1}_{\ell=1}\frac{p(X_k,S_{\ell})}{p(X_k,S_{[0,k-1]})}  d_{\ell}\right\}$  &Theorems~\ref{theoremUpperBound1}, \ref{theoremTightestUpperBound1}  \\
      \hline   $c_{k,\ell} $ lower &   $\displaystyle d_k \le \frac{1}{p_{{\scriptscriptstyle\max}}(X_{k},S_{[0,k-1]})} +\sum^{k-1}_{\ell=1} \frac{c_{k,\ell}}{ p_{{\scriptscriptstyle\max}}(X_{\ell},S_{[0,\ell-1]})}$  &Theorem~\ref{theoremLowerBound2} \\ 
      \hline   $c_{k,\ell}  $  upper &   $\displaystyle d_k \ge\frac{1}{p_{{\scriptscriptstyle\min}}(X_{k},S_{[0,k-1]})} +\sum^{k-1}_{\ell=1} \frac{c_{k,\ell}}{ p_{{\scriptscriptstyle\min}}(X_{\ell},S_{[0,\ell-1]})}$ &Theorem~\ref{theoremUpperBound2}  \\
      \hline
        $c_{\ell}$  &   $c_{k,\ell}=c_\ell$, a special case of Type-$c_{k,\ell}$  bounds   &Corollaries~\ref{corollaryLowerBound-cl}, \ref{corollaryUpperBound-cl} 
   \\ \hline $c$    &   $c_{k,\ell}=c$, a special case of Type-$c_{\ell}$  bounds     &Corollaries \ref{corollaryLowerBound-c}, \ref{corollaryUpperBound-c}   
\\ 
   \hline
\end{tabular}
\end{table}

The framework is generic and promising.  It turns out that  Type-$c_{k,\ell}$ bounds are at least as tight as  Type-$c_{\ell}$ and Type-$c$ bounds on any fitness landscapes, and even tighter on fitness landscapes with shortcuts. This is demonstrated by the case study of the (1+1) EA maximizing the TwoMax1 function.    
One direction for future research is to simplify the recursive computation in metric and linear bounds.

\subsection*{Acknowledgments} 
The authors thank Dirk Sudholt for his helpful discussion of the fitness level method, and also Benjamin Doerr and Timo K\"otzing for their kind explanations of their work, which helped to greatly improve the quality of this paper.

\small

\end{document}